\theoremstyle{plain}
\newtheorem{theorem}{Theorem}[section]
\newtheorem{proposition}[theorem]{Proposition}
\theoremstyle{definition}
\theoremstyle{remark}
\title{\textsc{Flux}:\,Efficient Descriptor-Driven\,Clustered\,Federated Learning under Arbitrary Distribution Shifts}
\author{%
  Dario Fenoglio\thanks{Equal contribution.} \\
  Università della Svizzera italiana\\
  Lugano, Switzerland\\
  \texttt{dario.fenoglio@usi.ch} \\
  \And
  Mohan Li\footnotemark[1] \\
  Università della Svizzera italiana\\
  Lugano, Switzerland\\
  \texttt{mohan.li@usi.ch} \\
  \And
  Pietro Barbiero \\
  IBM Research\\
  Zurich, Switzerland\\
  \texttt{pietro.barbiero@ibm.com} \\
  \And
  Nicholas D. Lane \\
  University of Cambridge \\
  Cambridge, United Kingdom \\
  \texttt{ndl32@cam.ac.uk}
  \And
  Marc Langheinrich \\
  Università della Svizzera italiana\\
  Lugano, Switzerland\\
  \texttt{marc.langheinrich@usi.ch} \\
  \And
  Martin Gjoreski \\
  Università della Svizzera italiana\\
  Lugano, Switzerland\\
  \texttt{martin.gjoreski@usi.ch} \\
}
\begin{document}
\doparttoc
\faketableofcontents

\maketitle

\begin{abstract}
Federated Learning (FL) enables collaborative model training across multiple clients while preserving data privacy. Traditional FL methods often use a global model to fit all clients, assuming that clients' data are independent and identically distributed (IID). However, when this assumption does not hold, the global model accuracy may drop significantly, limiting FL applicability in real-world scenarios. To address this gap, we propose \textsc{Flux}, a novel clustering-based FL (CFL) framework that addresses the four most common types of distribution shifts during both training and test time. To this end, \textsc{Flux} leverages privacy-preserving client-side descriptor extraction and unsupervised clustering to ensure robust performance and scalability across varying levels and types of distribution shifts. Unlike existing CFL methods addressing non-IID client distribution shifts, \textsc{Flux} i) does not require any prior knowledge of the types of distribution shifts or the number of client clusters, and ii) supports test-time adaptation, enabling unseen and unlabeled clients to benefit from the most suitable cluster-specific models. Extensive experiments across four standard benchmarks, two real-world datasets and ten state-of-the-art baselines show that \textsc{Flux} improves performance and stability under diverse distribution shifts—achieving an average accuracy gain of up to 23 percentage points over the best-performing baselines—while maintaining computational and communication overhead comparable to FedAvg.  

\end{abstract}

\section{Introduction}
\label{Introduction} 
Federated Learning (FL) \cite{mcmahan_FL, bonawitzFederatedLearningScale2019, liFederatedLearningChallenges2020a, fenoglioFLPrivacyaware2023b} is a distributed and privacy-preserving Machine Learning (ML) paradigm that enables multiple isolated clients to collaboratively train models without sharing their private local data. 
Traditional FL methods often use a global model to fit all clients' data~\cite{mcmahan_FL, li2020federated, pmlr-v119-karimireddy20a}, assuming that clients' data are independent and identically distributed (IID).
However, this assumption rarely holds in practical FL scenarios, where clients often exhibit \textit{distribution shifts} due to holding non-IID data\footnote{We refer to these clients as \textit{non-IID clients} in the next sections.} \cite{kairouz2021advances, luFederatedLearningNonIID2024a}.
Such heterogeneity can significantly degrade the performance of the global model, limiting the effectiveness of FL in real-world applications \cite{fenoglioFBP2024d}.


To address this problem, both Clustered Federated Learning (CFL)~\cite{NEURIPS2020_24389bfe,guo2024fedrc} and Personalized Federated Learning (PFL)~\cite{pfedme,apfl} have been proposed. These approaches relax the global model constraint by adapting models to subsets of clients with homogeneous data distributions—via clustering in CFL or client-specific personalization in PFL. 
Despite their promise, existing methods address only specific forms of heterogeneity (e.g., feature or label shift), and often fail when multiple or unforeseen shifts occur~\cite{pfedme, apfl, sattler_2021_CFL, ghosh_2022_ifca, marfoq_2021_FedEM, jothimurugesan_2023_feddrift}. Many rely on impractical assumptions, such as prior knowledge of the optimal number of clusters~\cite{guo2024fedrc, ghosh_2022_ifca, marfoq_2021_FedEM, long2023multi}, and incur high communication and computational overhead~\cite{apfl, sattler_2021_CFL, jothimurugesan_2023_feddrift, long2023multi}. Crucially, most methods assume that test-time clients have already participated in training and contributed labeled data~\cite{guo2024fedrc, pfedme,apfl, sattler_2021_CFL, ghosh_2022_ifca, marfoq_2021_FedEM, jothimurugesan_2023_feddrift, long2023multi, jiangTheFed2022}. This assumption breaks down in realistic deployments, where unseen and unlabeled clients may appear post-training with unknown data distributions.
To overcome this limitation, Test-Time Adaptive FL (TTA-FL) methods have emerged to adapt pre-trained models to the distribution of test-time clients without requiring labeled data~\cite{jiangTheFed2022,atp}. However, they typically assume a single starting global model and rely on online adaptation procedures or extra interactions with the client, limiting their practicality and robustness in deployment.
\begin{figure*}
    \captionsetup{skip=4pt} 
  \centering
  \includegraphics[width=0.999\textwidth]{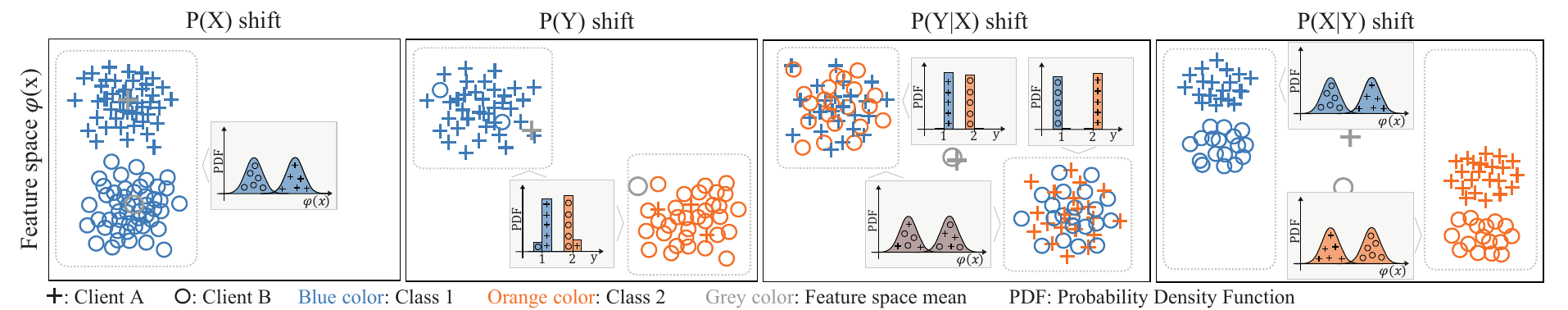}
    \caption{\textbf{Types of data distribution shifts.}
    \textbf{(a)} Feature distribution shift: two subsets differ in feature distributions, while label distributions are similar.
    \textbf{(b)} Label distribution shift: two subsets differ in label distributions, while feature distributions (for each class) are similar.
    \textbf{(c)} $P(Y|X)$ concept shift: two subsets share the same feature distributions but differ in label distributions.
    \textbf{(d)} $P(X|Y)$ concept shift: two subsets share the same label distributions but differ in feature distributions.}
  \label{fig:fig1}
  \vskip -0.1in   
\end{figure*}

To address these challenges, we introduce \textsc{Flux} (Federated Learning with Scalable Unsupervised Clustering and Test-Time Adaptation), a novel clustering-based approach that efficiently handles data heterogeneity in FL under minimal assumptions.  
\textbf{Our key contributions are:}
\begin{itemize}[noitemsep, leftmargin=*,topsep=0pt, parsep=0pt, partopsep=0pt]
    \item We propose \textsc{Flux} to address four common types of data distribution shifts in FL: feature shifts, label shifts, $P(Y|X)$-concept shifts, and $P(X|Y)$-concept shifts. Unlike most existing frameworks, \textsc{Flux} does not require prior knowledge of unseen data distributions or the number of clusters.

    \item We empirically demonstrate that \textsc{Flux} supports test-time adaptation by assigning previously unseen and unlabeled clients to the most suitable cluster models. \textsc{Flux} consistently outperforms 10 state-of-the-art (SOTA) baselines—including CFL, PFL, and TTA-FL—across four standard FL benchmarks and two real-world datasets. Evaluations span a broad range of scenarios, covering the four most common types of distribution shift, their combinations, and eight levels of heterogeneity severity (from none to extremely high).
    
    \item We provide both theoretical bounds and empirical evidence that \textsc{Flux} incurs minimal computational and communication overhead compared to baselines and verify its scalability, enabling deployment across a large number of clients while ensuring privacy guarantees.
\end{itemize}

To the best of our knowledge, \textsc{Flux} is the first scalable and general-purpose FL framework explicitly designed to address the four most common distribution shifts during both training and test time.

\section{Background}

\textbf{Traditional FL under IID assumption.} $\;$
Traditional FL systems~\cite{mcmahan_FL} consist of $K \in \mathbb{N}$ clients, denoted by 
$\mathcal{K}\!=\!\{1, 2, \dots, K\}$, coordinated by a central server to collaboratively train an ML model while preserving data privacy. Each client $k \in \mathcal{K}$ holds a private dataset \(\bigl(x^{(k)},\!y^{(k)}\bigr)\), a realization of the random variables \(\bigl(X^{(k)},\!Y^{(k)}\bigr)\) drawn from the data distribution $P(X^{(k)},\!Y^{(k)})$. Under IID assumptions, FL assumes IID data across clients,
i.e., $P(X^{(k)},\!Y^{(k)})\!=\!P(X, Y)$ for all $k$. Each client holds $s^{(k)} \in \mathbb{N}$ samples, with $x^{(k)}\!\in\!\mathbb{R}^{s^{(k)} \times z}$ denoting feature vectors and $y^{(k)}\in\{0, 1\}^{s^{(k)} \times u}$ the corresponding labels, where $z$ is the number of features and $u$ the number of classes. 
In each round, clients independently update local parameters $\theta^{(k)}\!\in\!\Theta^{(k)}\!\subseteq\mathbb{R}^p$ by minimizing a local loss on their private data, with $p$ the number of parameters. 
After local training, clients send their updated parameters to the server, which aggregates them using a permutation-invariant method, typically \textit{FedAvg}~\cite{mcmahan_FL}. 
The aggregated global model $\theta$ is broadcast back to clients to initialize the next round, and the process repeats until convergence to $\theta^*$, minimizing the overall risk across client distributions:
\begin{equation}
\theta^*
= \arg\max_{\theta}
\sum_{k=1}^{K}
\sum_{(x,y) \in (x^{(k)},y^{(k)})}
\log P\bigl(y \mid x; \theta\bigr).
\label{eq:fedavg_opt}
\end{equation}

\textbf{Distribution shifts.} $\;$
Real-world FL tasks involve clients with a combination of distribution shifts, driven by factors such as geographic and demographic variations, or even adversarial attacks. While distribution shift has been studied extensively in centralized settings \cite{luLearningConceptDrift2019,koh2021wilds, ddgda,driftsurf}, it remains a relatively new challenge in FL. The FL community has developed numerous approaches—such as FedNova \cite{wangTackling2020}, FedProx \cite{li2020federated}, and FedDyn \cite{acarFL2020}—to mitigate data heterogeneity, but these methods typically address only specific aspects of the non-IID challenge. This limitation is further exacerbated by the fact that the server has limited knowledge and control over client data distributions due to data isolation. Below, we outline four typical types of FL distribution shifts, as illustrated in Figure \ref{fig:fig1}:
\begin{itemize}[nosep,leftmargin=10pt, noitemsep, topsep=0pt, parsep=0pt, partopsep=0pt]
    \item \textit{Feature distribution shift}: marginal distributions $P(X)$ vary across clients.
    \item \textit{Label distribution shift}: marginal distributions $P(Y)$ vary across clients.
    \item \textit{Concept shift} (same features, different label): conditional distributions $P(Y|X)$ vary across clients.
    \item \textit{Concept shift} (same label, different features): conditional distributions $P(X|Y)$ vary across clients.
\end{itemize}
 
Unlike centralized training, FL clients, particularly in cross-device scenarios, often operate on devices with constrained computational resources. These limitations restrict both the computational capacity for training and the complexity of deployable models, reducing their ability to capture diverse and complex data distributions across clients \cite{sattler_2021_CFL, marfoq_2021_FedEM}. As a result, the trained models may lack sufficient expressiveness, adversely affecting both performance and generalizability \cite{sattler_2021_CFL}.

\textbf{Clustered Federated Learning.} $\;$
To address distribution shifts in federated environments, CFL extends traditional FL by partitioning the overall training data across clients into \(M \in \mathbb{N}\) distinct clusters. Each cluster is associated with a unique data distribution \(P(X_m, Y_m)\) such that all data within that cluster are drawn from the same distribution. Let \(\mathcal{C} = \{c^{(1)}, c^{(2)}, \dots, c^{(K)}\}\) denote the set of cluster-assignment vectors, 
where each \(c^{(k)} \in \mathbb{R}^M\) represents the degree of membership of client \(k\) to each of \(M\) clusters, and satisfies $\sum_{m=1}^M c_m^{(k)} = 1$. Based on the nature of cluster assignments, we define two CFL frameworks: \emph{soft-CFL} and \emph{hard-CFL} (see Appendix \ref{app:cfl_hard_soft} for more details). In \emph{soft-CFL}, the vectors of \(c^{(k)}\) are fractional values in \([0,1]\), reflecting probabilistic membership across clusters. In contrast, \emph{hard-CFL} requires that each \(c^{(k)}\) be a one-hot vector, i.e., $c^{(k)} \in \{0,1\}^M$, meaning that client \(k\) is assigned exclusively to a single cluster. Under hard-CFL, the client set \(\mathcal{K}\) is partitioned into disjoint subsets corresponding to the different clusters, and within each cluster \(m\), traditional FL is employed to optimize a dedicated model \(\theta_m^*\), ensuring that the conventional FL assumptions hold locally for the data drawn from \(P(X_m,Y_m)\). Formally, CFL seeks to jointly optimize the model parameters and the cluster assignments as follows:

\begin{equation}
\{\theta_m^*\}_{m=1}^M, \{ c^{(k)*} \}_{k=1}^K 
= \arg\max_{\{\theta_m\}_{m=1}^M,\, \{c^{(k)}\}_{k=1}^K}
\sum_{m=1}^M \sum_{k=1}^K  \sum_{(x,y) \in (x^{(k)},y^{(k)})}
 c_m^{(k)}\log P\bigl(y \mid x; \theta_m\bigr)
\label{eq:cfl_main}
\end{equation}

\section{CFL Challenges and Problem Definition}

\textbf{Limitations and challenges in current CFL approaches.} $\;$ As summarized in Table~\ref{table:table_sum}, existing CFL methods fail to satisfy several key FL requirements. Most lack a robust mechanism for assigning clusters at test time—especially for unlabeled clients—and are not evaluated across all four types of non-IID shifts. Moreover, methods such as \cite{ghosh_2022_ifca,guo2024fedrc} require prior knowledge of the number of distributions or clusters before initiating FL training (i.e., $M^*$), which is often impractical in real-world applications. Additionally, most approaches impose significant computational burdens, either on the server side \cite{sattler_2021_CFL} or the client side \cite{jothimurugesan_2023_feddrift}, and demand increased communication costs if multiple models are transmitted \cite{guo2024fedrc}. These limitations hinder scalability, particularly as the number of clients or potential clusters grows, a common scenario in real FL deployments. For this reason, we opted for a hard-CFL approach: unlike soft-CFL methods, which require maintaining multiple models per client and learning personalized weight vectors (hindering efficiency and generalization to unseen clients), hard clustering provides a more scalable and deployable solution. A more detailed comparison among our work and baselines is provided in Appendix \ref{app:ComparativeAnalysis}.

\textbf{Problem statement.} $\;$
We consider an FL setting with $K$ clients. Each client $k$ holds a private dataset $\bigl(x^{(k)}, y^{(k)}\bigr)$ drawn from one of $M$ unknown data distributions $\{P(X_m, Y_m)\}_{m=1}^M$, where $1 \leq M \leq K$. Our objective is to (i) determine the number of clusters $M$, (ii) identify the cluster assignment $\mathcal{C}$, 
so that all clients in cluster $m$ have (approximately) IID data from the same distribution $P(X_m, Y_m)$, and (iii) optimize the model $\theta_m$ for each cluster $m$. At test time, our goal is to assign any newly arriving, unlabeled client to the best-fitting cluster-specific model $\theta_m^*$.  Crucially, no prior knowledge of the underlying data distributions or the number of distribution shifts is required; both $M$ and $\mathcal{C}$ are learned from the clients’ data. The methodology must ensure that the solution is as scalable as traditional FedAvg and maintains the same privacy level.

\section{\textsc{Flux}}
\label{sec:flux} 
In this work, we propose \textsc{Flux}, a robust, computationally efficient, and scalable CFL framework that handles all four types of data distribution shifts without requiring prior knowledge of client data. This section introduces the theoretical foundations of \textsc{Flux}, using a probabilistic graphical model to represent the relationships among client distributions, descriptors, and cluster assignments, enabling decomposition into independently optimizable objectives (Section~\ref{sec:optimization_problem}). We then present the operational pipeline of \textsc{Flux}, designed for adaptability and real-world utility (Section~\ref{sec:flux_pipeline}).

\subsection{Probabilistic Modeling and Optimization Objectives} \label{sec:optimization_problem}
\begin{wrapfigure}{r}{0.468\textwidth}  
  \vspace{-10pt}                     
  \centering
  \includegraphics[width=0.35\textwidth]{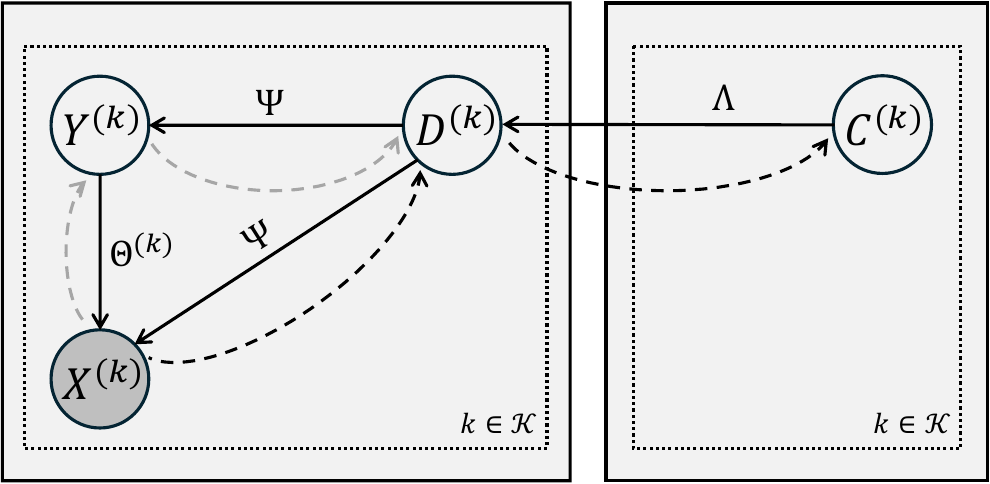}
  \vspace{-4pt}
  \caption{\textbf{\textsc{Flux}'s PGM.} \textit{Solid line}: data generating mechanism. \textit{Dashed line}: inference direction. \textit{Gray line}: present only during training.}  \label{fig:prob}
  \vspace{-14pt}
\end{wrapfigure}

\textsc{Flux} aims to predict $Y^{(k)}$ from $X^{(k)}$ using a cluster-specific model parameterized by $\theta_m$, where $\theta_m \in \Theta_m$ depends on the cluster $m$ to which client $k$ is assigned. During training, each client learns a local model $\theta^{(k)} \in \Theta^{(k)}$ based on its private dataset. From these data, it then constructs a compact descriptor $d^{(k)} \in D^{(k)}$ via a feature extractor $\phi$ parameterized by $\psi \in \Psi$, which is designed to capture the essential information of the client’s data distribution. 
After collecting the descriptors $\{d^{(k)}\}_{k=1}^K$ from all clients, the server applies an unsupervised clustering algorithm $\mathcal{U}$, parameterized by $\lambda \in \Lambda$, to obtain the cluster assignments $\{C^{(k)}\}_{k=1}^K$. These assignments allow the aggregation of local models into cluster-specific models $\theta_m$, each optimized to generalize well across clients in cluster $m$.

We formalize \textsc{Flux} as a probabilistic graphical model (PGM) capturing dependencies among variables. Each client $k$ contributes to the joint distribution via its data ($X^{(k)}, Y^{(k)}$), descriptor $D^{(k)}$, and cluster assignment $C^{(k)}$. Figure~\ref{fig:prob} illustrates the global structure, which factorizes per client as:
{\small
\begin{equation}
\label{eq:server_decomposition}
P(C^{(k)},\!D^{(k)},\!Y^{(k)},\!X^{(k)}; \Theta^{(k)},\!\Lambda,\!\Psi)\!=\!\underbrace{P\bigl(D^{(k)}\!\mid\!C^{(k)}; \Lambda\bigr)}_{\text{clustering}} 
 \underbrace{P\bigl(Y^{(k)}, X^{(k)}\!\mid\!D^{(k)}; \Psi\bigr)}_{\text{descriptor extractor}}
 \underbrace{P\bigl(Y^{(k)}\!\mid\!X^{(k)}; \Theta^{(k)}\bigr)}_{\text{local classifier}} \nonumber
\end{equation}
}
Each element in the decomposition represents a distinct component of \textsc{Flux}: 
\begin{itemize}[nosep,leftmargin=18pt, noitemsep, topsep=0pt, parsep=0pt, partopsep=0pt]
    \item $P(Y^{(k)}\!\mid\!X^{(k)};\!\Theta^{(k)})$ (\emph{local classifier}): Each client $k$ models its joint distribution, which factorizes as $P(Y^{(k)}\!\mid\!X^{(k)};\!\Theta^{(k)})P(X^{(k)})$. Since $P(X^{(k)})$ is independent of $\Theta^{(k)}$, each client locally learns the predictive model by minimizing the corresponding negative conditional log-likelihood.
    \item $P(Y^{(k)},\!X^{(k)}\!\mid\!D^{(k)};\!\Psi)$ (\emph{descriptor extractor}): A feature extractor $\phi$ with parameters $\Psi$ maps the high-dimensional data $(Y^{(k)},\!X^{(k)})$ to a compact descriptor $D^{(k)}$. At test time, only $X^{(k)}$ is available, i.e., $P(D^{(k)}\!\mid\!\mathbf{0},\!X^{(k)}; \Psi)$. Parameters $\Psi$ are learned on the client side (Section \ref{sec:descriptor_extractor}).
    
    \item $P(D^{(k)}\!\mid\!C^{(k)};\!\Lambda)$ (\emph{unsupervised clustering}): This term models the probabilistic assignment of clients to clusters based on their extracted descriptors. It is parameterized by an unsupervised clustering algorithm \(\mathcal{U}\) with parameters \(\Lambda\) (see Section~\ref{sec:unsupervised_clustering}).
\end{itemize}

After clustering, the server aggregates local models into cluster-specific parameters $\Theta_m$ by deterministically combining assignments $\{C^{(k)}\}$ with $\{\Theta^{(k)}\}$. 
This hierarchical decomposition disentangles the overall optimization into distinct sub-problems—clustering, descriptor extraction, and local classification—
each of which can be optimized independently as follows:
\begin{align}
\label{eq:flux_opt_problem}
\{\theta^{(k),*}\}_{k=1}^K,\!\psi^*,\!\lambda^*\!
=\!&
\arg\max_{\{\theta^{(k)}\}_{k=1}^K,\!\psi,\!\lambda}
\sum_{k=1}^K
\Bigl[
\log P\bigl(d^{(k)} \mid c^{(k)};\!\lambda\bigr) \nonumber
\\
&+ \sum_{(x,y)\!\in\!(x^{(k)},\!y^{(k)})}
\log P\bigl(y,\!x \!\mid\! d^{(k)};\!\psi\bigr)
+ \sum_{(x,y)\!\in\!(x^{(k)},\!y^{(k)})}
\log P\bigl(y \!\mid\! x;\!\theta^{(k)}\bigr)
\Bigr].
\end{align}

\subsection{\textsc{Flux} Pipeline} \label{sec:flux_pipeline}
Figure \ref{fig:fig2} provides an overview of the \textsc{Flux} pipeline, illustrating how its components translate theoretical models into actionable strategies. This subsection details the pipeline's mechanisms during both training and inference phases. During training (Section~\ref{sec:training_phase}), \textsc{Flux} securely extracts representative descriptors from client data which are then used to cluster clients with similar distributions in an unsupervised manner to accommodate data heterogeneity. During the inference phase (Section~\ref{sec:inference_phase}), \textsc{Flux} assigns trained models to unseen clients by matching their data distributions to the closest clusters without requiring labeled data, ensuring adaptability in real-world scenarios.
\begin{figure*}[htbp]
  \centering
  \includegraphics[width=0.99\textwidth]{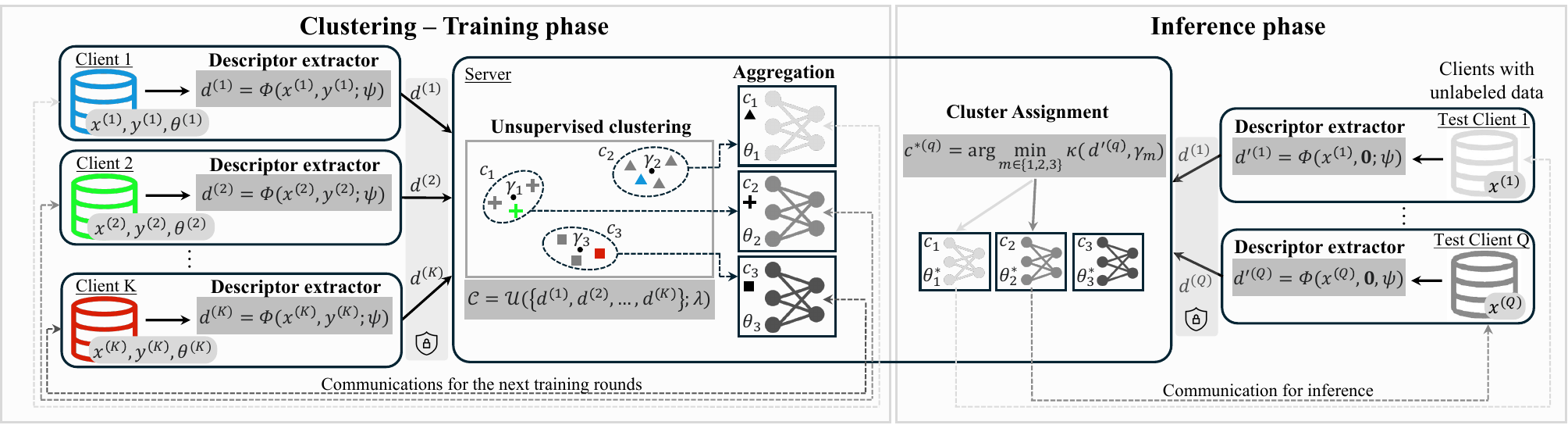}
  \vspace{-5pt} 
  \caption{\textbf{Overview of the \textsc{Flux} framework for efficient unsupervised CFL.} \textsc{Flux} operates without prior knowledge of client data, handling distribution shifts and optimizing cluster-specific model assignment to unseen, unlabeled clients at inference.}
  \label{fig:fig2}
  \vskip -0.1in   
\end{figure*}

\subsubsection{Training Phase} \label{sec:training_phase}
The training phase is designed to address all four types of data distribution shifts—feature distribution shifts, label distribution shifts, $P(Y|X)$-concept shifts, and $P(X|Y)$-concept shifts. To identify these shifts, 
we extract representative descriptors that capture the key statistical characteristics of each client's labeled data, thus approximating the relevant data distributions for clustering.


\textbf{Descriptor extractor.} $\;$
\label{sec:descriptor_extractor}
Consider a set of $M$ 
data distributions $\{P(X_m, Y_m)\}_{m=1}^M$ 
, where each client holds data sampled uniformly from one of the $M$ distributions $P(X_m, Y_m)$. We define the descriptor extractor: \(\phi_{\psi} : \mathbb{R}^{s^{(k)} \times (z+u)} \to \mathbb{R}^{L} \quad (L \ll  s^{(k)} \times (z+u))\), parametrized by \(\psi\), that maps a local dataset \((x^{(k)}, y^{(k)}) \sim P(X_m, Y_m)\) to a descriptor \(d^{(k)} := \phi(x^{(k)}, y^{(k)};\psi)\).
For any two client pairs $k_1$ and $k_2$, the extractor must satisfy the requirements below:
\begin{enumerate}[label=(R\arabic*), leftmargin=*, nosep, itemsep=2pt, topsep=0pt, parsep=0pt, partopsep=0pt] 
    \item \textit{Distribution fidelity.} \label{req:R1} The function $\phi$ is designed to approximate a reference distance \(D\) (e.g., Jensen–Shannon or Wasserstein distance) between the corresponding distributions:
    \begin{equation}
    \setlength{\abovedisplayskip}{5pt} 
    \setlength{\belowdisplayskip}{5pt} 
    \bigl|\,\| d^{(k_{1})}-d^{(k_{2})}\|_{2}-D\bigl(P(x^{(k_{1})},y^{(k_{1})}),P(x^{(k_{2})}, y^{(k_{2})})\bigr)\,\bigr| \le \xi \label{eq:R1}
    \end{equation}
    In other words, the extractor \(\phi\) maps similar joint distributions to nearby descriptors and dissimilar ones to distant descriptors. 

    \item \textit{Label agnosticism.} \label{req:R2} A sub-vector \(d'^{(k)} \subseteq d^{(k)}\) must be computable without any labels:
          \begin{equation}
          \setlength{\abovedisplayskip}{5pt} 
          \setlength{\belowdisplayskip}{5pt} 
          d'^{(k)}:= \phi\bigl(x^{(k)},\mathbf{0};\psi\bigr) \in \mathbb{R}^p, \quad p \leq L
          \end{equation}
          enabling descriptor extraction and similarity matching at test time. The sub-vector $d'^{(k)}$ encodes the marginal characteristics of the input distribution based solely on the features $x^{(k)}$.

    \item \textit{Compactness.} \label{req:R3}  
          The function $\phi$ introduces minimal overhead relative to vanilla FL: its computational cost is negligible compared to a single local training epoch, and the descriptor dimensionality \(L\) satisfies \(L \ll p\) (typically \(d/p \le 10^{-2}\)), ensuring that the additional communication cost remains marginal relative to the model update size $p$.
\end{enumerate}


\textit{Practical implementation.} 
To capture all four shift types, we factorize the joint distribution as \(P(X,Y)=P(Y\mid X)P(X)\) and design \(\phi\) to encode the marginal \(P(X)\) and conditional \(P(Y\mid X)\) separately. Appendix~\ref{app:identification_shifts} details how this design enables detection of feature, label, and both concept-shift variants, together with the step-by-step implementation. Concretely, each client’s data \(x^{(k)}\) is first mapped to a latent space by the shared encoder \(f_e:\mathbb{R}^z\!\to\!\mathbb{R}^v\) with parameters \(\theta\), then compressed by a client-invariant reduction \(\xi_{v\to l}:\mathbb{R}^v\!\to\!\mathbb{R}^l\) (\(v\!\gg\!l\)), parametrized by \(\psi\) (e.g., a shared PCA fitted on synthetic reference points, with $l=10$):
\begin{equation}
\setlength{\abovedisplayskip}{5pt} 
\setlength{\belowdisplayskip}{5pt} 
z^{(k)} = \xi_{v\to l}\bigl(f_e(x^{(k)};\theta);\psi\bigr).
\label{eq:compact_representation}
\end{equation}
To capture the marginal distribution $P(X)$, we compute the first two moments of the reduced latents—mean $\mu_x^{(k)}$ and covariance $\Sigma_x^{(k)}$. To capture the conditional distribution \(P(Y\mid X)\), we compute the class-conditional moments \(\{\mu_u^{(k)},\Sigma_u^{(k)}\}_{u=1}^U\). The full descriptor \(d^{(k)} \in \mathbb{R}^{2(U+1)l}\) is obtained by concatenating all marginal and conditional moments: 
\begin{equation}
d^{(k)} = \bigl[\mu_x^{(k)},\,\Sigma_x^{(k)},\,\mu_1^{(k)},\,\Sigma_1^{(k)},\dots,\mu_U^{(k)},\,\Sigma_U^{(k)}\bigr].
\label{eq:client_descriptor}
\end{equation}
This instantiation meets every requirement: \ref{req:R1} the mapping is provably Lipschitz-equivalent to the 2-Wasserstein metric, with \(\xi\!<\!1.1\) on MNIST and (and $\!<\!0.54$ under Jensen–Shannon); \ref{req:R2} the label-agnostic sub-vector \(d'^{(k)}\) effectively captures \(P(X)\); \ref{req:R3} it adds negligible compute and a communication ratio \(L/p\!\le\!3.5\!\times\!10^{-3}\).  
Theoretical justifications, motivation, and further details appear in Appendices~\ref{app:descriptor_extractor}, \ref{app:advantages_of_latent}, and \ref{app:clustering_trigger}. Additionally, differential privacy~\cite{dwork2006differential} can be seamlessly plugged into \(d^{(k)}\) without affecting \textsc{Flux} accuracy (see Appendix~\ref{app:privacy_implication}). An ablation study and pseudocode are provided in Appendix~\ref{app:ablation} and Algorithm~\ref{alg:clustered_fl_training}.

\textbf{Unsupervised clustering.} $\;$
\label{sec:unsupervised_clustering}
In real-world FL settings, prior knowledge of client data distributions is rarely available. To address this, we propose using an unsupervised clustering method that determines the proper number of clusters $M$ and assigns each client to its corresponding cluster based on their descriptors. Formally, given the set of client descriptors $\{d^{(k)}\}_{k=1}^K$ and a clustering algorithm $\mathcal{U}$ parametrized by $\lambda$, the cluster assignments $\mathcal{C}$ are computed as:
\begin{equation}
\mathcal{C} = \mathcal{U}({d^{(1)}, d^{(2)}, \dots, d^{(K)}};\lambda)
\label{eq:unsupervised_clustering}
\end{equation}
In our implementation, we design an adaptive density-based clustering method that automatically determines the number of clusters present. Specifically, we extend DBSCAN by estimating the $\epsilon$ parameter through elbow detection on the sorted second–nearest neighbour distance curve, calibrating it with a dataset-specific scaling factor, and reassigning noise points as singleton clusters to ensure every client is represented (see Appendix~\ref{app:uc} for details). Nonetheless, our formulation is agnostic to the choice of clustering method, provided it is unsupervised: in Appendix~\ref{app:clustering_choice}, we report ablation studies with alternative clustering strategies, showing that our descriptors consistently enable robust client grouping regardless of the clustering algorithm, and in Appendix \ref{app:clustering_mechanism}, we provide illustrative examples of how the clustering process operates in practice.


\subsubsection{Inference Phase} \label{sec:inference_phase}
During inference, newly joining test clients require access to trained models. As these clients lack labeled data, we propose a mechanism that assigns test clients to the most suitable cluster-specific model based solely on their feature distributions. This approach optimizes inference process by leveraging the clustering structure learned during training.

\textbf{Test-time cluster assignment.} $\;$ \label{sec:unsupervised_assignment}
For each test client $q$, only feature-based descriptors can be extracted, as label information is unavailable. 
By fulfilling \ref{req:R2}, the descriptor extractor $\phi$ directly yields the label-agnostic descriptor as $d^{\prime(q)} = \phi(x^{(q)}, \mathbf{0};\psi)$, which encodes the marginal distribution. Consequently, with information solely from the feature space $P(X)$, $P(Y|X)$-concept shifts cannot be solved during test time, as $P(X)$ across clients is identical (same input features, different labels). 

Precisely, in our implementation, $d^{\prime(q)}\!=\![\mu_x^{(q)},\Sigma_x^{(q)}]\!\in\!\mathbb{R}^{2l}$. 
Each cluster \(m\) identified during training is associated with a centroid \(\gamma_{m}\), computed as the mean of the sub-vectors \(d'^{(k)}\) of all clients \(k\) in that cluster.
To assign a test client $q$ to a cluster, we then compare its descriptor $d^{\prime(q)}$ with the cluster centroids $\gamma_{m}$ using a similarity metric $\kappa$ that measures proximity 
(e.g., Euclidean distance). The client is assigned to the cluster with the closest centroid according to the chosen metric:
\begin{equation}
c^{*(q)} = \arg\min_{m \in \{1, \dots, M\}} \kappa( d^{\prime(q)} - \gamma_m )
\label{eq:cluster_assignment}
\end{equation}
This assignment ensures that the client $q$ uses the cluster-specific model $\theta_i^*$ that is most representative of its feature distribution. We provide the pseudo-code for our implementation in Algorithm \ref{alg:clustered_fl_inference}.

\section{Results}
\label{Results} 
This section presents the experimental setup and results from two primary scaling experiments, analyzing \textsc{Flux} performance under varying non-IID levels (Section \ref{sec:scaling_iid_results}) and an increasing number of clients (Section \ref{sec:scaling_clients_results}). These experiments evaluate and compare the scalability, robustness, and adaptability of \textsc{Flux} against SOTA baselines across diverse distribution shifts and client configurations.

\subsection{Experiment Settings} \label{sec:exp_setting}

\textbf{Non-IID dataset generation.} $\!$
We use six publicly available datasets in our experiments: MNIST~\cite{mnist}, Fashion-MNIST (FMNIST) \cite{fmnist}, CIFAR-10, CIFAR-100 \cite{cifar}; and two real-world datasets, CheXpert~\cite{chexpert} and Office-Home~\cite{office}. To simulate non-IID conditions in FL, we employ \textit{ANDA}, a publicly available toolkit that enables data operations such as class isolation and label swapping. See Appendix~\ref{app:anda} for details on the \textit{ANDA} and the datasets partitioning strategy.

\textbf{Baseline algorithms.} $\;$
We evaluate our approach against ten baselines listed in Table~\ref{table:table_sum}, including FedAvg~\cite{mcmahan_FL}, CFL~\cite{sattler_2021_CFL}, IFCA~\cite{ghosh_2022_ifca}, FeSEM~\cite{long2023multi}, FedEM~\cite{marfoq_2021_FedEM}, FedRC~\cite{guo2024fedrc}, FedDrift~\cite{jothimurugesan_2023_feddrift}, pFedMe~\cite{pfedme}, APFL~\cite{apfl}, and ATP~\cite{atp}. Full baseline descriptions are provided in Appendix~\ref{app:baselines}; experimental setup, hardware, models, and hyperparameter configurations are detailed in Appendices~\ref{app:hyper} and ~\ref{app:code_licenses_hardware}.

\subsection{Robustness Across Different Types and Levels of Data Non-IID Heterogeneity} \label{sec:scaling_iid_results}

\begin{wrapfigure}{r}{0.53\textwidth}  
  \vspace{-15pt}                     
      \captionsetup{skip=4pt} 
  \centering
  \includegraphics[width=0.53\textwidth,
    height=0.155\textheight]{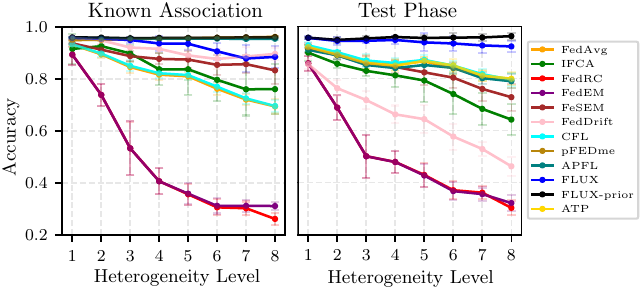}
  \caption{Mean accuracy and standard deviation across heterogeneity levels for MNIST.}
  \label{fig:hetero_mnist}
  \vspace{-10pt}
\end{wrapfigure}

\textbf{Superior performance of \textsc{Flux} across varying heterogeneity levels and datasets.} 
We benchmark \textsc{Flux} against SOTA baselines across four shift types—$P(X)$, $P(Y)$, $P(Y|X)$, and $P(X|Y)$—each instantiated at eight increasing non-IID levels. Full setup and results are detailed in Appendix~\ref{app:scaling_iid_results}. Figure~\ref{fig:hetero_mnist} reports the average accuracy across shift types on MNIST (results for FMNIST, CIFAR-10, and CIFAR-100 are provided in the Appendix~\ref{app:scaling_iid_results}), highlighting performance trends as heterogeneity severity increases. \textsc{Flux} consistently achieves robust performance across the heterogeneity spectrum, matching or exceeding PFL methods under the simplified \textit{known association} condition—where test-time cluster assignments are assumed to be known—while maintaining high accuracy and stability even in the more realistic \emph{test phase}, where such information is unavailable. Notably, \textsc{Flux} yields absolute accuracy gains of up to 12.4, 23.0, 7.0, and 3.0 percentage points (pp) over the best-performing baselines on MNIST, FMNIST, CIFAR-10, and CIFAR-100, respectively. Table~\ref{tab:dataset_comparison} summarizes the results for each dataset, averaging performance across all heterogeneity levels. In the test phase setting, \textsc{Flux} consistently outperforms all baselines, improving accuracy by up to 7.84, 11.86, 2.17, and 1.5 pp over the best-performing baselines—CFL on MNIST and FMNIST, IFCA on CIFAR-10, and FeSEM on CIFAR-100, respectively. Even under the simplified known association setting, \textsc{Flux} matches or surpasses the best methods on MNIST, FMNIST, and CIFAR-100. On CIFAR-10, \textsc{Flux} slightly trails PFL methods—expected, as they are fine-tuned on the same client distributions seen at test time—but their performance drops sharply on unseen clients. By contrast, ATP—an unsupervised test-time adaptation method—shows instability and underperforms compared to \textsc{Flux}, particularly on complex datasets like CIFAR-100, where adaptation often leads to overconfident but incorrect predictions. Furthermore, \textsc{Flux}-prior, which applies K-means clustering with the true number of clusters $M$ (oracle knowledge),
sets a new CFL benchmark by achieving the highest accuracy across datasets and evaluation conditions. It is worth noting that most CFL baselines rely on this cluster information (Table~\ref{table:table_sum}), significantly simplifying their clustering process. 
Appendix~\ref{app:mix_shift} further shows \textsc{Flux}'s robustness when multiple shift types occur simultaneously.

\textbf{Robust performance on real-world datasets.} $\;$
To assess real-world applicability, we evaluate \textsc{Flux} on the CheXpert dataset under three naturally occurring levels of non-IID heterogeneity (see Appendix~\ref{app:chex_result} for details), and the Office-Home dataset under four inherent domains. As CheXpert is a multi-label classification task, we report macro-averaged ROC AUC. Results summarized in Table~\ref{tab:dataset_comparison} show that \textsc{Flux} consistently outperforms all baselines in both the \textit{known association} (up to 8.5 pp over the best-performing baseline, APFL) and \textit{test phase} settings (up to 17.5 pp).
On the Office-Home dataset, \textsc{Flux} attains competitive performance, closely matching APFL in the \textit{known association} setting and outperforming all baselines in the \textit{test phase} by up to 1.3 pp. 
In contrast, most metric- and parameter-based CFL methods collapse to a single global model, failing to capture subtle real-world distribution shifts. \textsc{Flux} performance closely matches that of \textsc{Flux}-prior, underscoring the effectiveness of our learned descriptors and unsupervised clustering strategy—even without access to the true number of clusters.

\begin{table}[htbp]
\centering
\tiny
\renewcommand{\arraystretch}{1.2}
\setlength{\tabcolsep}{1.5pt}
\resizebox{\textwidth}{!}{%
\begin{tabular}{lcccccccccccc}
\toprule
\textbf{Dataset} 
& \multicolumn{2}{c}{\textbf{MNIST}} 
& \multicolumn{2}{c}{\textbf{FMNIST}} 
& \multicolumn{2}{c}{\textbf{CIFAR-10}} 
& \multicolumn{2}{c}{\textbf{CIFAR-100}} 
& \multicolumn{2}{c}{\textbf{CheXpert}} 
& \multicolumn{2}{c}{\textbf{Office-Home}} \\
\cmidrule(lr){2-3} 
\cmidrule(lr){4-5} 
\cmidrule(lr){6-7} 
\cmidrule(lr){8-9} 
\cmidrule(lr){10-11} 
\cmidrule(lr){12-13}
\textbf{Algorithm} 
& \textbf{Known A.} & \textbf{Test Phase} 
& \textbf{Known A.} & \textbf{Test Phase} 
& \textbf{Known A.} & \textbf{Test Phase} 
& \textbf{Known A.} & \textbf{Test Phase} 
& \textbf{Known A.} & \textbf{Test Phase} 
& \textbf{Known A.} & \textbf{Test Phase} \\
\midrule
FedAvg      & 80.9 ± 2.0 & 85.6 ± 2.1 & 65.5 ± 2.6 & 68.8 ± 2.7 & 30.9 ± 2.1 & 31.9 ± 2.3 & 33.8 ± 2.0 & 38.0 ± 2.3 & 56.1 ± 1.0 & 56.1 ± 1.0 & 37.1 ± 0.9 & 37.1 ± 0.9 \\
IFCA        & 84.1 ± 7.5 & 78.2 ± 5.4 & 72.4 ± 5.2 & 63.5 ± 4.9 & 38.3 ± 2.5 & 36.6 ± 2.3 & 36.1 ± 4.1 & 38.6 ± 2.9 & 58.5 ± 0.4 & 58.5 ± 0.4 & 32.8 ± 8.1 & 29.6 ± 7.5 \\
FedRC       & 47.5 ± 5.1 & 49.9 ± 4.5 & 54.3 ± 5.0 & 55.7 ± 5.3 & 17.3 ± 2.2 & 17.2 ± 2.4 & 33.8 ± 1.2 & 37.9 ± 1.1 & 58.8 ± 0.4 & 58.3 ± 0.4 & 22.2 ± 2.8 & 22.2 ± 2.8 \\
FedEM       & 48.3 ± 5.0 & 50.0 ± 4.5 & 55.7 ± 5.2 & 56.0 ± 5.2 & 18.0 ± 2.3 & 17.5 ± 2.4 & 34.8 ± 1.1 & 38.0 ± 1.1 & 58.5 ± 0.4 & 58.5 ± 0.4 & 28.4 ± 2.3 & 28.8 ± 1.8 \\
FeSEM       & 87.8 ± 3.1 & 82.8 ± 3.7 & 71.9 ± 3.7 & 66.2 ± 4.4 & 36.5 ± 2.7 & 35.3 ± 2.5 & 35.6 ± 2.4 & 39.8 ± 1.8 & 59.0 ± 0.4 & 58.3 ± 0.4 & 27.0 ± 2.9 & 25.8 ± 1.8 \\
FedDrift    & 91.1 ± 2.9 & 65.1 ± 3.7 & \textcolor{blue}{80.7 ± 2.1} & 51.1 ± 2.6 & 35.1 ± 2.6 & 32.1 ± 2.2 & 39.5 ± 3.1 & 26.5 ± 1.8 & 61.6 ± 0.6 & 61.6 ± 0.6 & 39.5 ± 4.0 & 34.6 ± 4.1 \\
CFL         & 81.3 ± 1.8 & 86.1 ± 1.9 & 66.0 ± 2.8 & 69.4 ± 3.0 & 32.2 ± 2.1 & 33.2 ± 2.3 & 34.6 ± 1.5 & 38.6 ± 1.6 & 58.5 ± 0.4 & 58.5 ± 0.4 & 31.3 ± 3.3 & 21.0 ± 1.5 \\
pFedMe      & 95.5 ± 0.3 & N/A        & 81.9 ± 1.2 & N/A        & \textcolor{blue}{42.4 ± 1.1} & N/A        & 36.1 ± 2.1 & N/A        & 69.4 ± 0.2 & N/A        & 30.9 ± 2.3 & N/A \\
APFL        & 95.5 ± 0.3 & 84.7 ± 2.4 & 81.9 ± 1.6 & 69.2 ± 2.8 & \textbf{44.7 ± 1.1} & 36.6 ± 2.1 & \textbf{44.2 ± 1.7} & 37.3 ± 0.8 & 72.3 ± 0.2 & 64.0 ± 0.4 &\textbf{43.5 ± 2.1} & 36.7 ± 0.4 \\
ATP         & N/A        & 85.6 ± 1.8 & N/A        & 68.4 ± 2.9 & N/A        & 33.6 ± 2.2 & N/A        & 37.5 ± 1.5 & N/A        & N/A        & N/A        & 37.9 ± 0.9 \\
\midrule
\textsc{Flux}       & \textcolor{blue}{92.5 ± 2.9} & \textcolor{blue}{94.0 ± 2.2} & 79.4 ± 2.8 & \textcolor{blue}{81.2 ± 2.7} & 38.6 ± 2.3 & \textcolor{blue}{38.7 ± 3.3} & 41.7 ± 2.0 & \textcolor{blue}{41.3 ± 3.1} & \textbf{79.4 ± 0.3} & \textcolor{blue}{78.6 ± 0.9} & 39.2 ± 0.2 & \textbf{39.2 ± 0.3} \\
\textsc{Flux}-prior & \textbf{95.8 ± 0.3} & \textbf{95.7 ± 1.7} & \textbf{81.9 ± 1.2} & \textbf{83.3 ± 1.4} & 40.1 ± 1.6 & \textbf{39.3 ± 3.1} & \textcolor{blue}{42.8 ± 1.2} & \textbf{41.3 ± 3.2} & \textcolor{blue}{79.3 ± 0.3} & \textbf{78.5 ± 0.9} & \textcolor{blue}{43.2 ± 3.1} & \textcolor{blue}{38.9 ± 1.1} \\
\bottomrule
Í\end{tabular}}
\vspace{5pt}
\caption{\textbf{Overall performance on MNIST, FMNIST, CIFAR-10, CIFAR-100, Office-Home (accuracy), and CheXpert (ROC AUC). \textbf{Known A.}}: Known Association. \textbf{N/A}: Not available.}
\label{tab:dataset_comparison}
\vspace{-19pt}
\end{table}
\begin{figure*}[h!]
    \centering
    \includegraphics[width=0.9\textwidth]{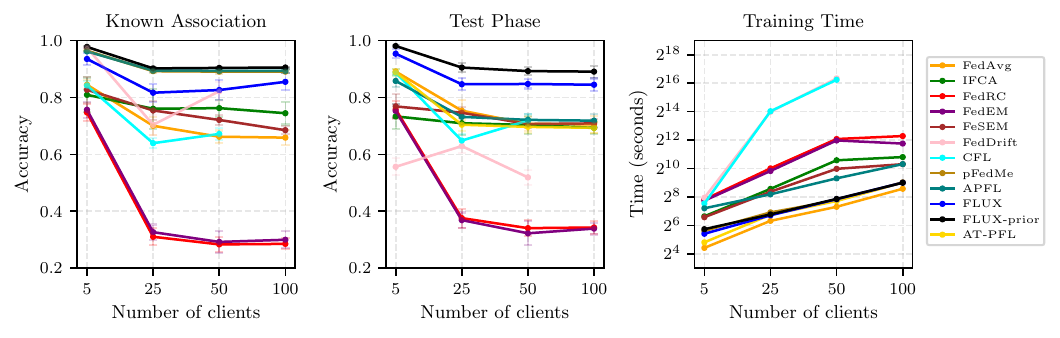}
    \vskip -0.1in
    \caption{\textbf{Mean accuracy and standard deviation on MNIST dataset with varying numbers of clients.}
    Left: \textit{known association} condition, where test-time cluster associations are available.
    Middle: \textit{test phase} condition, where cluster associations are inferred.
    Right: training time per 10 rounds.}
    \label{fig:N_clients}
    \vspace{-10pt}
\end{figure*}
\subsection{Scalability and Efficiency Across Increasing Numbers of Clients} \label{sec:scaling_clients_results} 
FL is designed to jointly train models across millions of clients with limited computational resources and memory. This necessitates efficient and lightweight methods capable of handling numerous data distribution shifts (i.e., clusters). To evaluate scalability, we measure the accuracy and training time of \textsc{Flux} and the baseline methods by increasing the number of clients from 5 to 100 (Figure \ref{fig:N_clients}). Due to prohibitive memory and computational costs (see Appendix \ref{app:code_licenses_hardware}), FedDrift and CFL could not be evaluated with 100 clients. Detailed experimental setup and results are provided in Appendix \ref{app:scaling_nclients_results}.

\textbf{\textsc{Flux} maintains high accuracy with increasing numbers of clients and clusters, demonstrating adaptability to large-scale FL.} $\;$
Figure~\ref{fig:N_clients} shows that accuracy generally decreases for all methods, confirming that as the number of clients—and therefore clusters—increases, clustering becomes more challenging due to the presence of numerous diverse data distributions, reduced training data per cluster, and the higher aggregation divergence inherent to FL. Despite this, \textsc{Flux} and \textsc{Flux}-prior maintain consistent accuracy in both evaluation settings (\textit{known association} and \textit{test phase}), outperforming baselines, which exhibit significant accuracy degradation as the number of clients increases. During the \textit{test phase}, \textsc{Flux} sustains consistent accuracy above 84\%, whereas the closest baseline, APFL, declines to over 70\%.

\textbf{\textsc{Flux} demonstrates scalability and efficiency with minimal computational overhead.} $\;$
We further evaluated the efficiency of \textsc{Flux} by measuring the overall training time as the number of clients increased. As illustrated in Figure \ref{fig:N_clients}, \textsc{Flux} shows
training times comparable to FedAvg, with only minor differences in execution time (on the order of seconds). The efficiency of \textsc{Flux} stems from its minimal modifications to the traditional FedAvg framework. Specifically, the additional costs introduced by \textsc{Flux}—primarily in the communication step and clustering process—are negligible relative to the total costs of model parameter transmission and aggregation. The communication overhead is proportional to the length of the descriptor ($L$), which is significantly smaller than the total model size (\(L /p \le 3.5 \times 10^{-3}\)). Similarly, the clustering cost ($O(L \cdot \log(L))$) is substantially lower than the aggregation cost ($O(N_{\text{client}} \cdot \theta)$). Notably, the computational time of the fastest CFL baseline, FeSEM, is more than 4 times that of \textsc{Flux}, while the slowest, FedDrift, requires over 300 times the computational time of \textsc{Flux}. Further details are provided in Appendix \ref{app:ComparativeAnalysis}.

\section{Discussion and Related Works}
\label{sec:discussion}

\begin{table*}[t]
\begin{center}
\resizebox{0.6\textheight}{!}{
\renewcommand{\arraystretch}{1.2}
\begin{small}
\begin{sc}
\begin{tabular}{lcccccccc}
\toprule
\textbf{Algorithm} & {\fontsize{8}{10}\selectfont\makecell{Cat.}} & {\fontsize{8}{10}\selectfont\makecell{All\\non-IID}} & {\fontsize{8}{10}\selectfont\makecell{Test\\Adaptation}} & {\fontsize{8}{10}\selectfont\makecell{No cluster\\number prior}} & {\fontsize{8}{10}\selectfont\makecell{Low comm.\\cost}} & {\fontsize{8}{10}\selectfont\makecell{Low comp.\\cost (Server)}} & {\fontsize{8}{10}\selectfont\makecell{Low comp.\\cost (Client)}} & {\fontsize{8}{10}\selectfont Scalability} \\
\midrule
FedAvg \cite{mcmahan_FL} & N/A  & N/A   &   N/A    &  N/A     &   $\checkmark$    &   $\checkmark$    &   $\checkmark$    &   $\checkmark$      \\
IFCA \cite{ghosh_2022_ifca} & CFL   &       &          &          &              &   $\checkmark$    &            &    $\bigcirc$   \\
FeSEM \cite{long2023multi} & CFL  &       &          &          &   $\checkmark$    &       &       $\checkmark$       &     \\
FedEM \cite{marfoq_2021_FedEM} & CFL  &   &          &          &              &   $\checkmark$    &            &    $\bigcirc$   \\
FedRC \cite{guo2024fedrc} & CFL  &  &          &          &              &   $\checkmark$    &            &    $\bigcirc$   \\
FedDrift \cite{jothimurugesan_2023_feddrift} & CFL &      &          & $\checkmark$  &     &     $\checkmark$          &     &     \\
CFL \cite{sattler_2021_CFL}    &  CFL     &          & $\checkmark$  &   $\checkmark$    &              &  $\checkmark$   &     \\
pFedMe \cite{pfedme} & PFL &      &   & $\checkmark$  &  $\checkmark$   &     $\checkmark$          &  $\bigcirc$   &  $\checkmark$   \\
APFL \cite{apfl} & PFL &      &   & $\checkmark$  &  $\checkmark$   &     $\checkmark$          &     &  $\checkmark$   \\
ATP \cite{atp} & AFL &      &  $\checkmark$ & $\checkmark$  & $\checkmark$ & $\checkmark$ &  $\bigcirc$  &  $\checkmark$  \\
\textbf{\textsc{Flux}} & CFL     & $\checkmark$    &  $\checkmark$   &  $\checkmark$     &   $\checkmark$    &   $\checkmark$    &   $\checkmark$   &  $\checkmark$     \\
\bottomrule
\end{tabular}
\end{sc}
\end{small}
}
\end{center}
\vskip -0.1in
\caption{\textbf{Qualitative comparison of \textsc{Flux} and baselines}.
\textbf{Cat.}: FL category.
\textbf{All non-IID}: designed to tackle all four non-IID types.
\textbf{Test adaptation}: adapt to unseen, unlabeled clients.
\textbf{No cluster number prior}: not required knowledge of distribution numbers.
\textbf{Low comm. cost}: relatively low communication cost (comparable to FedAvg).
\textbf{Low comp. cost (Server/Client)}: relatively low computational cost on server/client side (comparable to FedAvg).
\textbf{Scalability}: scales efficiently with large client numbers.
\textbf{N/A}: not applicable.
\textbf{$\checkmark$}: property satisfied.
\textbf{$\bigcirc$}: property conditionally satisfied.
}
\label{table:table_sum}
\vskip -0.22in
\end{table*}



\textbf{Clustered Federated Learning.} $\;$
CFL partitions the overall client data into clusters based on training behaviors or data distributions, serving as a middle ground between PFL and traditional FL. By aggregating within clusters, CFL can potentially accelerate model convergence and mitigate local overfitting.  CFL methods are typically categorized as hard-~\cite{sattler_2021_CFL, ghosh_2022_ifca, jothimurugesan_2023_feddrift, long2023multi} or soft-CFL~\cite{guo2024fedrc, marfoq_2021_FedEM}, and further distinguished by their clustering principles: \emph{metric-based}~\cite{ghosh_2022_ifca, marfoq_2021_FedEM, jothimurugesan_2023_feddrift, shenajLearningDomainsDevices2023, licciardiInteractionAware2025} or \emph{parameter-based}~\cite{guo2024fedrc,sattler_2021_CFL,long2023multi,FederatedLearningHierarchical}.
Metric-based CFL (e.g., loss-based) may miscluster clients with similar empirical risks but divergent loss distributions, showing that a single loss value is insufficient for reliable clustering.
Likewise, feature-based metrics such as frequency coefficients~\cite{shenajLearningDomainsDevices2023} cannot distinguish clients with different conditional distributions but identical marginals. Parameter-based CFL may misgroup models due to permutation invariance and overparameterization, assigning identical functions with different parameters to separate clusters or vice versa~\cite{wangFederatedLearningMatched2020}. In contrast, \textsc{Flux} introduces a descriptor-based clustering paradigm, where descriptors approximate the 2-Wasserstein distance and jointly capture both marginal and conditional properties of client data, a capability essential for handling all four types of distribution shift simultaneously.

\textbf{Personalized Federated Learning.} $\;$
PFL~\cite{NEURIPS2020_24389bfe, apfl,chenFMetaLearningFast2019, jiangImprovingFederatedLearning2023, FedPer2019a, collinsExploitingShared2021a, liFedBN2021,marfoqPersonalizedFL2022,add1,add2,add3,add4,add5,add6,add7,add8,add9,add10,add11} reframes FL as a client‐centric optimization, learning a distinct model for each client to address data heterogeneity. PFL methods include fine‐tuning a shared global model via additional local updates or meta‐learning~\cite{NEURIPS2020_24389bfe, chenFMetaLearningFast2019, jiangImprovingFederatedLearning2023}; model‐decoupling approaches that split networks into shared and client‐specific components or adapt batch‐norm layers~\cite{apfl, FedPer2019a, collinsExploitingShared2021a, liFedBN2021,marfoqPersonalizedFL2022}; and regularization‐based methods that add proximal or penalty terms linking personalized and global parameters~\cite{NEURIPS2020_24389bfe, liDittoFairRobust2021,zhangPersonalizedFL2020}. While effective with ample labeled data, PFL demands hyperparameter tuning, adds on‐device compute and memory overhead, and—being inherently supervised—cannot tackle unseen or unlabeled clients or distribution shifts at test time.

\textbf{Test-time Adaptive FL.} $\;$
TTA-FL tackles post-deployment distribution shifts by adapting a pre-trained global model to each client’s unlabeled test data using unsupervised objectives—e.g., entropy minimization~\cite{jiangTheFed2022,atp,rajibFedCTTA} or contrastive losses~\cite{chenContrastiveTTA2022, liangTTAFedDG2025}. Without labels, these objectives are ill-conditioned: entropy minimization often yields overconfident mispredictions under concept shift, and contrastive losses can collapse on small or imbalanced batches. To enhance stability, recent works restrict adaptation to a few parameters (e.g., interpolation weights)~\cite{jiangTheFed2022}. However, they still depend on supervised pre-training of personalized components, rendering them unsuitable for truly test-only clients. A more detailed discussion of these methods is provided in Appendix~\ref{d:ttt}.

\section{Limitations and Future Works} \label{sec:limitations} 
\textsc{Flux} has two primary limitations. First, to obtain statistically robust descriptors of client data distributions, a sufficient amount of diverse training data is required, with the exact quantity depending on the variability of the underlying data distributions. Second, \textsc{Flux} operates as a one-shot CFL framework, which does not account for dynamic scenarios where client data distributions evolve over time (i.e., distribution drift). Nonetheless, the efficiency and simplicity of \textsc{Flux} make it well-suited for dynamic implementations. The clustering process can be repeated whenever a drift occurs within a subset of clients, with the new clusters being assigned to the closest existing cluster-specific models (see Appendix \ref{app:clustering_mechanism}). Building on this potential, we plan to extend \textsc{Flux} to support dynamic scenarios, enabling adaptive clustering and model updates to address evolving client data distributions.

\section{Conclusion} \label{Conclusion}
In this work, we proposed \textsc{Flux}, a novel FL framework that addresses the four most common types of data distribution shifts ($P(X)$, $P(Y)$, $P(Y|X)$, and $P(X|Y)$) without requiring any prior knowledge of client data distributions, such as the number of clusters. By leveraging client-side extraction of representative descriptors and an unsupervised clustering approach, \textsc{Flux} achieves superior performance and robustness across varying levels of heterogeneity and increasing numbers of clients. Unlike existing methods, \textsc{Flux} enables testing-time association, allowing unseen and unlabeled clients to utilize the most suitable cluster-specific model obtained during training, addressing a critical gap in real-world FL applications. Moreover, \textsc{Flux} incurs minimal computational and communication overhead, comparable to FedAvg, making it scalable and practical for large-scale FL deployments. Our results establish \textsc{Flux} as a flexible, scalable, and robust solution for FL, paving the way for more practical and adaptable frameworks in decentralized and privacy-preserving ML.

\begin{ack}
This research was funded by the Swiss National Science Foundation, the European Union’s Horizon Europe programme, and the Slovenian Research and Innovation Agency through the projects SmartCHANGE (No. 101080965), TRUST-ME (No. 205121L\_214991), and XAI-PAC (No. Z00P2\_216405). PB acknowledges support from the Swiss National Science Foundation Postdoctoral Fellowships IMAGINE (No. 224226).
\end{ack}

\newpage


\bibliographystyle{unsrtnat} 
\bibliography{references} 

\clearpage
\newpage

\appendix

\newpage

\part{}
\parttoc

\newpage

\section{Related Approaches to Distribution Shift}
A wide range of methods has been proposed to address data heterogeneity in FL, which can be grouped into three main categories—clustered FL (CFL), personalized FL (PFL), and test-time adaptive FL (TTA-FL). Below, we summarize each category’s key principles and limitations.

\subsection{Clustered Federated Learning}
CFL assumes the presence of $M$ distinct data distributions within the federation and aims to learn one model per distribution. Existing CFL methods can be categorized by their \textit{clustering strategy}—either as hard or soft clustering—and by the underlying \textit{clustering principle}, namely metric-based or parameter-based (gradient-based) approaches.

\subsubsection{Clustering Strategy: Hard vs. Soft CFL} \label{app:cfl_hard_soft}
In Equation \ref{eq:cfl_main}, we introduced a general definition of CFL that unifies hard- and soft CFL under a single framework. This formulation follows standard mixture-model derivations: soft-CFL defines client assignments $c^{(k)}$ as probability vectors, while hard-CFL is the special case where $c^{(k)}$ reduces to a one-hot vector. Hence, Equation~\ref{eq:cfl_main} can be seen as a surrogate objective derived from the soft-CFL likelihood via the standard Jensen lower bound, subsuming both variants under a single expression. This appendix explicitly presents the optimization problems for hard-CFL and soft-CFL, as is standard in the literature, to provide greater clarity on their specific formulations.
\begin{itemize}[nosep,leftmargin=15pt, noitemsep, topsep=0pt, parsep=0pt, partopsep=0pt]
    \item \textit{Hard-CFL} assigns each client $k \in \mathcal{K}$ to exactly one cluster based on predefined criteria or similarity measures. Therefore, each cluster $c_m$ is a subset of $\mathcal{K}$, i.e., $c_m \subseteq \mathcal{K}$ for all $m \in \{1, \dots, M\}$. Additionally, the clusters are disjoint and collectively exhaustive, satisfying $\bigcup_{m=1}^M c_m = \mathcal{K}$ and $c_m \cap c_n = \emptyset$ for all $m \neq n$. This means that each cluster $c_m$ includes a distinct subset of clients with similar data distributions, allowing the hard-CFL framework to train specialized models tailored to each cluster's specific data distribution. Formally, hard-CFL seeks to jointly optimize the model parameters and the cluster assignments as in Equation \ref{eq:cfl}. This formulation ensures that each cluster-specific model $\theta_m^*$ minimizes the risk for all clients within cluster $c_m$, and generally suppose to know the number of clusters $M$.
    \begin{equation}
        \{\theta_m^*\}_{m=1}^{M^*}, \mathcal{C}^* = \arg\max_{\{\theta_m\}_{m=1}^M, \mathcal{C}} \sum_{m=1}^M \sum_{(x,y)\in(x^{(k)}, y^{(k)})} \log P(y|x; \theta_m),
        \label{eq:cfl}
    \end{equation}

    \item \textit{Soft-CFL} allows clients $k \in \mathcal{K}$ to belong to multiple clusters with certain probabilities or weights, accommodating scenarios where client data may exhibit overlapping distributions that can be effectively modeled by a combination of specific models. Therefore, each client $k$ is associated with a weight vector $\pi^{(k)} = \{\pi^{(k)}_1, \dots, \pi^{(k)}_M\}$, where $\pi^{(k)}_m \geq 0$ and $\sum_{m=1}^M \pi^{(k)}_m = 1$. This enables each client to contribute to multiple cluster-specific models based on weights. Mathematically, soft-CFL seeks to jointly optimize the cluster-specific model parameters $\{\theta_m\}_{m=1}^M$ and the assignment weights $\{\pi^{(k)}\}_{k=1}^K$ as follows:
    \begin{equation}
        \{\theta_m^*\}_{m=1}^{M^*}, \{\pi^{(k),*}\}_{k=1}^K = \arg\max_{\{\theta_m\}_{m=1}^M, \{\pi^{(k)}\}_{k=1}^K}
        \sum_{m=1}^M \sum_{k=1}^K \sum_{(x,y)\in(x^{(k)},y^{(k)})} \pi^{(k)}_m \log P(y|x;\theta_m),
        \label{eq:soft_cfl}
    \end{equation}
This formulation allows each client to be influenced by multiple cluster-specific models, thereby enhancing the flexibility of cluster assignment. However, optimizing both cluster-specific models and assignment weights $\pi^{(k)}$ simultaneously poses additional computational and convergence challenges, requiring alternative optimization techniques \cite{zhouClusteredMultiTaskLearning2011}. Furthermore, soft-CFL is generally not designed to handle unseen clients effectively, as determining assignment weights $\pi^{(k)}$ requires access to the training data of these clients \cite{guo2024fedrc, marfoq_2021_FedEM}.
\end{itemize}

\subsubsection{Clustering Principles: Metric-Based vs. Parameter-Based CFL}
\label{d:ttt}

\begin{table}[h]
\vskip -0.1in
\begin{center}
\renewcommand{\arraystretch}{1.2}
\begin{scriptsize}
\begin{sc}
\begin{tabularx}{\textwidth}{l X}
\toprule
\textbf{Clustering principle} & \textbf{Framework} \\
\midrule
Metric-based & IFCA \cite{ghosh_2022_ifca} \\ 
           & FedEM \cite{marfoq_2021_FedEM} \\
           & FedDrift \cite{jothimurugesan_2023_feddrift} \\
           & FLIS \cite{morafahFLISClusteredFederated2023} \\
           & FedCE \cite{caiFedCEPersonalizedFederated2023a} \\
           & ACFL \cite{huangActiveClientSelection2024} \\
           & FLSC \cite{liFederatedLearningSoft2022} \\
           & FedGWC \cite{licciardiInteractionAware2025} \\
           
\midrule
Parameter-based & CFL \cite{sattler_2021_CFL} \\ 
                         & FeSEM \cite{long2023multi} \\ 
                         & FedRC \cite{guo2024fedrc} \\
                         & FL+HC \cite{FederatedLearningHierarchical} \\
                         & AutoCFL \cite{gongAdaptiveClientClustering2024} \\
                         & FlexCFL \cite{FlexibleClusteredFederated} \\
                         
\bottomrule
\end{tabularx}
\end{sc}
\end{scriptsize}
\end{center}
\vskip -2pt
\caption{\textbf{Summary of CFL frameworks categorized by clustering principles}.
}
\label{table:2type}
\vskip -0.2in
\end{table}

CFL can be further classified into two categories based on their clustering principles: \emph{metric-based CFL} and \emph{parameter-based CFL}. Table \ref{table:2type} provides a categorization of some existing CFL frameworks according to these principles.

\begin{itemize}[leftmargin=15pt, topsep=0pt, parsep=0pt, partopsep=0pt]
    \item \textit{Metric-Based CFL.} These algorithms assign cluster identities based on model losses or inference-phase accuracy. Typically, the server broadcasts $M$ models corresponding to $M$ clusters, and each client selects the model that minimizes the empirical risk on its local data---essentially identifying the closest model as its cluster. Alternatively, client-side loss or accuracy values can be directly utilized by the server for clustering. However, this approach relies on highly compressed information, which can lead to inaccuracies. For instance, two clients with distinct data distributions might achieve the same accuracy by making errors on different samples, causing them to be grouped in the same cluster.

    \item \textit{Parameter-Based CFL.} These algorithms perform clustering based on the parameters or gradient updates of clients' models. When clients possess non-IID datasets, local training epochs result in diverging model parameter updates, which can be utilized for cluster identification. Two common approaches for parameter-based CFL are: i) the server broadcasts $M$ models, and each client selects the one closest to its updated model, and ii) the server directly clusters clients' updated model parameters or gradients. However, relying solely on distance metrics for clustering may also lead to inaccuracies, as such metrics provide overly condensed information, making it challenging to differentiate intrinsically non-IID clients.
\end{itemize}

\textsc{Flux} explicitly employs a descriptor extractor to capture statistical characteristics of client data. It does not fall neatly into either category, as it clusters based on data-distribution descriptors rather than model parameters or compressed metrics. The closest related approach is HACCS~\cite{wolfrathHACCS2022}, which also introduces descriptor-like summaries through label and pixel-value histograms. However, HACCS lacks semantic and structural awareness (pixel statistics may not reflect task-relevant features), requires labels at inference (preventing test-time adaptation), and produces descriptors whose size grows with the number of labels and bins. Moreover, its histograms are human-interpretable, raising stronger privacy concerns (e.g., revealing label distributions), and requiring strong differential privacy. In contrast, \textsc{Flux} leverages compact latent representations aligned with the classification objective, which are non-invertible and task-relevant, enabling robust clustering across all four types of distribution shift, and generalization to unseen clients. Numerical results comparing \textsc{Flux} with HACCS are provided in Section~\ref{app:comparison_HACCS}.

\subsection{Personalized Federated Learning.}
PFL addresses client heterogeneity by shifting from a global objective to a client-specific optimization framework, where each client \(k\) learns a tailored model optimized for its local data distribution. Instead of enforcing a single shared model across all participants, PFL explicitly accounts for non-IID data by enabling model personalization. Existing methods can be broadly classified into three categories:

\begin{itemize}[leftmargin=15pt, topsep=0pt, parsep=0pt, partopsep=0pt]
  \item \emph{Fine-tuning.} 
        These approaches begin with a shared global model that is subsequently adapted locally using additional gradient steps or meta-learning techniques to simplify personalization~\cite{chenFMetaLearningFast2019, fallahPersonalizedFederatedLearning2020a,jiangImprovingFederatedLearning2023}. While conceptually simple, they often require careful tuning of hyperparameters and sufficient local data to avoid overfitting.

  \item \emph{Model decoupling.}  
        These methods separate the model into shared (global) and client-specific components, such as training a common backbone alongside global and personalized heads~\cite{FedPer2019a, apfl, collinsExploitingShared2021a, jiangTheFed2022,marfoqPersonalizedFL2022}. Other variants adapt only batch-norm statistics~\cite{liFedBN2021} or allow for heterogeneous encoder architectures~\cite{diaoHeteroFL2020}. These approaches enhance model expressiveness but incur higher on-device memory and computational overhead.

  \item \emph{Regularization-based.} 
        These approaches personalize models by adding a regularization term that encourages proximity between each client’s local model \(\phi^{(k)}\) and the global model \(\theta\). A typical formulation augments the local objective with a penalty term: 
        \(
          \min_{\phi^{(k)}}\;\mathcal L^{(k)}\bigl(\phi^{(k)}\bigr)
            +\frac{\lambda}{2}\,\bigl\|\phi^{(k)}-\theta\bigr\|^{2},
        \) 
        where \(\lambda\) controls the trade-off between personalization and global consistency~\cite{fallahPersonalizedFederatedLearning2020a, liDittoFairRobust2021}. While these methods offer smooth personalization, they often require tuning client-specific hyperparameters and solving nested or bi-level optimization problems.
\end{itemize}

Although PFL methods are effective when sufficient labeled data is available on each client, their performance degrades in low-data regimes. Moreover, they typically introduce additional computational and memory overhead on the client side, and—being inherently supervised—are not applicable to unseen, unlabeled clients that appear only at test time.

\subsection{Test‑time Adaptive Federated Learning.}
TTA-FL targets \emph{post-deployment} distribution shifts by adapting a pre-trained global model to each client’s unlabeled test data. Most approaches optimize \emph{unsupervised} objectives—such as entropy minimization~\cite{jiangTheFed2022, atp, rajibFedCTTA} or self-supervised contrastive losses~\cite{chenContrastiveTTA2022, liangTTAFedDG2025}—directly on the test set. However, in the absence of labels, the resulting optimization landscape is often unstable: entropy minimization can lead to overconfident mispredictions under concept shift, while contrastive losses may collapse on small or imbalanced test batches—a common constraint in FL deployment. To improve stability, recent methods constrain adaptation to a small set of parameters (e.g., batch-norm statistics or interpolation weights between global and personalized heads~\cite{jiangTheFed2022}). Nonetheless, these models still require labeled data during training to learn the personalized components, limiting their applicability to truly unseen and unlabeled clients at test time.

\section{Reproducibility \& Implementation Overview} \label{app:imp_detail}

\subsection{Datasets and Non-IID Generation Protocol}
\label{app:anda}

\paragraph{ANDA.} For MNIST, FMNIST, CIFAR-10, and CIFAR-100, we generate the four most common types of distribution shift using \textit{ANDA} \footnote{\href{https://github.com/alfredoLimo/ANDA.git}{https://github.com/alfredoLimo/ANDA.git}} (A Non-IID Data generator supporting Any kind), a toolkit designed to create reproducible non-IID datasets for FL experimentation. It supports datasets MNIST, EMNIST, FMNIST, CIFAR-10, and CIFAR-100, and facilitates five types of data distribution shifts: feature distribution shift, label distribution shift, $P(Y|X)$ concept shift, $P(X|Y)$ concept shift, and quantity shift. 

Figures \ref{fig:appendix_ANDA_mnist} and \ref{fig:appendix_ANDA_cifar10} illustrate examples of MNIST and CIFAR-10 datasets under four distinct types of data distribution shifts generated using \textit{ANDA}:
\begin{itemize}[noitemsep, topsep=0pt, parsep=0pt, partopsep=0pt]
    \item In (a), the two clients experience different feature distribution shifts. Each image undergoes one of three color transformations (blue, green, or red) and one of four rotations ($0^\circ$, $90^\circ$, $180^\circ$, or $270^\circ$), with distinct distributions applied to each client. For instance, the first client has a higher proportion of blue images, while the second client has more red images.
    \item In (b), the clients experience label distribution shifts. Each client receives data from only three classes. For example, in the MNIST dataset, the first client has images of digits 0, 3, and 6, while the second client has images of digits 1, 2, and 4.
    \item In (c), the clients exhibit $P(Y|X)$-concept shifts. Given identical feature distributions (e.g., image pixels), labels differ between clients. For instance, in the MNIST dataset, the first client labels the digit '1' as '1' and '2' as '2', whereas the second client labels '1' as '2' and '2' as '1'.
    \item In (d), the clients demonstrate $P(X|Y)$-concept shifts. For the same label, different features are applied. For example, in the MNIST dataset, the first client applies a red hue to images labeled '2', while the second client applies a blue hue to the same label.
\end{itemize}

In subsequent sections, we detail the specific dataset generation settings employed with \textit{ANDA}.

\begin{figure*}[htbp]
  \vskip -0.1in   
  \centering
  \includegraphics[width=0.9\textwidth]{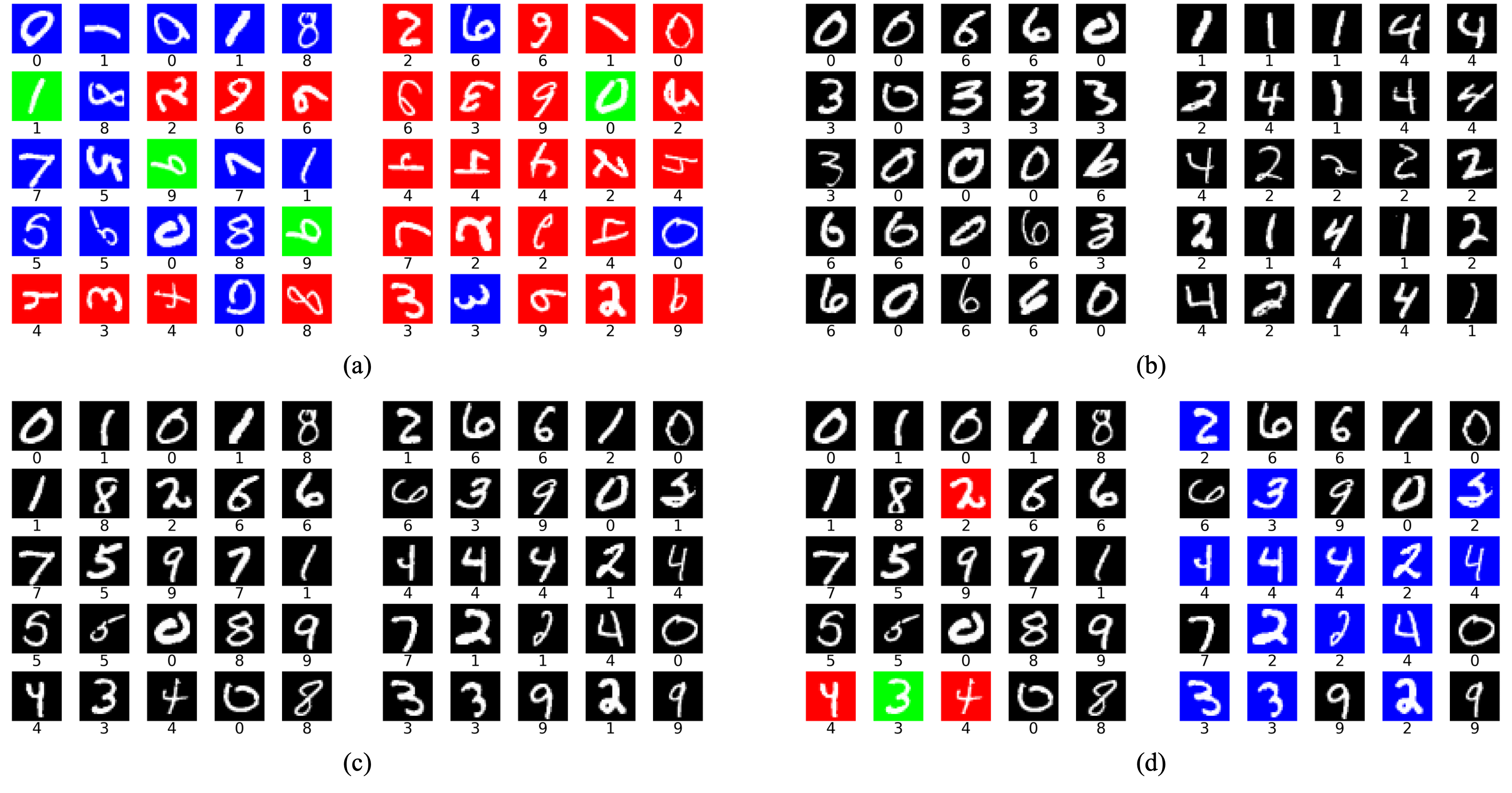}
  \vspace{-10pt} 
  \caption{
  \textbf{MNIST datasets with four different types of data distribution shifts generated by \textit{ANDA}.}
  }
  \label{fig:appendix_ANDA_mnist}
\end{figure*}

\begin{figure*}[htbp]
  \centering
  \includegraphics[width=0.9\textwidth]{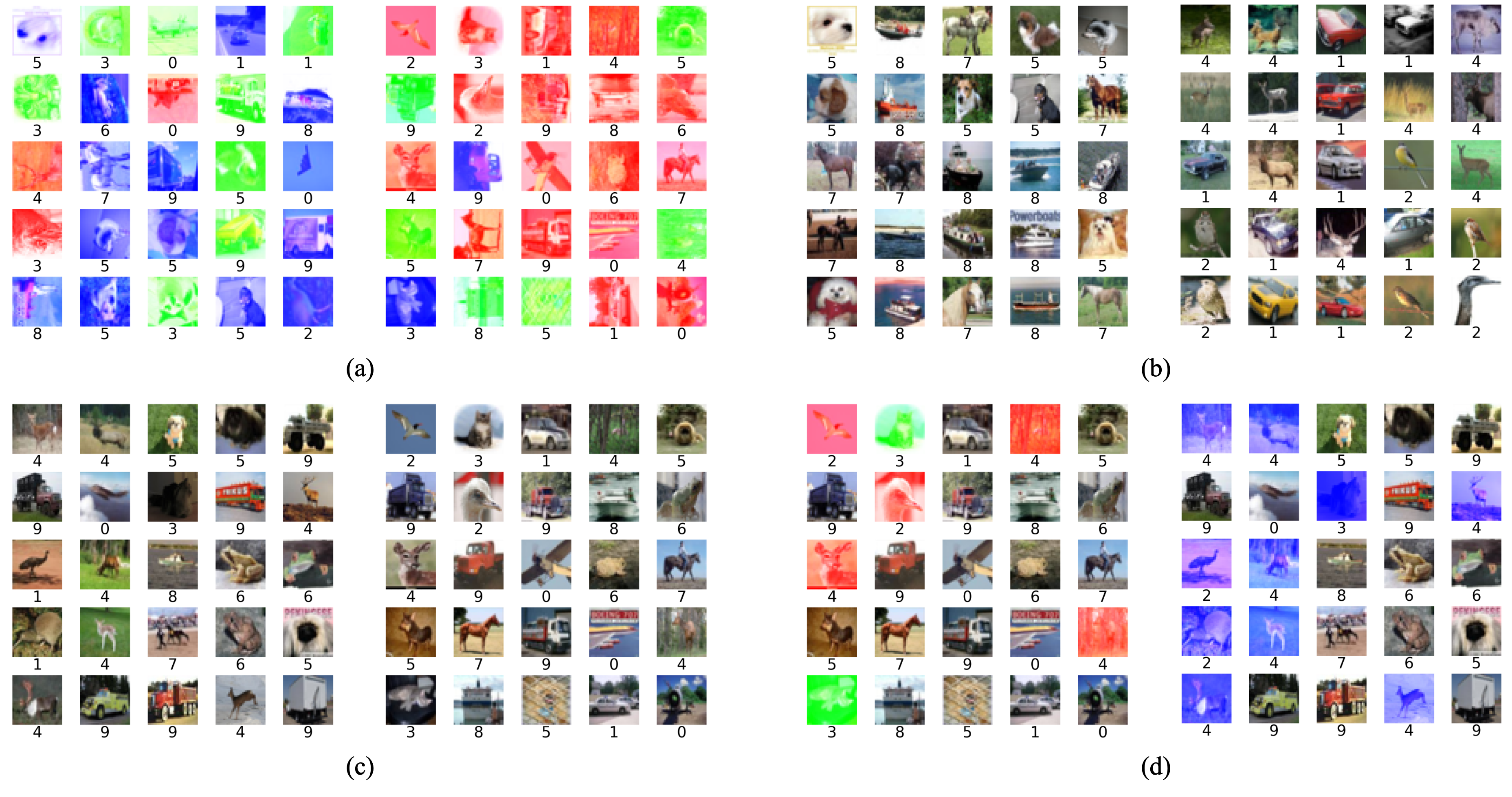}
  \vspace{-5pt} 
  \caption{
  \textbf{CIFAR-10 datasets with four types of data distribution shifts generated by \textit{ANDA}.}
  }
  \label{fig:appendix_ANDA_cifar10}
\end{figure*}

\paragraph{CheXpert.} 
For the real-world CheXpert dataset, we retain the original data without applying any image augmentations or label modifications to preserve the correctness and clinical integrity of the data. To simulate varying levels of client heterogeneity, we partition the dataset into multiple distributions using three available metadata attributes: \textit{ViewPosition}, \textit{Age}, and \textit{Sex}. Specifically, we construct the following non-IID configurations:
\begin{itemize}[nosep,leftmargin=*]
\item \textbf{Low heterogeneity:} 2 distributions based on \textit{ViewPosition} (Frontal vs.\ Lateral).
\item \textbf{Medium heterogeneity:} 4 distributions based on \textit{ViewPosition} and \textit{Age} ($<$50 vs.\ $\geq$50).
\item \textbf{High heterogeneity:} 8 distributions based on \textit{ViewPosition}, \textit{Age}, and \textit{Sex} (Male vs.\ Female).
\end{itemize}

\paragraph{Office-Home.} 
For the real-world dataset Office-Home, we do not apply any image augmentation or label modification; instead, we directly use the dataset in its original form. For our experiments, we restrict the dataset to images with only 20 classes. To model different levels of data heterogeneity, we leverage the four inherent domains of the dataset: Art, Clipart, Product, and Real-World. The domain shift across these categories naturally induces heterogeneity in the data distribution.

\subsection{Models and Hyper-parameter Settings} \label{app:hyper} 
We adopt a 5-fold cross-validation strategy to evaluate model performance, using fixed random seeds (42, 43, 44, 45, and 46) to ensure reproducibility. For the MNIST, FMNIST, and CIFAR-10 datasets, we use LeNet-5~\cite{lenet} as the base model; for CIFAR-100, CheXpert, and Office-Home, we use ResNet-9~\cite{resnet}. A batch size of 64 is used for both training and evaluation. Each client reserves 20\% of its local data for validation. The FL process runs for 10 communication rounds on MNIST, FMNIST, and CIFAR-10, for 15 rounds on CIFAR-100, 20 rounds on CheXpert, 40 rounds on Office-Home, with each client performing 2 local training epochs per round. The learning rate is set to 0.005 with a momentum of 0.9. All models are trained using cross-entropy loss, except on CheXpert, where a binary cross-entropy loss is used due to the multi-label classification setting.


\subsection{Code, Licenses and Hardware} \label{app:code_licenses_hardware}
Our experiments were implemented using Python 3.12 and open-source libraries including PyTorch 2.4 \citep{Paszke2019PyTorchAI} (BSD license), Scikit-learn 1.5 \citep{scikit-learn} (BSD license), and Flower 1.11 \citep{beutel2020flower} (Apache License). For visualization, we utilized Matplotlib 3.9 \citep{Hunter:2007} (BSD license) and Seaborn 0.13 \citep{Waskom2021seaborn} (BSD license), while data processing was performed using Pandas 2.2 \citep{pandas} (BSD license). The datasets used in our experiments—MNIST (GNU license), FMNIST (MIT license), CIFAR-10, CIFAR-100, CheXpert, and Office-Home—are freely available online. To ensure reproducibility, our code, along with detailed instructions for reproducing the experiments, is publicly accessible on GitHub\footnote{\href{https://github.com/dariofenoglio98/FLUX}{https://github.com/dariofenoglio98/FLUX}} under the MIT license.
We used publicly available codes for our baselines (except FedAvg).

All experiments were conducted on a workstation equipped with four NVIDIA RTX A6000 GPUs (48 GB each), two AMD EPYC 7513 32-Core processors, and 512 GB of RAM.

\begin{algorithm}[h]
\small
\caption{\textsc{Flux} Framework – Training Phase}
\label{alg:clustered_fl_training}
\KwIn{Model $f(\theta)$, client set $\mathcal{K}=\{1,\dots,K\}$, rounds $R$, descriptor extractor $\phi(\cdot;\psi)$, clustering method $\mathcal{U}(\cdot;\lambda)$.}
\KwOut{Cluster‐specific models $\{\theta_m\}_{m=1}^M$, cluster centroids $\{\gamma_m\}_{m=1}^M$}

\textbf{Descriptor Extraction}\;
$\mathcal{A} \gets \texttt{RandomClientSelection}(\mathcal{K})$ \tcp*{$\mathcal{A}$ contains clients for initial clustering}
\For{each client $k \in \mathcal{A}$ \textbf{in parallel}}{
  $\theta^{(k)} \leftarrow \theta^*$\;
  $d^{(k)} \leftarrow \phi\!\left(x^{(k)}, y^{(k)};\psi\right)$ \tcp*{Extract descriptor}
  \textbf{Send} $d^{(k)}$, $s^{(k)}$, and $\theta^{(k)}$ to server\;
}

\textbf{Unsupervised Clustering}\;
$\mathcal{C} \gets \mathcal{U}\!\left(\{d^{(k)}\}_{k\in\mathcal{A}};\lambda\right)\quad\text{with}\quad
\mathcal{C}=\{c^{(k)}\}_{k\in\mathcal{A}},\; c^{(k)}\in\{0,1\}^M$\;
\For{$m=1$ \KwTo $M$}{
  $n_m \leftarrow \sum_{k\in\mathcal{A}} c^{(k)}_m$\;
  $\gamma_m \leftarrow \tfrac{1}{n_m}\sum_{k\in\mathcal{A}} c^{(k)}_m\, d^{(k)}$ \tcp*{Cluster centroids}
}

\textbf{Clustered Federated Learning}\;
\For{$r = 1$ \KwTo $R$}{
  \For{$m=1$ \KwTo $M$}{
    $S_m \leftarrow \sum_{k=1}^K c^{(k)}_m\, s^{(k)}$\;
    $\theta_m \leftarrow \tfrac{1}{S_m}\sum_{k=1}^K c^{(k)}_m\, s^{(k)}\, \theta^{(k)}$ \tcp*{Weighted aggregation}
  }
  $\mathcal{A} \gets \texttt{RandomClientSelection}(\mathcal{K})$ \tcp*{New participants each round}

  \For{each client $k \in \mathcal{A}$ \textbf{without} $c^{(k)}$ \textbf{in parallel}}{
    $\theta^{(k)} \leftarrow \theta^*$\;
    $d^{(k)} \leftarrow \phi\!\left(x^{(k)}, y^{(k)};\psi\right)$ \tcp*{Descriptor for cluster assignment}
    \textbf{Send} $d^{(k)}$, $s^{(k)}$, and $\theta^{(k)}$ to server\;
  }

  \For{each client $k \in \mathcal{A}$ \textbf{without} $c^{(k)}$ \textbf{in parallel}}{
    $c^{(k)} \leftarrow \arg\min_{m=1,\ldots,M}\;\kappa\!\left(d^{(k)}, \gamma_m\right)$ \tcp*{Model/cluster assignment}
  }

  \For{each client $k \in \mathcal{A}$ \textbf{with} $c^{(k)}_m = 1$}{
    \textbf{Send} $\theta_m$ to client $k$\;
  }

  \For{each client $k \in \mathcal{A}$ \textbf{in parallel}}{
    \textbf{Receive} $\theta_m$\;
    $\theta^{(k)} \leftarrow \theta_m$\;
    $\theta^{(k)} \leftarrow \mathrm{LocalTrain}\!\left(\theta^{(k)}, x^{(k)}, y^{(k)}\right)$ \tcp*{Local update}
    \textbf{Send} $\theta^{(k)}$ and $s^{(k)}$ to server\;
  }
}
\end{algorithm}

\begin{algorithm}[h]
\small
\caption{\textsc{Flux} Framework – Inference Phase}
\label{alg:clustered_fl_inference}
\KwIn{set of test clients $\mathcal{Q}=\{1,\dots,Q\}$, cluster-specific models $\{\theta_m\}_{m=1}^M$, cluster centroids $\{\gamma_m\}_{m=1}^M$, descriptor extractor $\phi(\cdot;\psi)$, distance function $\kappa(\cdot,\cdot)$.}
\KwOut{Predictions $\{\hat y^{(q)}\}_{q=1}^Q$}


\tcp{Descriptor extraction}
\For{each client $q\in\mathcal{Q}$ \textbf{in parallel}}{
  $d^{\prime(q)} \leftarrow \phi\bigl(x^{(q)},\mathbf{0};\psi\bigr)$\tcp*{Extract descriptor}
  Send $d^{\prime(q)}$ to server\;
}

\tcp{Unsupervised model assignment}
\For{each client $q\in\mathcal{Q}$ \textbf{in parallel}}{
  $c^{*(q)} \leftarrow \arg\min_{m=1,\dots,M}\;\kappa\bigl(d^{\prime(q)},\gamma_m\bigr)$\tcp*{Model assignment}
  Send $\theta_{c^{*(q)}}$ to client $q$\;
}

\tcp{Local prediction}
\For{each client $q\in\mathcal{Q}$ \textbf{in parallel}}{
  $\hat y^{(q)} \leftarrow f\bigl(x^{(q)};\,\theta_{c^{*(q)}}\bigr)$\tcp*{Inference}
}
\end{algorithm}

\subsection{Algorithm Pseudo-code} \label{app:algo}
We provide the pseudo-code for both the training phase (Algorithm \ref{alg:clustered_fl_training}) and inference phase (Algorithm \ref{alg:clustered_fl_inference}) of our proposed implementation of \textsc{Flux}. These include steps for extracting representative descriptors of client data distributions and performing unsupervised clustering.

\section{Descriptor Extractor Details and Theoretical Guarantees} \label{app:temp11}

\subsection{Descriptor Extractor} \label{app:descriptor_extractor}
This section presents the implementation of the descriptor extractor $\phi$, which enables each client to encode its local data distribution in a compact and privacy-preserving way. The resulting descriptor $d^{(k)}$ supports consistent similarity comparisons across clients while satisfying all requirements in Definition~\ref{sec:descriptor_extractor}. We outline the implementation in four federated steps:

\subsubsection{Implementation Details.}
Let the final hidden layer of the global model produce, for client $k$, a matrix of latent representations \(s^{(k)} \in \mathbb{R}^{s^{(k)} \times z}\). The descriptor extractor maps these latents to a descriptor \(d^{(k)} \in \mathbb{R}^{L}\) through the following steps:

\begin{enumerate}[leftmargin=20pt, topsep=0pt, parsep=0pt, partopsep=0pt, label=\textit{S\arabic*}]
\item \textit{Global alignment (one shot, no raw data).}  
      Each client computes the element-wise minimum and maximum of its latents and sends only these two \(z\)-dimensional vectors (i.e., \(2z\) floats) to the server. The server aggregates them by coordinate-wise min and max to obtain global bounds \([m^-, m^+]\), which are broadcast to all clients along with the model weights. This step ensures consistent alignment without transmitting raw data, labels, or gradients.
    
\item \textit{Shared PCA on synthetic reference points.}        
      Using a shared random seed, all clients generate 200 synthetic points uniformly sampled from \([m^-, m^+]\) and independently fit a PCA map \(g:\mathbb{R}^z \to \mathbb{R}^l\) (with \(l=10\)) to this synthetic set. Because both the sampling and data are identical, all clients derive the same linear projector \(g\), ensuring that Euclidean geometry in the reduced space is aligned across the federation.

\item \textit{Moment computation.}  
      Each client projects its latent matrix using the shared PCA: \(z^{(k)} = g(s^{(k)}) \in \mathbb{R}^{s^{(k)} \times l}\). It then computes: (i) global statistics \((\mu_x^{(k)}, \Sigma_x^{(k)})\), the mean and covariance of \(z^{(k)}\), approximating the marginal distribution \(P(X)\), and (ii) class-conditional statistics \(\{(\mu_u^{(k)}, \Sigma_u^{(k)})\}_{u=1}^U\), approximating the conditional distribution \(P(Y \mid X)\).

\item \textit{Differential privacy (optional plug-in).}  
    To enhance privacy guarantees, each statistic \(g_i\) in the descriptor can be independently perturbed using the Laplace mechanism (\(\delta = 0\))~\cite{dworkCalibratingNoiseSensitivity2006}, as detailed in Appendix~\ref{app:privacy_implication}:
    \[
    \eta_i \sim \mathrm{Laplace}(0, b_i), \qquad b_i = \Delta_{1,i} / \varepsilon,
    \]
    where \(\Delta_{1,i}\) denotes the \(\ell_1\)-sensitivity of the \(i\)‑th statistic. For means and standard deviations, we conservatively bound \(\Delta_{1,i} \le \mathrm{Range}(g_i)/v^{(k)}\). The resulting descriptor is
    \[
    d^{(k)} = [g_1, \dots, g_d]^\top + \eta,
    \quad \eta \sim \mathrm{Laplace}(0, \mathrm{diag}(b_1, \dots, b_d)).
    \]
    
\end{enumerate}

\subsubsection{Distribution Fidelity (R1).} \label{pg:satisfyR1}
To satisfy the condition in Equation~\ref{eq:R1}, we design the descriptor extractor so that Euclidean distances in the descriptor space \(\mathbb{R}^L\) closely approximate the 2-Wasserstein distance \(W_2\) between client data distributions. This is achieved by globally aligning all clients using a consistent, privacy-preserving PCA, and constructing each descriptor from the first two moments of: (i) the marginal latent distribution \(P(X)\), and (ii) the class-conditional latent distributions \(\{P(X\mid Y=u)\}_{u=1}^U\).

We base our analysis on the following conditions:
\begin{itemize}
\item[(A1)] (Gaussian latent approximation) Each latent distribution can be approximated by a Gaussian, i.e.\ \(P_i \approx \mathcal{N}(\mu_i,\Sigma_i)\). This assumption is commonly adopted for deep feature representations. 
\item[(A2)] (Spectral bounds) There exist constants \(0<\lambda_{\min}\le \lambda_{\max}<\infty\) such that all covariance eigenvalues remain in the interval \([\lambda_{\min},\lambda_{\max}]\). In practice, this is enforced to Step~S1 of the extractor, which restricts latent values to a shared bounding box \([m^-,m^+]\).
\item[(A3)] (Near-commutativity) After applying the shared PCA, covariances become nearly diagonal. This allows us to treat them as commuting matrices, which yields exact identities; otherwise, standard operator inequalities can be applied.
\end{itemize}

For clarity, we present the following proposition in the marginal setting; the class-conditional version follows by applying it to each class component.

\begin{proposition}[Lipschitz-equivalence to \(W_2\) for marginals]\label{prop:r1_marginal}
Consider the distance between two client descriptors defined as
\begin{equation}
    \Delta^2 \;=\; \|\mu_1-\mu_2\|_2^2 \;+\; \|\Sigma_1-\Sigma_2\|_F^2.
    \label{eq:delta}
\end{equation}
Under assumptions \textup{(A1)}–\textup{(A3)}, there exist constants
\[
c_- \;=\; \min\{1,\, (2\sqrt{\lambda_{\max}})^{-1}\},\qquad
c_+ \;=\; \max\{1,\, (2\sqrt{\lambda_{\min}})^{-1}\},
\]
such that the following inequality holds:
\[
c_-^2\,\Delta^2 \;\le\; W_2^2\!\bigl(\mathcal{N}(\mu_1,\Sigma_1),\mathcal{N}(\mu_2,\Sigma_2)\bigr)
\;\le\; c_+^2\,\Delta^2.
\]
Hence, within the admissible covariance set, the squared 2-Wasserstein distance and the descriptor distance are equivalent up to constant factors.
\end{proposition}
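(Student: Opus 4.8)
The plan is to exploit the closed-form Bures--Wasserstein formula for Gaussians, which cleanly splits $W_2^2$ into a mean term and a covariance term, and then to match each term against the corresponding term in $\Delta^2$. For two Gaussians the squared 2-Wasserstein distance is $W_2^2 = \|\mu_1-\mu_2\|_2^2 + \operatorname{Tr}\bigl(\Sigma_1+\Sigma_2-2(\Sigma_1^{1/2}\Sigma_2\Sigma_1^{1/2})^{1/2}\bigr)$. The first summand is \emph{identical} to the mean contribution $\|\mu_1-\mu_2\|_2^2$ appearing in $\Delta^2$, so it carries the trivial constant $1$ on both sides and needs no further work. The whole problem therefore reduces to bounding the Bures covariance term $d_B^2(\Sigma_1,\Sigma_2) := \operatorname{Tr}\bigl(\Sigma_1+\Sigma_2-2(\Sigma_1^{1/2}\Sigma_2\Sigma_1^{1/2})^{1/2}\bigr)$ against the Frobenius term $\|\Sigma_1-\Sigma_2\|_F^2$, with constants controlled by $\lambda_{\min}$ and $\lambda_{\max}$.

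Second, I would invoke the near-commutativity assumption (A3) to simultaneously diagonalize $\Sigma_1$ and $\Sigma_2$ in a common orthonormal basis, denoting their aligned eigenvalues by $a_i$ and $b_i$. A short computation then yields the diagonal identities $d_B^2(\Sigma_1,\Sigma_2)=\sum_i(\sqrt{a_i}-\sqrt{b_i})^2$ and $\|\Sigma_1-\Sigma_2\|_F^2=\sum_i(a_i-b_i)^2$, reducing the matrix comparison to a purely scalar one. The scalar comparison is then elementary: factoring $a_i-b_i=(\sqrt{a_i}-\sqrt{b_i})(\sqrt{a_i}+\sqrt{b_i})$ and using the spectral bounds (A2), so that $2\sqrt{\lambda_{\min}}\le \sqrt{a_i}+\sqrt{b_i}\le 2\sqrt{\lambda_{\max}}$, gives the per-coordinate inequalities $\frac{(a_i-b_i)^2}{4\lambda_{\max}}\le(\sqrt{a_i}-\sqrt{b_i})^2\le\frac{(a_i-b_i)^2}{4\lambda_{\min}}$.

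Third, summing over $i$ produces $\frac{1}{4\lambda_{\max}}\|\Sigma_1-\Sigma_2\|_F^2\le d_B^2(\Sigma_1,\Sigma_2)\le\frac{1}{4\lambda_{\min}}\|\Sigma_1-\Sigma_2\|_F^2$. I would then recombine the mean term (coefficient $1$) with the covariance term (coefficient in $[\tfrac{1}{4\lambda_{\max}},\tfrac{1}{4\lambda_{\min}}]$): since $c_-^2\le 1$ dominates the mean block and $c_-^2\le\frac{1}{4\lambda_{\max}}$ dominates the covariance block (and symmetrically for $c_+^2$), taking the worst case over the two blocks yields $c_-^2=\min\{1,(2\sqrt{\lambda_{\max}})^{-2}\}$ and $c_+^2=\max\{1,(2\sqrt{\lambda_{\min}})^{-2}\}$, exactly the stated constants.

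The step I expect to be the genuine obstacle is the honest treatment of (A3): the diagonal identities are exact only when $\Sigma_1$ and $\Sigma_2$ commute, whereas after the shared PCA the covariances are merely \emph{nearly} diagonal. A fully rigorous argument in the non-commuting regime cannot use $(\Sigma_1^{1/2}\Sigma_2\Sigma_1^{1/2})^{1/2}=\Sigma_1^{1/2}\Sigma_2^{1/2}$ and must instead rely on operator inequalities---e.g.\ the fact that $\Sigma\mapsto\Sigma^{1/2}$ is operator-Lipschitz with modulus governed by $\lambda_{\min}$ on the admissible spectral set, together with a reverse bound from $\lambda_{\max}$. I would therefore either (i) state the clean equivalence under exact commutativity and absorb the off-diagonal discrepancy into an error term controlled by the commutator norm, or (ii) replace the diagonal identity with two-sided operator-monotonicity bounds for the matrix square root, which reproduce the same constants up to the near-commutativity slack. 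Everything else---the Bures--Wasserstein identity, the eigenvalue reduction, and the scalar factorization---is routine.
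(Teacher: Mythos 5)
Your proposal follows essentially the same route as the paper's proof: the Bures--Wasserstein closed form for Gaussian $W_2^2$, reduction to commuting (simultaneously diagonalizable) covariances via (A3), a scalar comparison of $(\sqrt{a_i}-\sqrt{b_i})^2$ against $(a_i-b_i)^2$ using the spectral bounds (A2), and the block-wise recombination $\min\{1,k\}(a+b)\le a+kb\le\max\{1,k\}(a+b)$ giving the stated constants. The only cosmetic difference is that you derive the scalar bounds by factoring $a_i-b_i=(\sqrt{a_i}-\sqrt{b_i})(\sqrt{a_i}+\sqrt{b_i})$ whereas the paper invokes the mean value theorem for $\sqrt{x}$ on $[\lambda_{\min},\lambda_{\max}]$ (these are equivalent), and your explicit caveat that exact commutativity is needed for the diagonal identities---with operator inequalities required otherwise---mirrors the paper's own framing of (A3).
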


\begin{proof}
To streamline notation, let us denote each client’s marginal descriptor by \(d^{(k)} = [\mu_k, \mathrm{vec}(\Sigma_k)]\). The squared Euclidean distance between two such descriptors is then
\[
\| d^{(1)} - d^{(2)} \|_2^2 = \|\mu_1 - \mu_2\|_2^2 + \|\mathrm{vec}(\Sigma_1 - \Sigma_2)\|_2^2.
\]
Since for any matrix \(A\) we have \(\|\mathrm{vec}(A)\|_2^2=\|A\|_F^2\), 
taking \(A=\Sigma_1-\Sigma_2\) yields Equation \ref{eq:delta}:
\[
\Delta^2=\| d^{(1)} - d^{(2)} \|_2^2 
= \|\mu_1 - \mu_2\|_2^2 + \|\Sigma_1 - \Sigma_2\|_F^2.
\]
For Gaussian distributions, the squared 2-Wasserstein distance has the closed form
\(
W_2^2
= \|\mu_1 - \mu_2\|_2^2 + B^2(\Sigma_1,\Sigma_2),
\)
where
\(
B^2(\Sigma_1,\Sigma_2)
= \operatorname{Tr}\!\left(\Sigma_1+\Sigma_2 
   - 2(\Sigma_1^{1/2}\Sigma_2\Sigma_1^{1/2})^{1/2}\right)
\) is the squared Bures distance \citep{villani2008optimal}. The mean terms in \eqref{eq:delta} and in \(W_2^2\) coincide, so the comparison reduces to relating \(B(\Sigma_1,\Sigma_2)\) with \(\|\Sigma_1-\Sigma_2\|_F\).

Assumption (A3) ensures that the covariance matrices commute (\(\Sigma_1\Sigma_2=\Sigma_2\Sigma_1\)), e.g., they are diagonal after the shared PCA. In this setting,
\[
\Sigma_1^{1/2}\Sigma_2\Sigma_1^{1/2}
= \Sigma_1^{1/2}\Sigma_2^{1/2}\Sigma_2^{1/2}\Sigma_1^{1/2}
= \bigl(\Sigma_1^{1/2}\Sigma_2^{1/2}\bigr)^2 \quad\Rightarrow\quad \bigl(\Sigma_1^{1/2}\Sigma_2\Sigma_1^{1/2}\bigr)^{1/2}
= \Sigma_1^{1/2}\Sigma_2^{1/2}.
\]
Substituting into the Bures expression gives
\[
B^2(\Sigma_1,\Sigma_2)
= \mathrm{Tr}(\Sigma_1)+\mathrm{Tr}(\Sigma_2)
-2\,\mathrm{Tr}\!\left(\bigl(\Sigma_1^{1/2}\Sigma_2\Sigma_1^{1/2}\bigr)^{1/2}\right)
= \mathrm{Tr}(\Sigma_1)+\mathrm{Tr}(\Sigma_2)-2\,\mathrm{Tr}(\Sigma_1^{1/2}\Sigma_2^{1/2}).
\]
But the right-hand side coincides with the expansion of the squared Frobenius norm
\[
\|\Sigma_1^{1/2}-\Sigma_2^{1/2}\|_F^2
= \mathrm{Tr}(\Sigma_1)+\mathrm{Tr}(\Sigma_2)-2\,\mathrm{Tr}(\Sigma_1^{1/2}\Sigma_2^{1/2}),
\]
so we conclude that
\[
B^2(\Sigma_1,\Sigma_2)
= \|\Sigma_1^{1/2}-\Sigma_2^{1/2}\|_F^2.
\]
Next, applying entrywise the mean value theorem for \(f(x)=\sqrt{x}\) on \([\lambda_{\min},\lambda_{\max}]\), yield the inequality
\[
\frac{1}{2\sqrt{\lambda_{\max}}}\,\|\Sigma_1-\Sigma_2\|_F
\;\le\;
B(\Sigma_1,\Sigma_2)
\;\le\;
\frac{1}{2\sqrt{\lambda_{\min}}}\,\|\Sigma_1-\Sigma_2\|_F.
\]
Finally, let \(a=\|\mu_1-\mu_2\|_2^2\) and \(b=\|\Sigma_1-\Sigma_2\|_F^2\), 
and write \(k\in[k_{\min},k_{\max}]\) with 
\(k_{\min}=1/(4\lambda_{\max})\) and \(k_{\max}=1/(4\lambda_{\min})\). 
Then the elementary inequality
\[
\min\{1,k\}(a+b)\;\le\; a+kb \;\le\; \max\{1,k\}(a+b)
\]
implies that
\(
c_-^2\,\Delta^2 \;\le\; W_2^2 \;\le\; c_+^2\,\Delta^2,
\)
with the constants \(c_-\) and \(c_+\) as stated.

\end{proof}

\paragraph{Empirical validation.} We empirically validate this guarantee using 44,850 client–round pairs from MNIST and CIFAR-10 under three levels of non-IID concept shift affecting \(P(Y\mid X)\) (see Appendix~\ref{app:anda} for protocol details). Table~\ref{tab:eps_R1} reports the absolute deviation \(\xi=|\|d^{(k_1)}-d^{(k_2)}\|_2-W_2\bigl(P(x^{(k_{1})},y^{(k_{1})}),P(x^{(k_{2})}, y^{(k_{2})})\bigr)|\). On MNIST, the worst-case error is below $1.1$, with a mean of $0.49 \pm 0.11$. On CIFAR-10, the worst-case reaches $3.5$, with a mean of $1.7 \pm 0.5$. These results confirm that the descriptor extractor satisfies Equation~\ref{eq:R1} across heterogeneous distribution shifts, while preserving client privacy.

\begin{table}[t]
  \centering
  \scriptsize
  \subfloat[\textbf{MNIST}]{
    \begin{tabular}{@{}crrrr@{}}
      \toprule
      \textbf{Non-IID Level} & \textbf{Max} & \textbf{Min} & \textbf{Mean} & \textbf{Std} \\
      \midrule
      3 & 1.060 & 0.194 & 0.526 & 0.106 \\
      5 & 1.028 & 0.159 & 0.479 & 0.113 \\
      7 & 1.016 & 0.136 & 0.458 & 0.108 \\
      \bottomrule
    \end{tabular}
  }\quad \quad \quad \quad \quad
  \subfloat[\textbf{CIFAR-10}]{
    \begin{tabular}{@{}crrrr@{}}
      \toprule
      \textbf{Non-IID Level} & \textbf{Max} & \textbf{Min} & \textbf{Mean} & \textbf{Std} \\
      \midrule
      3 & 2.816 & 0.00017 & 1.328 & 0.522 \\
      5 & 3.464 & 0.00135 & 1.744 & 0.551 \\
      7 & 3.429 & 0.00441 & 1.734 & 0.590 \\
      \bottomrule
    \end{tabular}
  }
  \vspace{5pt}
  \caption{Absolute error \(\xi\) between descriptor distance (2-Wasserstein) and the true inter-client distance under three non-IID levels for MNIST and CIFAR-10 under \(P(Y\!\mid\!X)\) concept shift.}
  \label{tab:eps_R1}
  \vspace{3pt}
\end{table}

\subsubsection{Label Agnosticism (R2)}\label{pg:satisfyR2}
In real-world deployments, test-time clients rarely possess reliable labels. To ensure applicability in this setting, Requirement~(R2) mandates that each descriptor include a sub-vector computable using features only. Our implementation naturally satisfies this through the structure of the moments computed in Step~\textit{S3}.

\begin{itemize}[leftmargin=*,nosep]
\item \textit{Training phase.} 
    Each client computes both the marginal moments \((\mu_x^{(k)}, \Sigma_x^{(k)})\) and class-conditional moments \(\{(\mu_u^{(k)}, \Sigma_u^{(k)})\}_{u=1}^{U}\), resulting in a descriptor that splits as follows:
    \[
          d^{(k)}
          =\bigl[
              \underbrace{
                \mu_x^{(k)},\Sigma_x^{(k)}
              }_{\displaystyle d'^{(k)}\in\mathbb{R}^{p}}
              ,
              \;
              \underbrace{
                \mu_{1}^{(k)},\Sigma_{1}^{(k)},
                \dots,
                \mu_{U}^{(k)},\Sigma_{U}^{(k)}
              }_{\displaystyle d^{\prime\prime (k)}\in\mathbb{R}^{L-p}}
            \bigr] ,
    \]
    where the sub-vector \(d^{\prime(k)}\) encodes information from features only and serves as the label-free component.

\item \textit{Test phase (labels unavailable).} 
    At test time, clients repeat Steps~\textit{S1}–\textit{S2} and execute only the label-free portion of Step~\textit{S3}, computing \((\mu_x^{(k)}, \Sigma_x^{(k)})\) from features alone. Since \(P(Y \mid X)\) cannot be estimated without labels, no class-conditional moments are computed. The resulting test-time descriptor is:
    \[
        d'^{(k)}_{t}:=\phi_\psi\bigl(x_{t}^{(k)},\mathbf 0\bigr)\in\mathbb{R}^{p},
    \]
    which satisfies the requirement for label-agnostic descriptor construction. When applying Step~\textit{S4}, the same coordinate-wise Laplace mechanism ensures that \(d_t^{\prime(k)}\) enjoys the same \((\varepsilon, \delta{=}0)\)-DP guarantee as the full descriptor.

\end{itemize}
Since the PCA projection is shared across clients (Step~\textit{S2}) and noise calibration in Step~\textit{S4} is data-independent, Euclidean distances between label-free descriptors—i.e., \(\|d^{\prime(k_1)} - d^{\prime(k_2)}\|_2\)—remain a valid proxy for the marginal Wasserstein distance between their respective \(P(X)\) distributions. This enables our extractor to match unseen, unlabeled clients at test time to the most similar training distributions, thereby fully satisfying Requirement~(R2).

\subsubsection{Compactness (R3)}\label{pg:satisfyR5}
Requirement~(R3) stipulates that descriptor extraction should incur only marginal computational and communication overhead relative to standard FL. Our implementation satisfies this on both fronts.

\begin{itemize}
\item \textit{Computation.} 
    The only non-trivial operation is computing a rank-\(l\) PCA on \(s^{\text{PCA}} = 200\) synthetic latent vectors of dimension \(z\) (Step~\textit{S2}). Using a standard SVD solver, the cost is \(O\!\bigl(\!\min(s^{\text{PCA}}z^{2},(s^{\text{PCA}})^{2}z)\bigr)\).
    In practice, \(z > s^{\text{PCA}}\), so the second term dominates, yielding roughly \(4 \times 10^4 z\) floating-point operations. For typical latent sizes (\(z \in [128, 2048]\)), this results in at most \(8.2 \times 10^7\) FLOPs—negligible compared to the cost of one local training epoch. For comparison, even a single forward pass with our smallest MNIST model exceeds \(6.5 \times 10^8\) FLOPs, while training the largest model on CIFAR-100 requires over \(6 \times 10^{11}\) FLOPs per epoch. Moment computation (Step~\textit{S3}) and noise addition (Step~\textit{S4}) are linear in descriptor size and thus negligible.
    
\item \textit{Communication.} 
    Each descriptor transmits \(d = 2(U+1)l\) floats, corresponding to the mean and (diagonal) covariance for the marginal distribution and each of the \(U\) classes. With \(l = 10\), this amounts to \(d = 220\) floats on MNIST (\(U = 10\)) and \(d = 2020\) on CIFAR-100 (\(U = 100\)). In contrast, the corresponding model update sizes range from 62,006 parameters (MNIST) to 6,775,140 (CIFAR-100), yielding a descriptor-to-model ratio of at most \(d/p \le 3.5 \times 10^{-3}\). This remains well below the threshold of \(10^{-2}\) imposed by the compactness requirement.
\end{itemize}

\subsection{Privacy Implications of Distribution Descriptors in \textsc{Flux}}
\label{app:privacy_implication}
Because \textsc{Flux} transmits client-side descriptors \(\{d^{(k)}\}_{k\in\mathcal K}\subset\mathbb{R}^L\)—which, by design, summarize each client’s data distribution (Requirement~\ref{req:R1})—an adversary could, in principle, combine them with model updates to mount stronger reconstruction or membership-inference attacks~\cite{memb_inf_attack,zariMIA2021,liPassiveMIA2022,data_rec_gan,DLG,gradinv}.  
Two properties mitigate this risk:
\begin{enumerate}[leftmargin=*, topsep=0pt, parsep=0pt, partopsep=0pt]
\item \textit{Many-to-one mapping.}  
      As defined in Section \ref{sec:descriptor_extractor}, the extractor \(\phi\colon\mathbb{R}^{s^{(k)}\times(z+u)}\!\to\!\mathbb{R}^L\) performs a severe dimensionality reduction (\(s^{(k)}\!\times\!(z+u)\gg L\)) and aggregates client-level statistics. So from an information-theoretic perspective, infinitely many distinct datasets \(\bigl(x^{(k)}, y^{(k)}\bigr)\) can yield the same descriptor $d^{(k)}$. Consequently, descriptors are non-invertible.
\item \textit{Differential-privacy wrapper.}  
      We can endow each descriptor with \((\varepsilon,\delta)\)-differential privacy (DP) guarantee at the \emph{sample}-level~\cite{dwork2006differential} by adding calibrated noise on the client-side. This ensure that the impact of any single sample remains bounded by the privacy budget $\epsilon$—without negatively affecting performance. Precisely, a mechanism $\mathcal{M}$ satisfies ($\epsilon$, $\delta$)-DP if, for any two neighbouring datasets $D$ and $D^{\prime}$ differing by at most one entry, and for any measurable subset of outputs $S$, the following holds:
    \begin{equation}
        \Pr[\mathcal{M}(D) \in S] \leq e^\epsilon \Pr[\mathcal{M}(D^{\prime}) \in S] + \delta
    \end{equation}
    where $\epsilon > 0$ is the privacy budget controlling the allowable difference between the probabilities, and $\delta \geq 0$ quantifies the probability of violating the bound.
\end{enumerate}

\textbf{Differential privacy integration.} $\;$
In our implementation, we perturb each coordinate of \(d^{(k)}\) with independent Laplace noise (\(\delta=0\))~\cite{dworkCalibratingNoiseSensitivity2006}:
\[
\eta_i\;\stackrel{\mathrm{iid}}{\sim}\;\mathrm{Laplace}\!\bigl(0,b_i\bigr),\qquad
b_i=\Delta_{1,i}/\varepsilon,
\]
where \(\Delta_{1,i}\) is the \(\ell_1\)-sensitivity of statistic \(g_i\). For a mean or standard-deviation coordinate we conservatively bound \(\Delta_{1,i}\le \tfrac{\mathrm{Range}(g_i)}{s^{(k)}}\). The released descriptor is therefore
\[
d^{(k)}
= 
[\,g_1,\ldots,g_d\,]^{\!\top} + \eta,
\quad
\eta\sim\mathrm{Laplace}\bigl(0,\mathrm{diag}(b_1,\dots,b_L)\bigr).
\]
For example, setting a common \(\varepsilon = 10.0\); with typical client sizes \(v^{(k)} > 300\), the added variance \(2(\Delta_{1,i}/\varepsilon)^2 \le 2.2\times10^{-5}\) is negligible compared to natural data variability, so inter-descriptor distances remain reliable without affecting \textsc{Flux} performance. Combined with the many‑to‑one nature of \(\phi_{\psi} : \mathbb{R}^{s^{(k)} \times (z+u)} \to \mathbb{R}^L\)—where infinitely many distinct datasets map to the same descriptor—this DP mechanism bounds any additional leakage to an \(\varepsilon\)-limited factor beyond what is already exposed by model parameters, thereby fully satisfying privacy guarantees.

\begin{table}[htbp]
\centering
\small
\renewcommand{\arraystretch}{1.2}
\setlength{\tabcolsep}{4pt}
\begin{tabular}{lcc}
\toprule
\textbf{$\epsilon$} & \textbf{Known Association} & \textbf{Test Phase} \\
\midrule
0.01 & 90.06 ± 1.80 & 57.46 ± 13.26 \\
0.1  & 92.19 ± 2.11 & 74.48 ± 12.25 \\
1    & 91.80 ± 2.49 & 93.80 ± 1.42  \\
10   & 92.49 ± 2.17 & 93.94 ± 1.40  \\
No DP & 92.86 ± 2.02 & 94.36 ± 1.33  \\
\bottomrule
\end{tabular}
\vspace{4pt}
\caption{
\textbf{Effect of Differential Privacy on \textsc{Flux} based on varying $\epsilon$ values}. Results are presented with Known Association and Test Phase conditions, showing accuracy as mean ± standard deviation.
}
\label{tab:dp_effect}
\end{table}

\textbf{Experimental validation.} $\;$
As shown in Table~\ref{tab:dp_effect}, we evaluated \textsc{Flux} under varying privacy budgets $\epsilon$ to assess the trade-off between privacy and performance. The results demonstrate that \textsc{Flux} maintains robust performance across commonly used privacy budgets ($1 \leq \epsilon \leq 10$), achieving comparable accuracy to the non-DP version. Only under extremely stringent privacy settings (e.g., $\epsilon = 0.01$ and $\epsilon = 0.1$), which are uncommon in practical applications, does \textsc{Flux} exhibit a noticeable decline in performance, particularly in the test phase. These results confirm that the proposed DP integration provides rigorous privacy guarantees while preserving the effectiveness of \textsc{Flux}, making it suitable for deployment in scenarios requiring strict regulatory compliance.

\subsection{Advantages of Latent Space Descriptors in Neural Networks} \label{app:advantages_of_latent}
We decided to extract data descriptors from the latent space of the trained model on the same dataset to leverage the inherent benefits of this representation. The latent space offers a reduced-dimensional representation, encapsulating only the most relevant features while eliminating redundant information. This dimensionality reduction enhances computational efficiency and facilitates downstream tasks without sacrificing the descriptive power of the features.

Furthermore, descriptors from the latent space reflect higher-level feature representations learned by the model. Neural networks are designed to prioritize task-relevant patterns, abstracting away low-level details in favor of semantically rich and domain-specific features. This makes latent space descriptors more informative and better aligned with the underlying classification task. Additionally, these representations are inherently resilient to noise and input variability, as the training process filters out irrelevant perturbations. This robustness ensures consistent and reliable descriptors, even under transformations or distortions in the input data. By aligning with the model’s learned domain-specific knowledge, latent space descriptors also provide a task-informed perspective that enhances their interpretability and relevance. These advantages collectively establish the latent space as a superior choice for extracting meaningful and efficient data descriptors, especially in scenarios where robustness, efficiency, and task-specific alignment are critical.

\subsection{Identification of Distribution Shifts} 
\label{app:identification_shifts}
Figure~\ref{fig:fig1} offers an intuitive view of the four canonical distribution shifts introduced in Section~\ref{Introduction}. Below we explain how the statistics captured by our descriptor disentangle them.

\begin{itemize}[noitemsep, topsep=0pt, parsep=0pt, partopsep=0pt]
    \item \textit{Feature distribution shift ($P(X)$)}. 
    This shift arises when the marginal distribution of features varies across clients. In our case, it is detected via statistics computed in a reduced latent space $\varphi(x)$, whose geometry satisfies Requirement~\ref{req:R1} (see Appendix~\ref{app:descriptor_extractor} for implementation details and justifications). Shifts in $P(X)$ alter the global mean and covariance of these latent representations. Since our descriptor includes $(\mu_x^{(k)}, \Sigma_x^{(k)})$, clients with similar $P(X)$ naturally map to nearby points in the descriptor space (see Feature-space mean symbol in Figure~\ref{fig:fig1}).

    \item \textit{Label distribution shift ($P(Y)$)}. 
    When label proportions differ across clients, the overall feature mean is skewed toward the predominant classes. These shifts are likewise reflected in \((\mu_x^{(k)},\Sigma_x^{(k)})\), allowing label-imbalanced clients to cluster separately. For instance, in a client with few instances of label 1 and many instances of label 2, the mean of $\varphi(x^{(k)})$ will be predominantly influenced by label 2, resulting in proximity to the average representation of label 2 in the latent space. 
    Figure~\ref{fig:fig1} shows how these means in feature space reveal distinct client clusters. 

    \item \textit{Concept shift ($P(Y|X)$)}. 
    This shift arises when different clients assign different labels to identical inputs. Such discrepancies are invisible to feature-only statistics, as the latent representations may be indistinguishable. Figure~\ref{fig:fig1} illustrates this with overlapping means between clients. To resolve this ambiguity, our descriptor includes class-conditional moments $\{(\mu_u^{(k)},\Sigma_u^{(k)})\}_{u=1}^{U}$, which enable separation of clients with divergent conditional distributions.   

    \item \textit{Concept shift ($P(X|Y)$)}.
    This shift occurs when inputs vary across clients for the same label—for example, the same digit rendered in different colors. In \textit{ANDA} (Appendix~\ref{app:anda}), MNIST digits may share the same label but differ in appearance between clients. These variations alter \((\mu_x^{(k)},\Sigma_x^{(k)})\), and—except under rare conditions of perfect latent symmetry—can be detected without label information. 
\end{itemize}

Because \(P(X,Y)=P(Y\mid X)P(X)\), capturing both the marginal moments of \(P(X)\) and the class-conditional moments of \(P(Y\mid X)\) is sufficient to characterize \emph{all} four shift types. At test time only the marginal component is available (labels are absent), so shifts that rely on \(P(Y\mid X)\) cannot be detected online—a limitation inherent to any label-agnostic method, yet fully compatible with our clustering protocol.

\section{Clustering Mechanics} \label{app:temp12}

\subsection{Dynamic Clustering Initiation} \label{app:clustering_trigger}
In this study, we extract data descriptors from the initial trained global model in the federation. Due to the inherent high heterogeneity among client data, a global model trained using traditional FedAvg typically exhibits limited accuracy. To address this, the clustering process must be initiated early during training. However, the quality of descriptors improves with a more accurate global model, as additional training rounds refine its representation of the underlying data. To balance these competing factors and ensure the utility of \textsc{Flux} in real-world applications, we dynamically determine the optimal time to initiate clustering during training.

To analyze the trade-offs involved, we analyze the impact of the training round at which clustering is initiated. Specifically, we tested \textsc{Flux} by varying the clustering round from 1 to 9 on the MNIST, FMNIST, and CIFAR-10 datasets, with the non-IID levels fixed as follows: \textit{Feature distribution shift} includes rotations ($0^\circ$, $180^\circ$) and three distinct colours; \textit{Label distribution shift} restricts the number of classes to 3 with a bank size of 5; and \textit{$P(X|Y)$ concept shift} applies augmentations limited to rotations ({$0^\circ$, $90^\circ$, $180^\circ$, $270^\circ$}) across 2 classes. As shown in Figure~\ref{fig:stop_round}, initiating clustering too early results in inadequate representations, while delaying it excessively reduces the time available for cluster-specific models to converge. Based on this analysis, we establish that clustering should occur no earlier than three training rounds and no later than 80\% of the total training rounds to allow sufficient convergence time for cluster-specific models.

To further optimize this process, we leverage the observation that the accuracy of the global model typically increases rapidly in early training rounds before plateauing with diminishing improvements. This behavior is evident in the derivative of performance metrics, such as accuracy, as shown in Figure~\ref{fig:trend_fedavg}. Consequently, we dynamically initiate clustering when improvements in the global model’s accuracy become negligible, formalized as:

\begin{equation}
C(r) =
\begin{cases}
1 & \text{if } r \geq 3 \ \land \left( \min \left( \frac{dA}{dr}|_{3 \leq r{\prime} \leq r} \right) < T \ \lor \ r \geq 0.8R \right) \\
0 & \text{otherwise}
\end{cases}
\label{eq:cluster_init}
\end{equation}

Here,  $A$  represents accuracy,  $R$  is the total number of training rounds, and  $T$  is a threshold determining negligible improvement. Through experimentation, we identified  $T = 0.06$  as an effective threshold, as varying  $T$  did not lead to significant differences in performance, as shown in Figure~\ref{fig:threshold_detection}.

This dynamic approach ensures that clustering is initiated at a point where the global model is sufficiently accurate to extract meaningful descriptors while leaving ample time for the convergence of cluster-specific models. 

\begin{figure*}[h]
  \centering
  \includegraphics[width=\textwidth]{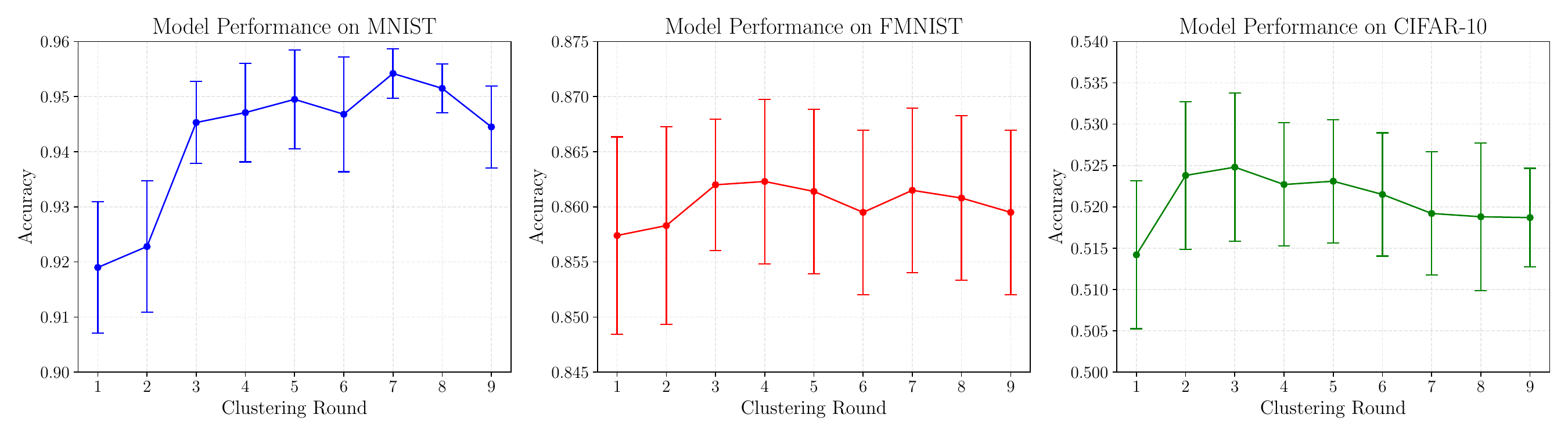}
  \vspace{-15pt} 
  \caption{\textbf{Impact of clustering initialization round on model performance.} Mean accuracy and standard error are shown across $P(X)$, $P(Y)$, and $P(X|Y)$ shifts for MNIST, FMNIST, and CIFAR-10 datasets. Initiating clustering too early results in poor representations, while initiating too late limits convergence time for cluster-specific models.}
  \label{fig:stop_round}
  \vskip -0.05in   
\end{figure*}

\begin{figure*}[h]
  \centering
  \includegraphics[width=\textwidth]{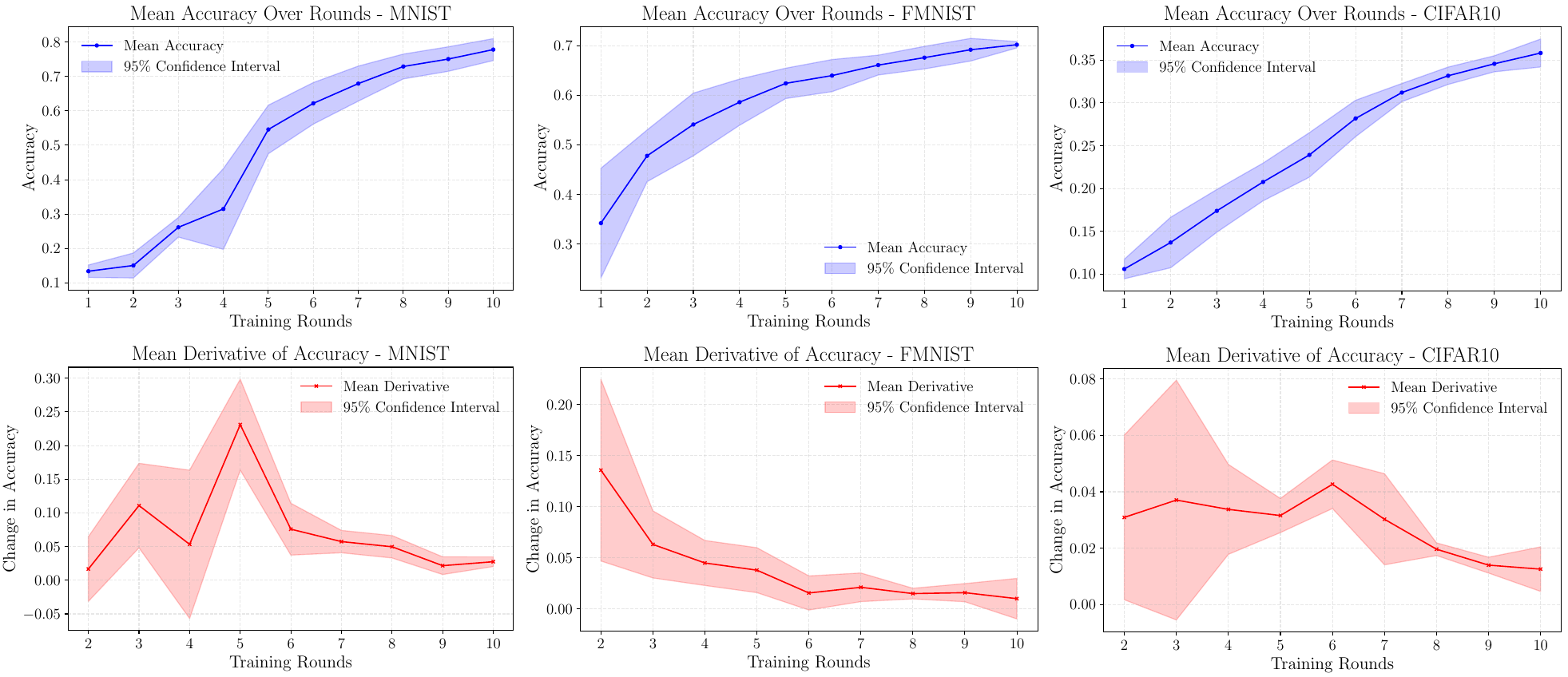}
  \vspace{-15pt} 
  \caption{\textbf{Accuracy trends and their derivative during FedAvg training.} The figure shows the mean values and $95\%$ confidence intervals across $P(X)$, $P(X|Y)$, and $P(Y)$ shifts for MNIST, FMNIST, and CIFAR-10 datasets. The derivative highlights the diminishing accuracy improvements after the initial training rounds.}
  \label{fig:trend_fedavg}
  \vskip -0.05in   
\end{figure*}

\begin{figure*}[h]
  \centering
  \includegraphics[width=0.7\textwidth]{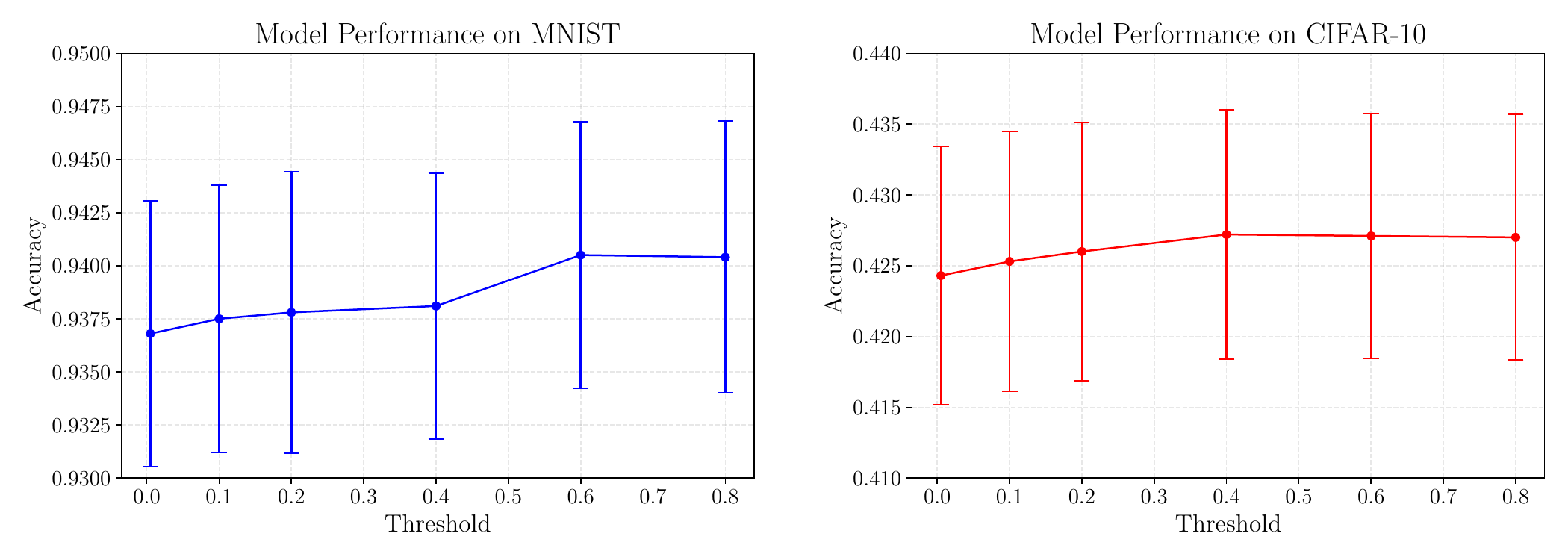}
  \vspace{-5pt} 
  \caption{\textbf{Sensitivity analysis of threshold \(T\) for clustering initiation.} The plot shows mean accuracy and standard error across $P(X)$, $P(Y)$, and $P(X|Y)$ shifts for MNIST and CIFAR-10 datasets. Results demonstrate that varying \(T\) does not significantly affect performance.}
  \label{fig:threshold_detection}
\end{figure*}

\subsection{Unsupervised Clustering Setup}
\label{app:uc}
For the unsupervised clustering $\mathcal{U}$ in \textsc{Flux}, we design an adaptive, density-based method that extends DBSCAN by automatically estimating $\epsilon$ from the data’s density distribution and robustly handling outlier clients. To compute the radius parameter $\epsilon$ in DBSCAN, we first calculate the distances to the second-nearest neighbors for all data points. These distances are sorted in ascending order, forming a curve that reflects the density distribution of the data. To determine the optimal $\epsilon$, we identify the “elbow point” on this curve which marks the transition from dense to sparse regions in the data, providing a natural boundary for clustering\footnote{https://github.com/arvkevi/kneed}. To further refine the $\epsilon$ value and adjust the sensitivity of the clustering, we scale the elbow point distance by a scaling factor. This scaling ensures that the resulting $\epsilon$ is appropriately calibrated to the characteristics of the dataset. In our experiments, we set the parameter $\epsilon$ to 1.0 (indicating no scaling) for the MNIST, CIFAR-10, CheXpert, and Office-Home datasets. For the FMNIST and CIFAR-100, $\epsilon$ was slightly reduced to 0.95 and 0.98, as we observed a lower variability in the latent representation. Using the dynamically computed $\epsilon$, DBSCAN is executed with a minimum sample threshold of 2, allowing the identification of clusters with as few as two points. Noise points, which are not part of any cluster, are reassigned to individual clusters to ensure that all clients are represented in the analysis. This approach preserves the integrity of the clustering while accommodating the unique characteristics of all clients' data.



For experiments with \textsc{Flux}-prior, we use the K-means clustering algorithm, leveraging prior knowledge of the number of distributions $M$, which is used as the number of clusters $K$.

\subsection{Empirical Illustration of Clustering} \label{app:clustering_mechanism}
To illustrate the clustering process in \textsc{Flux}, we provide examples on two datasets (MNIST and CIFAR-100) under different levels of heterogeneity.  
Figure~\ref{fig:pca_clustering} visualizes client descriptors projected into a 2D space via PCA (for illustration only) across three heterogeneity levels ($M=3,4,5$). In all cases, clients with similar data distributions cluster together in descriptor space, reflecting the latent distributional structure of the datasets and facilitating the grouping process. The figure also shows, with different colors, the cluster assignments produced by our unsupervised algorithm.  

It is important to note that clustering is performed in the full descriptor space, not in two dimensions. This explains why, for instance, in CIFAR-100 with $M=5$, clusters 1, 2, and 3 are not visually separable in the PCA projection, although clients are correctly assigned in the higher-dimensional space. Overall, these examples confirm that the descriptors capture essential distributional properties, enabling \textsc{Flux} to form accurate clusters even under substantial heterogeneity.
\begin{figure}[h]
\centering
\includegraphics[width=\linewidth]{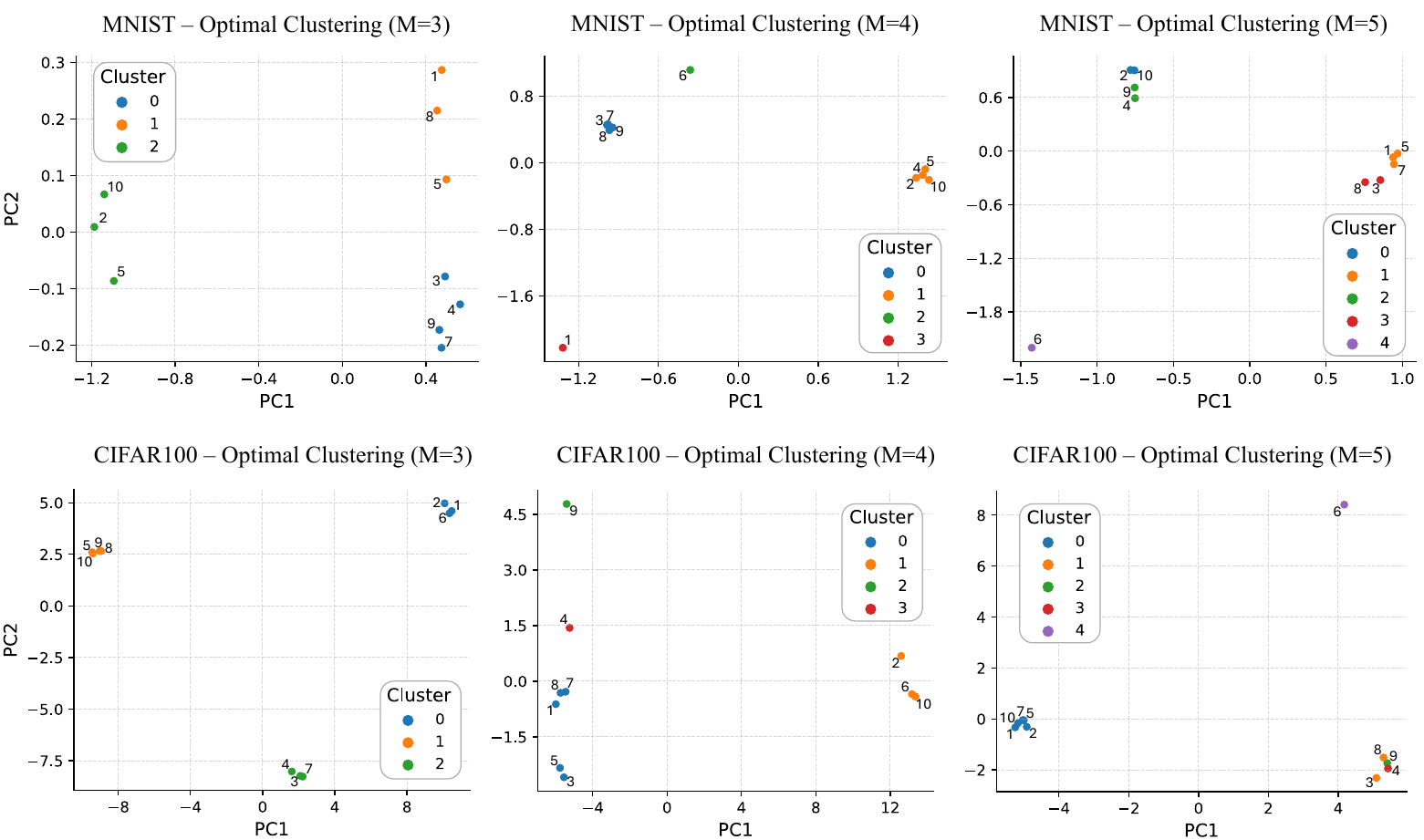}
\caption{Visualization of client descriptors projected into 2D space for MNIST and CIFAR-100 under different numbers of underlying clusters ($M=3,4,5$). Clients with similar data distributions are mapped close to each other, confirming the effectiveness of \textsc{Flux} in capturing latent heterogeneity.}
\label{fig:pca_clustering}
\end{figure}

\begin{wraptable}{r}{0.385\textwidth} 
  \vspace{-15pt} 
  \captionsetup{skip=4pt} 
  \centering
  \small
  \begin{tabular}{l c}
    \toprule
    \textbf{Method} & \textbf{Accuracy (\%)} \\
    \midrule
    FedAvg     & 71.9 $\pm$ 6.0 \\
    IFCA       & 35.2 $\pm$ 6.2 \\
    FedRC      & 75.2 $\pm$ 4.3 \\
    FedEM      & 75.0 $\pm$ 4.2 \\
    FedSEM     & 66.0 $\pm$ 6.0 \\
    FedDrift   & 54.3 $\pm$ 7.7 \\
    CFL        & 75.5 $\pm$ 4.0 \\
    pFedMe     & 55.2 $\pm$ 6.9 \\
    APFL       & 72.8 $\pm$ 5.2 \\
    ATP        & 74.6 $\pm$ 8.0 \\
    \midrule
    \textsc{Flux}-soft & \textbf{86.5 $\pm$ 1.3} \\
    \textsc{Flux}      & \textbf{89.5 $\pm$ 4.9} \\
    \bottomrule
  \end{tabular}
  \caption{Average accuracy on MNIST across shifts and heterogeneity levels.}
  \label{tab:flux_soft}
  \vspace{-20pt}
\end{wraptable}
\paragraph{Extension to sparse soft assignments.} 
To demonstrate the flexibility of \textsc{Flux} descriptors beyond hard clustering, we evaluate a sparse soft variant (\textsc{Flux}-soft). In this setting, hard clustering is replaced with a soft-weighted aggregation scheme: each client receives a distribution-specific update by weighting contributions from other clients according to descriptor similarity with its own distribution. Table~\ref{tab:flux_soft} reports averaged results on MNIST, covering all four types of distribution shifts and three levels of heterogeneity for each. The results show that \textsc{Flux}-soft substantially outperforms all baselines—improving over the best-performing baseline (CFL) by nearly 11 percentage points—while approaching the accuracy of hard \textsc{Flux}. As expected, in our experimental setting hard clustering (\textsc{Flux}) excels, since the data exhibit clear distributional boundaries; however, \textsc{Flux}-soft may be advantageous in scenarios with overlapping or noisy distributions that require a more client-centric approach.  

\paragraph{Extension to dynamic scenarios.} $\;$
Thanks to its modular design, \textsc{Flux} can be seamlessly extended to non-stationary settings by periodically updating descriptors and reapplying the clustering process. In this dynamic variant, re-clustering is triggered whenever a distributional drift is detected or when clients join or leave the federation. To avoid retraining models from scratch after each re-clustering, each newly formed cluster is initialized with the model from the previously existing cluster whose centroid is closest in descriptor space. This association is obtained by matching the current cluster centroids to the most similar ones from the prior clustering phase. Such a warm-starting strategy enables efficient adaptation to changing distributions while preserving previously learned knowledge.  

We validate this approach on MNIST under a dynamic setting with two distribution drifts during training and one during testing (Table~\ref{tab:flux_soft}), while all other settings match those in Table~\ref{tab:dataset_comparison}. As expected, \textsc{Flux} achieves a substantially larger improvement over all baselines, since none of the existing methods are explicitly designed to handle non-stationary scenarios (except for \textsc{Flux}-soft). These results demonstrate that \textsc{Flux} remains highly effective under dynamic conditions, outperforming all baselines by over 14 percentage points compared to the best-performing method, thus showing strong resilience and robustness in realistic FL deployments.


\section{Baselines \& Comparative Complexity} \label{app:temp13}

\subsection{Overview of Baseline Algorithms} \label{app:baselines}
In our experiments, we benchmarked nine recent baselines designed to address distribution shifts during training or test-time. We implemented six CFL-baed baselines, as our method falls in this category, that employ diverse clustering principles in CFL, spanning both hard-CFL approaches \cite{ghosh_2022_ifca, long2023multi, jothimurugesan_2023_feddrift, sattler_2021_CFL}, soft-CFL approaches \cite{guo2024fedrc, marfoq_2021_FedEM}, metric-based clustering \cite{ghosh_2022_ifca, jothimurugesan_2023_feddrift, marfoq_2021_FedEM} and parameter-based clustering \cite{long2023multi, sattler_2021_CFL, guo2024fedrc}. Additionally we adoted two recent state-of-the-art methods in PFL, and one TTA-FL method.  

\begin{itemize}[leftmargin=15pt, topsep=0pt, parsep=0pt, partopsep=0pt]
    \item \textit{IFCA \cite{ghosh_2022_ifca}:} Assuming the knowledge of the number of clusters, the server initializes and broadcasts $M$ models to all clients. Each client evaluates the models locally to identify the one minimizing its empirical loss. Clients then train the selected model, and the server aggregates updated models from clients assigned to the same cluster.
    
    \item \textit{FedRC \cite{guo2024fedrc}:} A soft-CFL method that assumes prior knowledge of the number of clusters $M$. The server broadcasts $M$ models to each client $k$, which first optimizes a weight vector $\pi^{(k)} \in \mathbb{R}^M$, determining the contribution of each cluster model, before refining all $M$ models. This iterative process alternates between optimizing the client-specific weights and the cluster models.

    \item \textit{FedEM \cite{marfoq_2021_FedEM}:} FedEM assumes the number of distributions $M$ is known. It initializes $M$ cluster models, broadcasts them to all clients, and allows clients to learn a weighting vector $\pi^{(k)} \in \mathbb{R}^M$ to determine the relevance of each model for their data. Using alternating optimization, clients jointly update their weight vectors and the cluster models.
    
    \item \textit{FeSEM \cite{long2023multi}:} This parameter-based method assumes $M$ clusters are known and initializes $M$ cluster models randomly. After local training, clients are assigned to the cluster with the closest model parameters. The server aggregates updated models for each cluster and distributes only the assigned cluster model back to the corresponding clients.
    
    \item \textit{FedDrift \cite{jothimurugesan_2023_feddrift}:} A dynamic clustering framework that begins with one model per client. Based on loss similarities, client models are iteratively merged into clusters. The server continuously evaluates all models on all clients, assigning each to the closest model. New cluster models are created when a significant increase in client loss is detected, ensuring adaptive clustering throughout training.

    \item \textit{CFL \cite{sattler_2021_CFL}:} This gradient-based hard-CFL method clusters clients based on the cosine similarity of their gradients. It operates under the assumption that at convergence, client gradients should approach zero. Clients with non-zero gradients beyond a threshold are split into separate clusters to reduce gradient differences. The process repeats iteratively until no further clusters are formed.

    \item \textit{pFedMe \cite{pfedme}:} A personalized FL method based on Moreau envelopes that formulates training as a bilevel optimization problem. Each client solves the local subproblem \(\min_{\phi^{(k)}}\;\mathcal{L}^{(k)}\bigl(\phi^{(k)}\bigr)+\frac{\lambda}{2}\|\phi^{(k)} - \theta\|^2\) via \(C\) gradient steps, then updates its copy of the global model through a proximal step before sending it to the server. This decoupling enables adaptive personalization with controllable computation and strong convergence guarantees.

    \item \textit{APFL \cite{apfl}:} APFL augments each client with a global copy $\theta$ and a local model $\phi^{(k)}$, combined into a personalized model $\bar\phi^{(k)} = \alpha_k\,\phi^{(k)} + (1 - \alpha_k)\,\theta$. During each round, clients update both $\theta$ and $\phi^{(k)}$ via local SGD, adaptively tuning $\alpha_k$ to balance shared and local knowledge. Only $\theta$ is sent to the server for aggregation, while $\phi^{(k)}$ and $\bar\phi^{(k)}$ remain local. This design enables effective personalization under non-IID data with minimal additional overhead.

    \item \textit{ATP \cite{atp}:} ATP learns a vector of module‐wise adaptation rates  \(\alpha = \{\alpha^{[l]}\}_{l=1}^A\), where \(A\) is the number of modules, by alternating unsupervised entropy minimization and supervised refinement on labeled source clients, then aggregates \(\alpha\) via FedAvg. At test time, each unlabeled client downloads the global model \(\theta\) and the full vector \(\alpha\), performs unsupervised entropy‐based adaptation on its batch (or online stream) using \(\alpha\), and then predicts with the adapted model. Only \(\theta\) and the low‐dimensional \(\alpha\) are communicated, enabling efficient personalization under distribution shifts.
    
\end{itemize}

\subsection{Comparative Analysis of Heterogeneity and Computational Costs}
\label{app:ComparativeAnalysis}

\begin{table*}[t]
\vskip -0.05in
\begin{center}
\renewcommand{\arraystretch}{1.5}
\begin{tiny}
\begin{sc}
\begin{tabular}{@{}lcccc@{}}
\toprule
\textbf{Algorithm}
& {\fontsize{8}{10}\selectfont\makecell{Supported types of \\heterogeneity}} 
& {\fontsize{8}{10}\selectfont\makecell{Communication\\cost}}
& {\fontsize{8}{10}\selectfont\makecell{Computational\\cost (Server)}}
& {\fontsize{8}{10}\selectfont\makecell{Computational\\cost (Client)}} \\
\midrule
FedAvg  & N/A &  $p N_{\mathrm{byte}}$     &   $O(K p)$ & $O(s^{(k)}  F^{\mathrm{train}}) $  \\ 
IFCA  & $P(X)$ &   $M  p  N_{\mathrm{byte}}$   &  $O(K  p)$  &$O(s^{(k)}  F^{\mathrm{train}}\!+\!M  s^{(k)}  F^{\mathrm{inf}})$  \\    
FeSEM  & Natural &   $p  N_{\mathrm{byte}}$    & $O(M  K  p)$ & $O(s^{(k)}  F^{\mathrm{train}}) $ \\    
FedEM  & $P(Y)$ &  $M  p  N_{\mathrm{byte}} $    &  $O(M  K  p)$  & $O(M  s^{(k)}  F^{\mathrm{train}}\!+\!M  s^{(k)}  F^{\mathrm{inf}})$ \\ 
FedRC  & $P(X), P(Y),  P(Y|X)$ &  $M  p  N_{\mathrm{byte}}  $    & $O(M  K  p)$   & $O(M  s^{(k)}  F^{\mathrm{train}}\!+\!M  s^{(k)}  F^{\mathrm{inf}})$  \\
FedDrift  &$P(Y|X)$   &   $M  p  N_{\mathrm{byte}}$   & $O(K  p)$   &$O(s^{(k)}  
F^{\mathrm{train}}\!+\!M  s^{(k)}  F^{\mathrm{inf}})$  \\
CFL  & $P(Y|X)$ &   $p N_{\mathrm{byte}}$   & $O(K^2  p)$   &  $O(s^{(k)}  F^{\mathrm{train}}) $\\    
pFedMe & $P(Y)$ & $p N_{\mathrm{byte}}$ & $O(K  p)$ & $O( (C+1)s^{(k)}  F^{\mathrm{train}}) $ \\
APFL & $P(X), P(Y)$ & $p N_{\mathrm{byte}}$ & $O(K  p)$ & $O(2 s^{(k)}  F^{\mathrm{train}}) $ \\
ATP & $P(X),P(Y)$ & $(p\!+\!A)N_{\mathrm{byte}}$ & $O(K (p\!+\!A))$ & $O(2 s^{(k)}  F^{\mathrm{train}}) $\\
\textbf{\textsc{Flux}} & $P(X),\!P(Y),\!P(X|Y),\!P(Y|X)$ 
              & $(p\!+\!L)N_{\mathrm{byte}}$ 
              & \makecell{$O(K  p)\!+\!\textcolor{gray}{O(\mathcal{U})}$}
              & {$O(s^{(k)}  F^{\mathrm{train}})\!+\!
              \textcolor{gray}{O(\xi_{v \rightarrow l})}$} \\ 
\bottomrule
\end{tabular}
\end{sc}
\end{tiny}
\end{center}
\vspace{-7pt}
\caption{\textbf{Extended comparison table for heterogeneity and cost among \textsc{Flux} and FL baselines}.
$P(X)$: Feature distribution shift.
$P(Y)$: Label distribution shift.
$P(X|Y)$: Concept shift (same features, different label).
$P(Y|X)$: Concept shift (same label, different features).
$N_{\mathrm{byte}}$: Number of bytes per parameters. 
$M$: Number of clusters of the current round.
$K$: Number of clients.
$L$: Length of the descriptors.
$C$: Number of gradient steps for local subproblem in pFedMe.
$A$: Number of modules in ATP.
$p$: Number of model parameters.
$v$: Latent representation dimension ($f_e: \mathbb{R}^{z} \rightarrow \mathbb{R}^v$).
$s^{(k)}$: Number of sample for client $k$.
$F^{\mathrm{train}}$: Computational cost forward pass and back-propagation.
$F^{\mathrm{inf}}$: Computational cost forward pass.
$\mathcal{U}$: Clustering algorithm (Equation \ref{eq:unsupervised_clustering}).
$\xi_{v \rightarrow l}$: Dimensionality reduction (Equation \ref{eq:compact_representation}).
}
\label{tab:theoretical_analysis_costs}
\vskip -0.1in
\end{table*}
This section provides a comparative overview of the types of data heterogeneity addressed by the evaluated baselines, followed by a theoretical analysis of the communication and computational costs of \textsc{Flux} and all implemented baselines. 

\paragraph{Supported types of heterogeneity.}
Table \ref{tab:theoretical_analysis_costs} outlines the types of heterogeneity each method claims to support—or has been empirically evaluated on real dataset—including feature distribution shift $P(X)$, the label distribution shift $P(Y)$, or the concept distribution shift $P(X|Y)$ and $P(Y|X)$. The results indicate that most of the baselines were focused on solving only a single form of heterogeneity (e.g., IFCA, FeSEM, FedEM, FedDrift, pFedMe), limiting their applicability in real-world scenarios where the exact nature of heterogeneity is commonly unknown a priori. Notably, FedRC adopted feature distribution shifts $P(X)$, label distribution shifts $P(Y)$, and concept shift $P(Y|X)$, which closely aligns with the assumptions in this work.

\paragraph{Computational costs.}
Table~\ref{tab:theoretical_analysis_costs} presents a theoretical analysis of the communication and computational overhead introduced by \textsc{Flux}, in comparison to standard FL methods and the implemented baselines. We focus on the primary sources of system overhead in a single FL round, i.e.,  \emph{communication cost per client} (in bytes) and the \emph{computational costs} on both the server and the clients.
\begin{itemize}[leftmargin=15pt, topsep=0pt, parsep=0pt, partopsep=0pt]
    \item \textit{Communication cost:} \textsc{Flux} introduces minimal additional overhead relative to FedAvg. The only modification during communication is the transmission of a client descriptor of length \(L\), which is negligible compared to the model update size \(p\). As shown in Section~\ref{pg:satisfyR5}, our implementation satisfies Requirement~\ref{req:R3} with a descriptor-to-model ratio of \(L/p \leq 3.5 \times 10^{-3}\).
    \item \textit{Computational cost (server):} the unsupervised clustering procedure (Appendix~\ref{app:uc}) operates on client descriptors and introduces a computational complexity of \(O(L \log L)\), which remains negligible compared to the \(O(Kp)\) cost of aggregating model updates from \(K\) clients.
    \item \textit{Computational cost (client):} the descriptor extraction includes a dimensionality reduction step \(\xi_{v \rightarrow l}\) (Equation~\ref{eq:compact_representation}) with complexity \(O(s^2v)\) (see Section \ref{pg:satisfyR5} for full justification), where \(v\) is the latent dimension and \(s\) is the number of local samples. For typical values of \(v \in [128, 2048]\), this results in a computational cost of at most \(8.2 \times 10^7\) FLOPs—negligible when compared to the cost of local training. For reference, even a single forward pass (no backward or optimization) using our smallest model on MNIST exceeds \(6.5 \times 10^8\) FLOPs, while a full epoch with our largest model on CIFAR-100 exceeds \(6 \times 10^{11}\) FLOPs.
\end{itemize}

In contrast, several baselines introduce substantial overhead. CFL methods such as IFCA, FedEM, FedRC, and FedDrift require each client to evaluate and/or train all \(M\) cluster models locally, significantly increasing both communication (\(\times M\)) and computation costs. PFL and TTA-FL methods such as pFedMe, APFL, and ATP involve additional optimization steps—at minimum doubling the client-side compute. These overheads may render these methods impractical in cross-device FL scenarios. While methods like CFL and FeSEM avoid extra communication or client-side computation, they shift the burden to the server. These methods demand increased server-side computations, which can become a bottleneck in scenarios with large models or a substantial number of clients, potentially hindering scalability and performance.

Overall, this comparison highlights the efficiency of \textsc{Flux} and its practical suitability for large-scale, cross-device FL deployments.

\subsection{Comparison with Descriptor-based Methods}  \label{app:comparison_HACCS}
We further compare \textsc{Flux} with HACCS~\cite{wolfrathHACCS2022}, the closest related CFL method that also employs descriptor-like representations. For fairness, HACCS is evaluated without differential privacy, even though its design requires it. 
Table~\ref{tab:comparison_haccs} reports results on MNIST, FMNIST, and CIFAR-10 under both the \textit{known association} and \textit{test phase} conditions for FedAvg, FedDrift, HACCS, and \textsc{Flux}, while the other baselines are reported in Tables~\ref{tab:mnist_low_px}--\ref{tab:cifar_high_pxy}.

The results highlight three key insights. First, HACCS matches \textsc{Flux} only under the \textit{known association} condition for label distribution shifts ($P(Y)$), where simple label histograms suffice and no semantic modelling is required. In these cases, even loss-based methods such as FedDrift achieve competitive results. However, HACCS’s reliance on label and pixel-value histograms prevents it from capturing semantic and structural properties of the data: pixel statistics cannot discriminate subtle but task-relevant variations in the input distributions. As a result, HACCS underperforms \textsc{Flux} by a wide margin on $P(X)$, $P(Y|X)$, and $P(X|Y)$ shifts. Second, HACCS requires label information at inference time to build histograms and assign clusters. This prevents test-time adaptation to unseen clients and explains the sharp performance drop in the \textit{test phase}, where HACCS falls below FedAvg on CIFAR-10 and FMNIST, and far behind both FedDrift and \textsc{Flux}. By contrast, \textsc{Flux} produces label-agnostic descriptors at inference, enabling robust cluster assignments even in the absence of labels. Third, unlike HACCS, which generates large and interpretable histograms whose size grows with the number of labels and bins, \textsc{Flux} uses compact, fixed-size, non-interpretable descriptors. This design ensures scalability and communication efficiency, while reducing privacy leakage risk since descriptors cannot directly reveal label distributions. Overall, these results confirm that while HACCS is the closest related work exploring descriptor-based clustering, its practical performance is poor—even compared to existing baselines such as FedDrift. In contrast, \textsc{Flux} consistently provides a scalable, robust, privacy-preserving, and generalizable solution for clustered FL. 
\begin{table}[htbp]
\scriptsize
\centering
\renewcommand{\arraystretch}{1.2}
\setlength{\tabcolsep}{3pt}
\resizebox{0.7\textwidth}{!}{%
\begin{tabular}{llcccc}
\toprule
\textbf{Dataset} & \textbf{Method} & \textbf{P(X)} & \textbf{P(Y)} & \textbf{P(Y|X)} & \textbf{P(X|Y)} \\
\midrule
\multirow{4}{*}{MNIST (Known A.)} 
& FedAvg   & 77.8 ± 2.1 & 91.9 ± 1.4 & 66.8 ± 1.4 & 87.0 ± 2.0 \\
& FedDrift & \textcolor{blue}{94.1 ± 1.8} & \textcolor{blue}{96.4 ± 0.4} & \textcolor{blue}{81.1 ± 4.2} & \textbf{92.7 ± 1.7} \\
& HACCS    & 77.5 ± 5.2 & 96.2 ± 0.2 & 79.0 ± 4.5 & 88.4 ± 1.4 \\
& \textsc{Flux} & \textbf{95.5 ± 0.5} & \textbf{96.8 ± 0.4} & \textbf{85.1 ± 4.0} & \textcolor{blue}{92.4 ± 2.4} \\
\cmidrule(lr){2-6}
\multirow{4}{*}{MNIST (Test Phase)} 
& FedAvg   & \textcolor{blue}{77.8 ± 2.1} & \textcolor{blue}{91.9 ± 1.4} & N/A & \textcolor{blue}{87.0 ± 2.0} \\
& FedDrift & 47.7 ± 3.9 & 71.8 ± 3.0 & N/A & 75.9 ± 3.1 \\
& HACCS    & 59.7 ± 9.5 & 69.3 ± 6.1 & N/A & 77.0 ± 4.8 \\
& \textsc{Flux} & \textbf{95.0 ± 1.5} & \textbf{96.1 ± 1.2} & N/A & \textbf{90.8 ± 2.4} \\
\midrule
\multirow{4}{*}{FMNIST (Known A.)} 
& FedAvg   & 61.6 ± 2.4 & 72.8 ± 2.4 & 55.4 ± 2.3 & 72.0 ± 1.8 \\
& FedDrift & \textbf{79.9 ± 0.7} & \textbf{86.0 ± 1.8} & \textbf{74.5 ± 2.7} & \textbf{82.5 ± 1.6} \\
& HACCS    & 68.5 ± 6.1 & 85.2 ± 2.0 & 66.0 ± 3.0 & 77.8 ± 3.2 \\
& \textsc{Flux} & \textcolor{blue}{77.6 ± 1.8} & \textcolor{blue}{85.9 ± 1.9} & \textcolor{blue}{72.2 ± 3.1} & \textcolor{blue}{81.9 ± 2.3} \\
\cmidrule(lr){2-6}
\multirow{4}{*}{FMNIST (Test Phase)} 
& FedAvg   & \textcolor{blue}{61.6 ± 2.4} & \textcolor{blue}{72.8 ± 2.4} & N/A & \textcolor{blue}{72.0 ± 1.8} \\
& FedDrift & 30.0 ± 1.5 & 61.4 ± 3.1 & N/A & 61.9 ± 2.3 \\
& HACCS    & 41.6 ± 6.2 & 62.5 ± 3.5 & N/A & 60.8 ± 2.8 \\
& \textsc{Flux} & \textbf{77.0 ± 2.2} & \textbf{85.7 ± 1.8} & N/A & \textbf{81.0 ± 2.8} \\
\midrule
\multirow{4}{*}{CIFAR-10 (Known A.)} 
& FedAvg   & 22.1 ± 1.2 & 37.4 ± 2.9 & 28.0 ± 1.0 & 36.0 ± 1.1 \\
& FedDrift & \textcolor{blue}{25.2 ± 1.9} & \textcolor{blue}{49.0 ± 3.4} & \textcolor{blue}{29.0 ± 1.0} & \textcolor{blue}{37.0 ± 1.1} \\
& HACCS    & 23.2 ± 1.6 & 45.4 ± 1.3 & 24.8 ± 1.3 & 33.4 ± 2.0 \\
& \textsc{Flux} & \textbf{33.3 ± 0.9} & \textbf{50.4 ± 2.9} & \textbf{31.7 ± 1.3} & \textbf{39.1 ± 1.8} \\
\cmidrule(lr){2-6}
\multirow{4}{*}{CIFAR-10 (Test Phase)} 
& FedAvg   & 22.1 ± 1.2 & \textcolor{blue}{37.4 ± 2.9} & N/A & 36.0 ± 1.1 \\
& FedDrift & \textcolor{blue}{24.0 ± 1.3} & 35.4 ± 2.8 & N/A & \textbf{36.8 ± 1.3} \\
& HACCS    & 18.9 ± 1.5 & 32.3 ± 3.1 & N/A & 30.1 ± 1.1 \\
& \textsc{Flux} & \textbf{33.3 ± 1.0} & \textbf{46.2 ± 4.0} & N/A & \textcolor{blue}{36.7 ± 2.1} \\
\bottomrule
\end{tabular}}
\vspace{5pt}
\caption{\textbf{Per-shift comparison of \textsc{Flux} with HACCS, FedAvg, and FedDrift on MNIST, CIFAR-10, and FMNIST.} Results are shown for each type of distribution shift ($P(X)$, $P(Y)$, $P(Y|X)$, $P(X|Y)$) under both \textit{known association} and \textit{test phase} conditions. }
\label{tab:comparison_haccs}
\vspace{-10pt}
\end{table}

\section{Additional Experiment Results} \label{app:add_exps}

\subsection{Scaling Data Heterogeneity Types and Levels}
\label{app:scaling_iid_results}

We evaluate the effects of data heterogeneity using four non-IID dataset types across 10 clients, generated with \textit{ANDA} at eight distinct levels of heterogeneity. The detailed configurations for each type of data distribution shift are summarized in Table~\ref{tab:hetero_setting_8} and elaborated below:

\begin{itemize}[noitemsep, topsep=0pt, parsep=0pt, partopsep=0pt]
    \item \textbf{Feature distribution shift ($P(X)$):} Datasets are augmented with rotations and color transformations. Each client is assigned a specific augmentation pattern applied to all local data points (images). The non-IID level determines the number of augmentation choices per client. For instance, at Level 5, a client selects one rotation from \{$0^\circ$, $180^\circ$\} and one color from \{Red, Blue, Green\} for all images. At Levels 1 to 4, the "Original" color indicates no color transformation, preserving the original image channels.
    
    \item \textbf{Label distribution shift ($P(Y)$):} Datasets retain data points only for specific classes. For example, at Level 8, each client keeps data points from three classes while discarding the rest. Starting from Level 1, with 10 total classes (e.g., MNIST, CIFAR-10, FMNIST), we define a class selection bank of size 5 (e.g., \{[0,2,4], [1,3,9], [3,4,5], [5,6,7], [6,8,9]\} at Level 8) to limit the clustering complexity. Clients select subsets from this bank to decide which classes to keep.
    
    \item \textbf{Concept shift ($P(Y|X)$):} Datasets swap labels among specific classes. A swapping pool is constructed by selecting a subset of classes, which are permuted to assign new labels. For instance, at Level 4, the pool \{2,3,5,8\} might be permuted to \{5,8,3,2\}, mapping images originally labeled as '2' to the new label '5'. The size of the swapping pool increases with the heterogeneity level, and each client permutes the classes independently and randomly.
    
    \item \textbf{Concept shift ($P(X|Y)$):} Clients apply distinct augmentations to data points of the same class. For example, Client A applies a $0^\circ$ rotation to class '5', while Client B applies a $180^\circ$ rotation. Augmentation options are limited to rotations (\{$0^\circ$, $90^\circ$, $180^\circ$, $270^\circ$\}), and the number of classes subjected to augmentation increases with the heterogeneity level.
\end{itemize}

\begin{table}[htbp]
\centering
\tiny
\renewcommand{\arraystretch}{1.2}
\setlength{\tabcolsep}{4pt}
\begin{tabular}{lllll}
\toprule
\textbf{non-IID type} & \textbf{$P(X)$} & \textbf{$P(Y)$} & \textbf{$P(Y|X)$} & \textbf{$P(X|Y)$} \\
\midrule
Level 1 & Rotation \{$0^\circ$, $180^\circ$\}, Color \{Original\} & \#Class = 10 & \#Swapped Class = 1& \#Class Under Augmentation = 1 \\ 
Level 2 & Rotation \{$0^\circ$, $120^\circ$, $240^\circ$\}, Color \{Original\} & \#Class = 9 &\#Swapped Class = 2 & \#Class Under Augmentation = 2 \\ 
Level 3 & Rotation \{$0^\circ$, $90^\circ$, $180^\circ$, $270^\circ$\}, Color \{Original\} & \#Class = 8 &\#Swapped Class = 3 & \#Class Under Augmentation = 3 \\  
Level 4 & Rotation \{$0^\circ$, $72^\circ$, $144^\circ$, $216^\circ$, $288^\circ$\}, Color \{Original\} &\#Class = 7 &\#Swapped Class = 4 & \#Class Under Augmentation = 4 \\ 
Level 5 & Rotation \{$0^\circ$, $180^\circ$\}, Color \{Red, Blue, Green\} &\#Class = 6 &\#Swapped Class = 5 & \#Class Under Augmentation = 5 \\ 
Level 6 & Rotation \{$0^\circ$, $120^\circ$, $240^\circ$\}, Color \{Red, Blue, Green\} & \#Class = 5&\#Swapped Class = 6 & \#Class Under Augmentation = 6 \\ 
Level 7 & Rotation \{$0^\circ$, $90^\circ$, $180^\circ$, $270^\circ$\}, Color \{Red, Blue, Green\} & \#Class = 4&\#Swapped Class = 7 & \#Class Under Augmentation = 7 \\ 
Level 8 & Rotation \{$0^\circ$, $72^\circ$, $144^\circ$, $216^\circ$, $288^\circ$\}, Color \{Red, Blue, Green\} &\#Class = 3 &\#Swapped Class = 8 & \#Class Under Augmentation = 8 \\ 
\bottomrule
\end{tabular}
\vspace{5pt}
\caption{
\textbf{Summary of non-IID data heterogeneity configurations across levels. Each type of heterogeneity corresponds to a specific distribution shift.}
}
\label{tab:hetero_setting_8}
\end{table}

Tables~\ref{tab:mnist_low_all} to Table~\ref{tab:cifar100_high_pxy} present detailed results of \textsc{Flux} and all baseline methods across various types and levels of distribution shifts. We did not evaluate the performance of \textit{test phase} on $P(Y|X)$ concept shift, as this problem is unsolvable without access to labels (refer to the Section~\ref{sec:unsupervised_assignment} and Appendix~\ref{app:identification_shifts} for more details). For the baselines that do not provide a solution for real \textit{test phase}, we weight all models by the number of clients in the cluster, and use the expectation that weights all model outputs as an estimation of the predicted labels.

We also provide Figure \ref{fig:hetero}, which summarizes of the averaged accuracy across the distribution shifts (i.e., across tables), showing trends under both the \textit{known association} condition and the real \textit{test phase} condition. It shows that \textsc{Flux} consistently outperforms baselines during real test phase, particularly at high heterogeneity, which demand accurate identification and clustering of client data distributions to avoid model degradation. The only cases where baselines surpass \textsc{Flux} are PFL methods on CIFAR datasets. This is expected in scenarios involving $P(Y|X)$ shifts (e.g., Tables~\ref{tab:cifar100_low_pyx}–\ref{tab:cifar100_high_pyx}), where each client may have different label preferences for the same input. In such cases, collaboration can be harmful, and methods fine-tuned on the same client distributions seen at test time (as in PFL) naturally gain an advantage. However, these methods fail to generalize to unseen clients (see Table~\ref{tab:dataset_comparison} and Figure~\ref{fig:hetero}, Test Phase), limiting their applicability in realistic deployments. On CIFAR-10, the relatively shallow backbone can also constrain descriptor quality, highlighting that limited model expressiveness may degrade clustering and final accuracy; indeed, the gap with PFL methods narrows when using deeper models (e.g., ResNet9 instead of LeNet5) on CIFAR-100. Notably, \textsc{Flux} achieves its maximum improvement on FMNIST, surpassing the best baselines by over 23 pp. Furthermore, \textsc{Flux} achieves performance comparable to \textsc{Flux}-prior where the prior knowledge of the number of clusters ($M$) is known. This result highlights the challenge of identifying clusters in real-world settings and validates the coherence of our unsupervised clustering approach. It is worth noting that most baselines rely on this cluster information (see Table \ref{table:table_sum}), significantly simplifying their clustering tasks. Even under the known association setting, \textsc{Flux} remains the most robust method across all heterogeneity levels and datasets, demonstrating its utility and generalizability.
\begin{figure*}[h!]
    \centering
    \includegraphics[width=0.99\textwidth, height=0.48\textwidth]{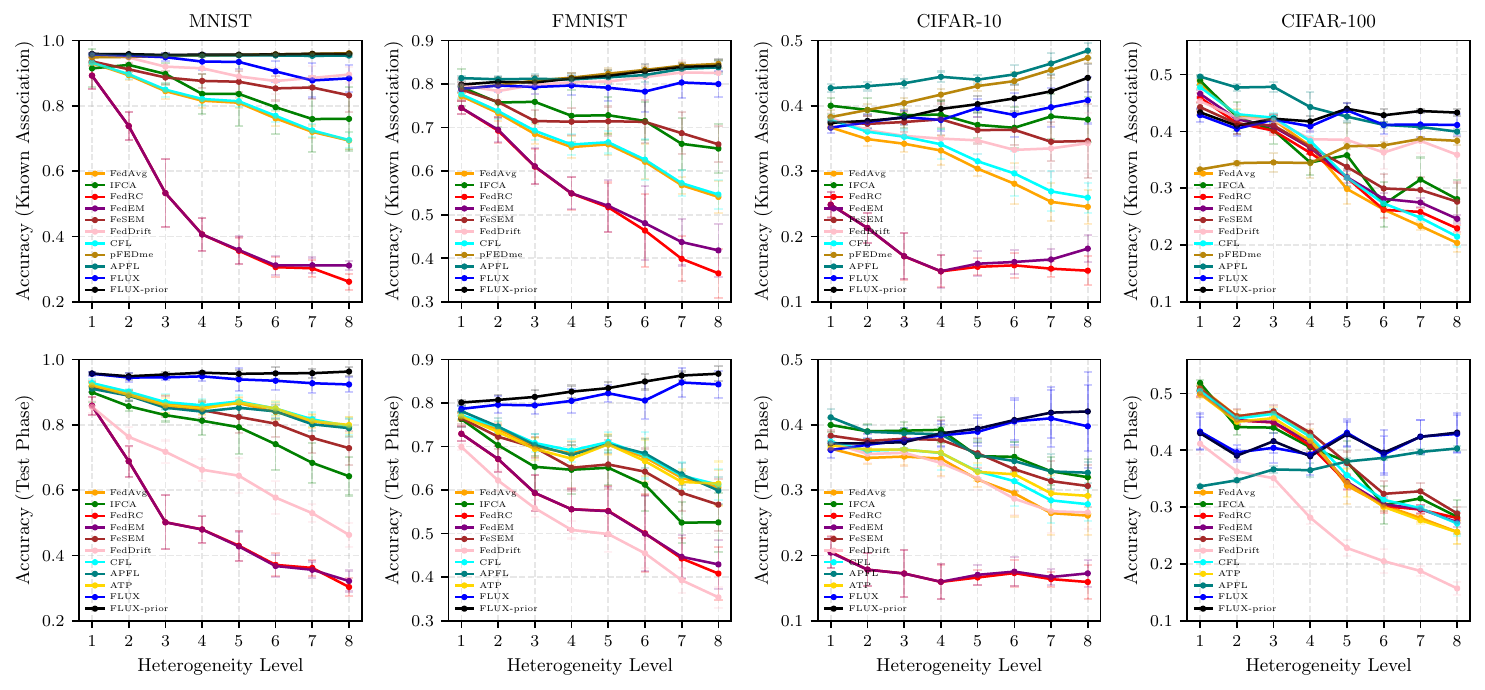}
    \vskip -0.1in
    \caption{\textbf{Mean accuracy and standard deviation across heterogeneity levels for MNIST, FMNIST, CIFAR-10, and CIFAR-100, averaged over $\mathbf{P(X)}$, $\mathbf{P(Y)}$, $\mathbf{P(Y|X)}$, and $\mathbf{P(X|Y)}$ shifts.} First row: \textit{known association} condition, where test-time cluster associations are available. Second row: \textit{test phase} condition, where cluster associations are inferred.} 
    \label{fig:hetero}
    \vspace{-5pt}
\end{figure*}

\begin{table}[htbp]
\centering
\scriptsize
\renewcommand{\arraystretch}{1.2}
\setlength{\tabcolsep}{4pt}
\resizebox{\textwidth}{!}{

}
\vspace{5pt}
\caption{
Performance comparison across non-IID Levels 5–8 of $P(X|Y)$ on the CIFAR-100 dataset.
\textbf{Known A.}: Known Association.
}
\label{tab:cifar100_high_pxy}
\end{table}

\subsection{Scaling the Number of Clients}
\label{app:scaling_nclients_results}

We evaluate the scalability and efficiency of the proposed approach across varying numbers of clients on MNIST dataset, with the non-IID levels fixed as follows: \textbf{Feature distribution shift} involves rotations (\{$0^\circ$, $90^\circ$, $180^\circ$, $270^\circ$\}). \textbf{Label distribution shift} sets the number of classes to 4, with a bank size of 5. \textbf{$P(Y|X)$ concept shift} uses a swapping pool size of 5, while \textbf{$P(X|Y)$ concept shift} limits augmentations to rotations (\{$0^\circ$, $90^\circ$, $180^\circ$, $270^\circ$\}), with 6 classes subject to augmentation. Appendix~\ref{app:scaling_iid_results} describes more details about the non-IID levels setup.

To ensure sufficient training data per client, local datasets are duplicated based on the number of clients. For 5 clients, no duplication is applied (original size after partition). For 25 clients, datasets are duplicated once (size after partition $\times 2$). For 50 clients, duplication is applied twice (size after partition $\times 4$). For 100 clients, datasets are duplicated four times (size after partition $\times 8$).

Tables~\ref{tab:n_client_all} to Table~\ref{tab:n_client_100} provide detailed results for \textsc{Flux} and baseline methods, evaluated across increasing numbers of clients and various distribution shift types and levels. We did not evaluate the performance of \textit{test phase} on $P(Y|X)$ concept shift, as this problem is unsolvable without access to labels (refer to the Section~\ref{sec:unsupervised_assignment} and Appendix~\ref{app:identification_shifts} for more details). We could not evaluate the performance of FedDrift and CFL under 100 clients due to prohibitive memory and computational costs (see Appendix \ref{app:code_licenses_hardware} for details). For the baselines that do not provide a solution for real \textit{test phase}, we weight all models by the number of clients in the cluster, and use the expectation that weights all model outputs as an estimation of the predicted labels.

\paragraph{Large-scale experiment with 1,200 clients.}  
To further strengthen our evaluation and demonstrate practical scalability, we extended the experiments to a real-world large-scale FL setting with 1,200 clients under partial participation. Specifically, we adopted a participation rate of 0.2 (i.e., on average, 240 clients per round), which is both computationally practical and reflective of realistic deployments where device availability, network conditions, and battery constraints prevent full client participation. This setup follows the same protocol as Figure \ref{fig:N_clients} for known association setting, but reports results separately for each distribution shift type. Table~\ref{tab:scaling_1200} reports results for \textsc{Flux} and FedAvg. The findings confirm that \textsc{Flux} not only scales effectively to 1,200 clients but also maintains strong robustness under extreme heterogeneity and limited participation. Notably, \textsc{Flux} outperforms FedAvg by large margins—up to 29.3 percentage points under the most challenging $P(Y|X)$ shift—while exhibiting consistently low variance, highlighting its stability and scalability in realistic FL deployments.
\begin{table}[htbp]
\centering
\scriptsize
\renewcommand{\arraystretch}{1.2}
\setlength{\tabcolsep}{5pt}
\begin{tabular}{lcccc}
\toprule
\textbf{Method} & \textbf{P(X)} & \textbf{P(Y)} & \textbf{P(Y|X)} & \textbf{P(X|Y)} \\
\midrule
FedAvg   & 70.3 ± 0.1 & 84.2 ± 2.5 & 65.5 ± 0.0 & 86.5 ± 4.2 \\
\textsc{Flux} & \textbf{96.1 ± 0.0} & \textbf{98.2 ± 0.2} & \textbf{94.8 ± 0.3} & \textbf{96.3 ± 0.1} \\
\bottomrule
\end{tabular}
\vspace{5pt}
\caption{\textbf{Large-scale scalability with 1,200 clients (partial participation rate = 0.2).} }
\label{tab:scaling_1200}
\vspace{-15pt}
\end{table}

\begin{table}[htbp]
\centering
\resizebox{1\textwidth}{!}{%
\scriptsize
\renewcommand{\arraystretch}{1.2}
\setlength{\tabcolsep}{4pt}
\begin{tabular}{lcccccccccccc}
\toprule
\textbf{\# Clients} & \multicolumn{3}{c}{\textbf{5 Clients}} & \multicolumn{3}{c}{\textbf{25 Clients}} & \multicolumn{3}{c}{\textbf{50 Clients}} & \multicolumn{3}{c}{\textbf{100 Clients}} \\
\cmidrule(lr){2-4} \cmidrule(lr){5-7} \cmidrule(lr){8-10} \cmidrule(lr){11-13}
\textbf{Algorithm} & \textbf{Known A.} & \textbf{Test Phase} & \textbf{Time} & \textbf{Known A.} & \textbf{Test Phase} & \textbf{Time} & \textbf{Known A.} & \textbf{Test Phase} & \textbf{Time} & \textbf{Known A.} & \textbf{Test Phase} & \textbf{Time} \\
\midrule
FedAvg          & 84.44 ± 1.55 & 89.09 ± 0.94 & \textbf{4.443} & 70.03 ± 3.24 & 75.34 ± 3.64 & \textbf{6.662} & 66.20 ± 2.19 & 70.75 ± 2.45 & \textbf{7.471} & 65.87 ± 2.69 & 70.49 ± 3.04 & \textbf{8.964} \\ 
IFCA            & 80.91 ± 6.08 & 73.33 ± 4.27 & 6.654 & 76.01 ± 7.10 & 70.93 ± 4.22 & 8.636 & 76.24 ± 2.99 & 70.32 ± 3.01 & 10.963 & 74.45 ± 3.95 & 69.32 ± 2.10 & 10.884 \\ 
FedRC           & 74.71 ± 2.87 & 75.85 ± 2.92 & 7.855 & 31.05 ± 2.99 & 37.60 ± 3.36 & 10.111 & 28.36 ± 2.52 & 34.08 ± 2.89 & 12.527 & 28.55 ± 1.95 & 34.30 ± 2.16 & 12.457 \\ 
FedEM           & 75.66 ± 2.78 & 75.27 ± 2.41 & 7.784 & 32.67 ± 2.68 & 36.90 ± 2.68 & 9.900 & 29.24 ± 3.79 & 32.27 ± 4.24 & 12.377 & 29.99 ± 2.92 & 33.93 ± 2.08 & 11.810 \\ 
FeSEM           & 82.69 ± 4.59 & 76.84 ± 4.23 & 6.603 & 75.42 ± 3.24 & 74.63 ± 1.89 & 8.384 & 72.08 ± 1.63 & 70.64 ± 1.53 & 10.252 & 68.49 ± 1.38 & 70.87 ± 1.46 & 10.337 \\ 
FedDrift        & \textcolor{blue}{97.48 ± 0.20} & 55.62 ± 3.00 & 8.072 & 70.37 ± 3.51 & 62.91 ± 2.46 & 14.162 & 82.17 ± 3.28 & 51.91 ± 2.55 & 16.481 & N/A          & N/A          & N/A   \\ 
CFL             & 84.18 ± 1.65 & 88.71 ± 1.18 & 7.650 & 63.92 ± 1.76 & 64.84 ± 2.45 & 14.231 & 67.16 ± 2.04 & 71.87 ± 2.28 & 16.395 & N/A          & N/A          & N/A   \\ 
pFedMe         & 96.41 ± 0.19 & N/A           & 5.619 & 89.21 ± 0.81 & N/A           & 6.923 & 89.06 ± 0.77 & N/A           & 7.760 & 89.06 ± 0.63 & N/A           & 9.031 \\
APFL           & 96.16 ± 0.30 & 85.78 ± 2.23  & 7.210 & \textcolor{blue}{89.41 ± 0.76 }& 73.18 ± 2.66  & 8.192& \textcolor{blue}{89.35 ± 0.67} & 72.10 ± 2.16  & 9.313 & \textcolor{blue}{89.29 ± 0.74} & 71.83 ± 2.31  & 10.313 \\
ATP            & N/A          & 88.82 ± 1.25  & 5.862 & N/A          & 70.41 ± 2.12  & 6.950 & N/A          & 69.65 ± 2.35  & 8.037 & N/A          & 69.31 ± 2.34  & 9.321 \\
\midrule
\textsc{Flux}             & 93.51 ± 2.02 & \textcolor{blue}{95.35 ± 1.54} & \textcolor{blue}{5.436} & 81.65 ± 3.18 & \textcolor{blue}{84.62 ± 2.18} & \textcolor{blue}{6.738} & 82.59 ± 3.55 & \textcolor{blue}{84.72 ± 1.60} & 7.962 & 85.47 ± 3.00 & \textcolor{blue}{84.45 ± 2.18} & 9.263 \\ 
\textsc{Flux}-prior       & \textbf{97.73 ± 0.22} & \textbf{98.06 ± 0.25} & 5.753 & \textbf{90.20 ± 0.47} & \textbf{90.48 ± 1.67} & 6.796 & \textbf{90.36 ± 0.40} & \textbf{89.24 ± 1.42} & \textcolor{blue}{7.960} & \textbf{90.50 ± 0.33} & \textbf{89.06 ± 2.08} & \textcolor{blue}{9.232} \\ 
\bottomrule
\end{tabular}
} 
\vspace{5pt}
\caption{
Performance comparison across the number of clients in 5, 25, 50, and 100, summarizing all four types of heterogeneity ($P(X)$, $P(Y)$, $P(Y|X)$, $P(X|Y)$) on the MNIST dataset.
\textbf{Known A.}: Known Association.
}
\label{tab:n_client_all}
\end{table}

\begin{table}[htbp]
\centering
\scriptsize
\renewcommand{\arraystretch}{1.2}
\setlength{\tabcolsep}{4pt}
\resizebox{\textwidth}{!}{
\begin{tabular}{lcccccccc}
\toprule
\textbf{non-IID type} & \multicolumn{2}{c}{\textbf{$P(X)$}} & \multicolumn{2}{c}{\textbf{$P(Y)$}} & \multicolumn{2}{c}{\textbf{$P(Y|X)$}} & \multicolumn{2}{c}{\textbf{$P(X|Y)$ }} \\
\cmidrule(lr){2-3} \cmidrule(lr){4-5} \cmidrule(lr){6-7} \cmidrule(lr){8-9}
\textbf{Algorithm} & \textbf{Known A.} & \textbf{Test Phase} & \textbf{Known A.} & \textbf{Test Phase} & \textbf{Known A.} & \textbf{Test Phase} & \textbf{Known A.} & \textbf{Test Phase} \\
\midrule
FedAvg          & 85.59 ± 0.35 & 85.59 ± 0.35 & 92.90 ± 1.00 & 92.90 ± 1.00 & 70.50 ± 2.62 & N/A           & 88.77 ± 1.25 & 88.77 ± 1.25 \\ 
IFCA            & 87.97 ± 6.71 & 76.44 ± 4.17 & 81.64 ± 5.56 & 74.91 ± 6.10 & 77.01 ± 5.99 & N/A           & 77.01 ± 5.99 & 69.48 ± 4.01 \\ 
FedRC           & 78.29 ± 1.26 & 74.30 ± 2.09 & 85.66 ± 2.82 & 83.83 ± 2.90 & 67.44 ± 3.42 & N/A           & 67.44 ± 3.42 & 65.64 ± 3.52 \\ 
FedEM           & 78.44 ± 1.16 & 74.38 ± 1.98 & 89.50 ± 2.59 & 82.28 ± 0.91 & 67.35 ± 3.37 & N/A           & 67.35 ± 3.37 & 65.66 ± 3.47 \\ 
FeSEM           & 88.38 ± 3.03 & 74.48 ± 9.85 & 94.03 ± 4.09 & 83.54 ± 4.63 & 74.17 ± 5.40 & N/A           & 74.17 ± 5.40 & 67.68 ± 1.15 \\ 
FedDrift        & 97.29 ± 0.12 & 31.69 ± 0.51 & 98.68 ± 0.28 & 40.56 ± 6.13 & \textcolor{blue}{96.49 ± 0.15} & N/A           & \textcolor{blue}{97.46 ± 0.21} & 65.89 ± 1.04 \\ 
CFL             & 85.14 ± 0.78 & 25.09 ± 0.24 & 92.86 ± 1.25 & 26.41 ± 3.54 & 70.58 ± 2.58 & N/A           & 88.12 ± 1.42 & 25.81 ± 0.40 \\ 
pFedMe         & 95.84 ± 0.29 & N/A & 97.53 ± 0.07 & N/A & 95.63 ± 0.15 & N/A & 96.65 ± 0.17 & N/A \\
APFL           & 95.67 ± 0.20 & 81.23 ± 1.62 & 97.13 ± 0.18 & 89.31 ± 3.28 & 95.17 ± 0.49 & N/A & 96.66 ± 0.20 & 86.81 ± 1.21 \\
ATP            & N/A & 83.84 ± 0.86 & N/A & 93.13 ± 1.50 & N/A & N/A & N/A & 89.50 ± 1.30 \\
\midrule
\textsc{Flux}             & \textcolor{blue}{97.59 ± 0.21} & \textcolor{blue}{97.63 ± 0.18} & \textcolor{blue}{98.69 ± 0.56} & \textcolor{blue}{97.84 ± 2.23} & 85.71 ± 2.54 & N/A           & 92.04 ± 3.08 & \textcolor{blue}{90.57 ± 1.44} \\
\textsc{Flux}-prior       & \textbf{97.67 ± 0.15} & \textbf{97.67 ± 0.15} & \textbf{98.79 ± 0.32} & \textbf{98.80 ± 0.32} & \textbf{96.75 ± 0.14} & N/A           & \textbf{97.71 ± 0.24} & \textbf{97.71 ± 0.24} \\ 
\bottomrule
\end{tabular}
}
\vspace{5pt}
\caption{
Performance comparison across 5 clients with all four types of heterogeneity ($P(X)$, $P(Y)$, $P(Y|X)$, $P(X|Y)$) on the MNIST dataset.
}
\label{tab:n_client_5}
\end{table}

\begin{table}[htbp]
\centering
\scriptsize
\renewcommand{\arraystretch}{1.2}
\setlength{\tabcolsep}{4pt}
\resizebox{\textwidth}{!}{
\begin{tabular}{lcccccccc}
\toprule
\textbf{non-IID type} & \multicolumn{2}{c}{\textbf{$P(X)$}} & \multicolumn{2}{c}{\textbf{$P(Y)$}} & \multicolumn{2}{c}{\textbf{$P(Y|X)$}} & \multicolumn{2}{c}{\textbf{$P(X|Y)$ }} \\
\cmidrule(lr){2-3} \cmidrule(lr){4-5} \cmidrule(lr){6-7} \cmidrule(lr){8-9}
\textbf{Algorithm} & \textbf{Known A.} & \textbf{Test Phase} & \textbf{Known A.} & \textbf{Test Phase} & \textbf{Known A.} & \textbf{Test Phase} & \textbf{Known A.} & \textbf{Test Phase} \\
\midrule
FedAvg          & 67.53 ± 1.61 & 67.53 ± 1.61 & 78.56 ± 5.75 & 78.56 ± 5.75 & 54.12 ± 1.50 & N/A           & 79.92 ± 2.04 & \textcolor{blue}{79.92 ± 2.04} \\ 
IFCA            & 82.49 ± 3.07 & 65.23 ± 2.34 & 79.68 ± 13.31 & 68.95 ± 6.36 & 61.00 ± 3.30 & N/A           & 80.88 ± 2.06 & 78.61 ± 2.71 \\ 
FedRC           & 26.70 ± 2.79 & 26.69 ± 2.81 & 74.88 ± 5.21 & 74.75 ± 5.09 & 11.25 ± 0.93 & N/A           & 11.35 ± 0.00 & 11.35 ± 0.00 \\ 
FedEM           & 26.70 ± 3.00 & 26.69 ± 2.92 & 81.36 ± 4.34 & 72.66 ± 3.61 & 11.27 ± 0.95 & N/A           & 11.35 ± 0.00 & 11.35 ± 0.00 \\ 
FeSEM           & 65.53 ± 2.07 & 65.86 ± 1.46 & 87.67 ± 5.64 & 79.03 ± 2.75 & 65.67 ± 2.30 & N/A           & 82.79 ± 0.73 & 79.02 ± 1.01 \\ 
FedDrift        & 63.69 ± 3.71 & 54.72 ± 1.26 & 77.33 ± 2.12 & 66.82 ± 3.44 & 64.32 ± 5.28 & N/A           & 76.14 ± 1.77 & 67.19 ± 2.17 \\ 
CFL             & 54.81 ± 1.52 & 54.81 ± 1.52 & 73.42 ± 2.07 & 66.82 ± 3.44 & 54.56 ± 1.37 & N/A           & 72.88 ± 1.97 & 72.88 ± 1.97 \\
pFedMe         & 88.06 ± 0.22 & N/A & 93.91 ± 1.29 & N/A & \textbf{86.68 ± 0.74} & N/A & \textcolor{blue}{88.19 ± 0.60} & N/A \\
APFL           & 87.96 ± 0.19 & 61.14 ± 1.45 & 94.10 ± 0.67 & 84.74 ± 3.30 & \textcolor{blue}{86.63 ± 1.31} & N/A & \textbf{88.96 ± 0.32} & 73.66 ± 2.86 \\
ATP            & N/A & 57.91 ± 1.65 & N/A & 82.18 ± 2.36 & N/A & N/A & N/A & 71.14 ± 2.28 \\
\midrule
\textsc{Flux}             & \textcolor{blue}{90.05 ± 0.41} & \textcolor{blue}{89.31 ± 0.66} & \textcolor{blue}{95.91 ± 0.89} & \textcolor{blue}{93.28 ± 3.01} & 65.68 ± 6.01 & N/A           & 74.96 ± 1.80 & 71.29 ± 2.19 \\ 
\textsc{Flux}-prior       & \textbf{91.10 ± 0.13} & \textbf{91.10 ± 0.13} & \textbf{96.22 ± 0.64} & \textbf{94.70 ± 2.70} & 85.38 ± 0.34 & N/A           & 88.09 ± 0.59 & \textbf{85.65 ± 0.99} \\ 
\bottomrule
\end{tabular}
}
\vspace{5pt}
\caption{
Performance comparison across 25 clients with all four types of heterogeneity ($P(X)$, $P(Y)$, $P(Y|X)$, $P(X|Y)$) on the MNIST dataset.
}
\label{tab:n_client_25}
\end{table}

\begin{table}[htbp]
\centering
\scriptsize
\renewcommand{\arraystretch}{1.2}
\setlength{\tabcolsep}{4pt}
\resizebox{\textwidth}{!}{
\begin{tabular}{lcccccccc}
\toprule
\textbf{non-IID type} & \multicolumn{2}{c}{\textbf{$P(X)$}} & \multicolumn{2}{c}{\textbf{$P(Y)$}} & \multicolumn{2}{c}{\textbf{$P(Y|X)$}} & \multicolumn{2}{c}{\textbf{$P(X|Y)$ }} \\
\cmidrule(lr){2-3} \cmidrule(lr){4-5} \cmidrule(lr){6-7} \cmidrule(lr){8-9}
\textbf{Algorithm} & \textbf{Known A.} & \textbf{Test Phase} & \textbf{Known A.} & \textbf{Test Phase} & \textbf{Known A.} & \textbf{Test Phase} & \textbf{Known A.} & \textbf{Test Phase} \\
\midrule
FedAvg          & 59.70 ± 0.58 & 59.70 ± 0.58 & 80.47 ± 3.94 & 80.47 ± 3.94 & 52.53 ± 1.08 & N/A           & 72.08 ± 1.45 & 72.08 ± 1.45 \\ 
IFCA            & 82.36 ± 3.83 & 64.14 ± 1.41 & 83.27 ± 4.24 & 70.18 ± 4.96 & 62.25 ± 1.31 & N/A           & 77.07 ± 1.16 & \textcolor{blue}{76.62 ± 0.74} \\ 
FedRC           & 26.36 ± 1.85 & 26.35 ± 1.85 & 63.25 ± 3.66 & 63.08 ± 3.63 & 11.07 ± 0.75 & N/A           & 12.76 ± 2.81 & 12.80 ± 2.91 \\ 
FedEM           & 26.41 ± 1.75 & 26.39 ± 1.74 & 68.11 ± 7.35 & 59.06 ± 7.14 & 11.09 ± 0.76 & N/A           & 11.35 ± 0.00 & 11.35 ± 0.00 \\ 
FeSEM           & 60.52 ± 1.50 & 62.59 ± 1.14 & 85.10 ± 2.21 & 74.85 ± 2.31 & 67.67 ± 1.51 & N/A           & 75.04 ± 1.06 & 74.46 ± 0.65 \\ 
FedDrift        & 90.80 ± 2.77 & 40.04 ± 2.45 & \textcolor{blue}{96.48 ± 0.41} & 46.98 ± 2.72 & 65.48 ± 5.30 & N/A           & 75.74 ± 2.65 & 68.69 ± 2.48 \\ 
CFL             & 61.23 ± 0.65 & 61.23 ± 0.65 & 81.41 ± 3.42 & 81.41 ± 3.42 & 53.04 ± 1.03 & N/A           & 72.97 ± 1.87 & 72.97 ± 1.87 \\
pFedMe         & 87.93 ± 0.39 & N/A & 93.36 ± 0.82 & N/A & \textbf{86.62 ± 1.06} & N/A & \textcolor{blue}{88.34 ± 0.64} & N/A \\
APFL           & 88.06 ± 0.17 & 58.29 ± 1.29 & 93.93 ± 0.90 & 84.42 ± 2.78 & \textcolor{blue}{86.45 ± 0.97} & N/A & \textbf{88.95 ± 0.23} & 73.60 ± 2.16 \\
ATP            & N/A & 56.21 ± 1.03 & N/A & 81.84 ± 3.42 & N/A & N/A & N/A & 70.89 ± 1.94 \\
\midrule
\textsc{Flux}             & \textcolor{blue}{90.83 ± 0.34} & \textcolor{blue}{90.13 ± 0.75} & 96.04 ± 0.42 & \textbf{93.69 ± 1.52} & 69.40 ± 6.76 & N/A           & 74.10 ± 2.14 & 70.35 ± 2.19 \\ 
\textsc{Flux}-prior       & \textbf{91.43 ± 0.21} & \textbf{91.43 ± 0.21} & \textbf{96.68 ± 0.49} & \textcolor{blue}{93.36 ± 1.63} & 85.78 ± 0.24 & N/A           & 87.98 ± 0.53 & \textbf{82.95 ± 1.83} \\ 
\bottomrule
\end{tabular}
}
\vspace{5pt}
\caption{
Performance comparison across 50 clients with all four types of heterogeneity ($P(X)$, $P(Y)$, $P(Y|X)$, $P(X|Y)$) on the MNIST dataset.
}
\label{tab:n_client_50}
\end{table}

\begin{table}[htbp]
\centering
\scriptsize
\renewcommand{\arraystretch}{1.2}
\setlength{\tabcolsep}{4pt}
\resizebox{\textwidth}{!}{
\begin{tabular}{lcccccccc}
\toprule
\textbf{non-IID type} & \multicolumn{2}{c}{\textbf{$P(X)$}} & \multicolumn{2}{c}{\textbf{$P(Y)$}} & \multicolumn{2}{c}{\textbf{$P(Y|X)$}} & \multicolumn{2}{c}{\textbf{$P(X|Y)$ }} \\
\cmidrule(lr){2-3} \cmidrule(lr){4-5} \cmidrule(lr){6-7} \cmidrule(lr){8-9}
\textbf{Algorithm} & \textbf{Known A.} & \textbf{Test Phase} & \textbf{Known A.} & \textbf{Test Phase} & \textbf{Known A.} & \textbf{Test Phase} & \textbf{Known A.} & \textbf{Test Phase} \\
\midrule
FedAvg          & 59.48 ± 0.51 & 59.48 ± 0.51 & 79.48 ± 4.72 & 79.48 ± 4.72 & 52.02 ± 1.16 & N/A           & 72.52 ± 2.26 & 72.52 ± 2.26 \\ 
IFCA            & 85.71 ± 4.83 & 65.75 ± 0.67 & 80.10 ± 5.11 & 66.93 ± 3.03 & 56.55 ± 2.84 & N/A           & 75.42 ± 2.19 & \textcolor{blue}{75.26 ± 1.88} \\ 
FedRC           & 26.39 ± 1.69 & 26.39 ± 1.69 & 65.29 ± 3.44 & 65.17 ± 3.34 & 11.16 ± 0.78 & N/A           & 11.35 ± 0.00 & 11.35 ± 0.00 \\ 
FedEM           & 26.46 ± 1.74 & 26.46 ± 1.74 & 71.00 ± 5.52 & 63.97 ± 3.16 & 11.16 ± 0.77 & N/A           & 11.35 ± 0.00 & 11.35 ± 0.00 \\ 
FeSEM           & 59.87 ± 0.41 & 61.80 ± 0.75 & 83.18 ± 1.95 & 76.71 ± 2.08 & 57.04 ± 1.13 & N/A           & 73.87 ± 1.54 & 74.10 ± 1.24 \\ 
FedDrift        & N/A          & N/A          & N/A          & N/A          & N/A          & N/A          & N/A          & N/A          \\ 
CFL             & N/A          & N/A          & N/A          & N/A          & N/A          & N/A          & N/A          & N/A          \\ 
pFedMe         & 87.99 ± 0.08 & N/A & 93.34 ± 0.99 & N/A & \textbf{86.64 ± 0.66} & N/A & \textcolor{blue}{88.26 ± 0.40} & N/A \\
APFL           & 87.99 ± 0.17 & 57.30 ± 1.18 & 93.82 ± 0.62 & 84.39 ± 2.91 & \textcolor{blue}{86.34 ± 1.32} & N/A & \textbf{88.99 ± 0.23} & 73.80 ± 2.48 \\
ATP            & N/A & 55.57 ± 0.15 & N/A & 81.18 ± 3.50 & N/A & N/A & N/A & 71.19 ± 2.04 \\
\midrule
\textsc{Flux}             & \textcolor{blue}{91.55 ± 0.13} & \textcolor{blue}{90.32 ± 0.71} & \textcolor{blue}{96.30 ± 0.45} & \textcolor{blue}{91.48 ± 3.27} & 77.96 ± 5.44 & N/A           & 76.07 ± 2.47 & 71.55 ± 1.77 \\
\textsc{Flux}-prior       & \textbf{91.57 ± 0.10} & \textbf{91.57 ± 0.10} & \textbf{96.58 ± 0.34} & \textbf{93.32 ± 3.28} & 85.73 ± 0.27 & N/A           & 88.11 ± 0.49 & \textbf{82.28 ± 1.47} \\ 
\bottomrule
\end{tabular}
}
\vspace{5pt}
\caption{
Performance comparison across 100 clients with all four types of heterogeneity ($P(X)$, $P(Y)$, $P(Y|X)$, $P(X|Y)$) on the MNIST dataset.
}
\label{tab:n_client_100}
\end{table}

\subsection{Robustness Under Mixed Distribution Shifts} \label{app:mix_shift}
To further assess the robustness of \textsc{Flux}, we conducted additional experiments involving combinations of multiple distribution shifts occurring simultaneously. Specifically, we evaluated performance under three mixed-shift scenarios: (i) a mixture of $P(X)$ and $P(Y)$ (Table~\ref{tab:m11}); (ii) a mixture of $P(X|Y)$ and $P(Y)$ (Table~\ref{tab:m12}); and (iii) a mixture of $P(Y|X)$ and $P(Y)$ (Table~\ref{tab:m13}), across four non-IID levels (see Table~\ref{tab:hetero_setting_8} for detailed setting of heterogeneity levels).

Across all settings, \textsc{Flux} consistently maintains high accuracy, demonstrating strong robustness under compound shift conditions. In contrast, existing CFL methods exhibit substantial performance degradation when faced with mixed-shift scenarios. As expected, personalized FL methods (e.g., APFL, pFedMe) perform well under the \textit{known association} condition, where test-time distributions match those where the models were fine-tuned on, achieving accuracy comparable to \textsc{Flux}-prior. However, their performance drops sharply on unseen clients, where no labeled data is available for fine-tuning. ATP, which performs unsupervised test-time adaptation, remains relatively stable on unseen clients but fails to generalize across all distribution types, performing notably worse than \textsc{Flux}. For example, under the $P(X)\!+\!P(Y)$ mixture (Table~\ref{tab:m11}), \textsc{Flux} sustains a test-phase accuracy of 95.9\% at Level 3—outperforming the best-performing methods by a wide margin, e.g., ATP (85.4\%), APFL (80.6\%), and CFL (75.2\%)—and remains above 82.9\% even at Level 6, where many baselines fall below 60\%. Similar trends hold for the $P(X|Y)\!+\!P(Y)$ and $P(Y|X)\!+\!P(Y)$ mixtures (Tables~\ref{tab:m12}, \ref{tab:m13}), confirming \textsc{Flux}’s consistent robustness to compound shifts.

\begin{table}[htbp]
\centering
\scriptsize
\renewcommand{\arraystretch}{1.2}
\setlength{\tabcolsep}{2pt}
\resizebox{\textwidth}{!}{
\begin{tabular}{lcccccccc}
\toprule
\textbf{non-IID Level} & \multicolumn{2}{c}{\textbf{Level 3}} & \multicolumn{2}{c}{\textbf{Level 4}} & \multicolumn{2}{c}{\textbf{Level 5}} & \multicolumn{2}{c}{\textbf{Level 6}} \\
\cmidrule(lr){2-3} \cmidrule(lr){4-5} \cmidrule(lr){6-7} \cmidrule(lr){8-9}
\textbf{Algorithm} & \textbf{Known A.} & \textbf{Test Phase} & \textbf{Known A.} & \textbf{Test Phase} & \textbf{Known A.} & \textbf{Test Phase} & \textbf{Known A.} & \textbf{Test Phase} \\
\midrule
FedAvg           & 77.88 ± 8.21 & 77.88 ± 8.21 & 82.48 ± 3.99 & 82.48 ± 3.99 & 71.53 ± 11.15 & 71.53 ± 11.15 & 67.68 ± 7.80 & 67.68 ± 7.80 \\
IFCA             & 86.35 ± 7.45 & 69.44 ± 15.54 & 95.78 ± 0.44 & 81.85 ± 2.31 & 90.31 ± 1.39 & 69.30 ± 12.37 & 85.97 ± 7.14 & 66.00 ± 11.00 \\
FedRC            & 42.10 ± 10.75 & 41.75 ± 10.93 & 31.75 ± 4.95 & 31.52 ± 4.96 & 34.77 ± 4.73 & 34.15 ± 4.34 & 20.53 ± 3.73 & 20.23 ± 3.78 \\
FedEM            & 66.02 ± 4.55 & 42.09 ± 10.75 & 48.74 ± 4.31 & 30.69 ± 6.52 & 59.60 ± 21.72 & 35.07 ± 6.03 & 50.71 ± 4.32 & 19.96 ± 3.69 \\
FeSEM            & 78.67 ± 4.83 & 67.74 ± 9.51 & 75.93 ± 11.99 & 66.20 ± 20.17 & 78.87 ± 7.18 & 68.59 ± 13.27 & 73.24 ± 8.99 & 59.38 ± 11.38 \\
FedDrift         & 97.20 ± 0.42 & 37.51 ± 5.97 & 95.23 ± 0.75 & 38.62 ± 5.32 & 95.39 ± 1.82 & 34.82 ± 10.98 & 95.14 ± 0.53 & 37.00 ± 5.31 \\
CFL              & 75.18 ± 9.56 & 75.18 ± 9.56 & 81.92 ± 4.88 & 81.92 ± 4.88 & 67.54 ± 13.22 & 67.54 ± 13.22 & 72.49 ± 5.84 & 72.49 ± 5.84 \\
pFedMe           & \textcolor{blue}{97.26 ± 0.63} & N/A          & 96.08 ± 0.35 & N/A          & 95.58 ± 1.84 & N/A          & \textcolor{blue}{95.04 ± 0.60} & N/A          \\
APFL             & 97.14 ± 0.59 & 80.58 ± 6.58 & \textbf{96.25 ± 0.20} & 80.67 ± 3.92 & \textbf{95.82 ± 1.29} & 69.30 ± 13.45 & 94.71 ± 0.95 & 63.18 ± 17.98 \\
ATP           & N/A   & 85.39 ± 3.64      & N/A     & 77.30 ± 15.52       & N/A     & 68.12 ± 12.86    & N/A       & 72.07 ± 9.56          \\
\midrule
\textsc{Flux}             & 96.14 ± 1.74 & \textcolor{blue}{95.91 ± 2.06} & 88.89 ± 6.08 & \textcolor{blue}{86.78 ± 4.10} & 87.16 ± 2.27 & \textcolor{blue}{81.73 ± 4.64} & 83.44 ± 3.80 & \textcolor{blue}{82.94 ± 3.64} \\
\textsc{Flux}-prior       & \textbf{97.33 ± 0.72} & \textbf{97.33 ± 0.72} &\textcolor{blue}{ 96.13 ± 0.45} & \textbf{96.13 ± 0.45} & \textcolor{blue}{95.77 ± 1.66} & \textbf{95.77 ± 1.66} &\textbf{ 95.17 ± 0.49} &\textbf{ 94.97 ± 0.49} \\
\bottomrule
\end{tabular}
}
\vspace{5pt}
\caption{
Performance comparison across mixture of $P(X)$ and $P(Y)$ on the MNIST dataset.
\textbf{Known A.}: Known Association.
}
\label{tab:m11}
\end{table}

\begin{table}[htbp]
\centering
\scriptsize
\renewcommand{\arraystretch}{1.2}
\setlength{\tabcolsep}{2pt}
\resizebox{\textwidth}{!}{
\begin{tabular}{lcccccccc}
\toprule
\textbf{non-IID Level} & \multicolumn{2}{c}{\textbf{Level 3}} & \multicolumn{2}{c}{\textbf{Level 4}} & \multicolumn{2}{c}{\textbf{Level 5}} & \multicolumn{2}{c}{\textbf{Level 6}} \\
\cmidrule(lr){2-3} \cmidrule(lr){4-5} \cmidrule(lr){6-7} \cmidrule(lr){8-9}
\textbf{Algorithm} & \textbf{Known A.} & \textbf{Test Phase} & \textbf{Known A.} & \textbf{Test Phase} & \textbf{Known A.} & \textbf{Test Phase} & \textbf{Known A.} & \textbf{Test Phase} \\
\midrule
FedAvg           & 69.61 ± 5.58 & 69.61 ± 5.58 & 78.94 ± 4.02 & 78.94 ± 4.02 & 80.06 ± 5.49 & 80.06 ± 5.49 & 68.93 ± 0.71 & 68.93 ± 0.71 \\
IFCA             & 80.38 ± 8.16 & 58.52 ± 18.12 & 93.06 ± 4.19 & 69.82 ± 5.52 & 87.71 ± 6.44 & 65.42 ± 6.32 & 88.70 ± 9.01 & 50.92 ± 10.26 \\
FedRC            & 59.14 ± 12.52 & 55.28 ± 8.47 & 67.16 ± 7.36 & 64.02 ± 6.84 & 62.89 ± 3.87 & 60.12 ± 3.50 & 64.29 ± 11.31 & 60.12 ± 10.74 \\
FedEM            & 88.79 ± 3.80 & 57.00 ± 6.93 & 77.97 ± 3.08 & 63.29 ± 3.94 & 82.40 ± 3.07 & 59.86 ± 3.42 & 85.96 ± 5.73 & 58.91 ± 10.21 \\
FeSEM            & 83.02 ± 4.49 & 61.57 ± 3.60 & 88.46 ± 2.56 & 66.18 ± 7.15 & 81.51 ± 7.07 & 68.67 ± 7.33 & 81.18 ± 4.27 & 67.83 ± 4.96 \\
FedDrift         & 98.80 ± 0.32 & 35.31 ± 5.58 & 98.57 ± 0.24 & 39.59 ± 3.56 & 98.02 ± 0.15 & 42.26 ± 5.02 & 95.80 ± 4.49 & 33.60 ± 2.26 \\
CFL              & 67.20 ± 4.42 & 67.20 ± 4.42 & 76.88 ± 1.42 & 76.88 ± 1.42 & 78.76 ± 8.34 & 78.76 ± 8.34 & 62.96 ± 6.35 & 62.96 ± 6.35 \\
pFedMe           & 98.35 ± 0.22 & N/A          & \textbf{98.96 ± 0.31} & N/A          & \textbf{98.97 ± 0.40} & N/A          & \textbf{99.34 ± 0.07} & N/A          \\
APFL             & \textcolor{blue}{99.03 ± 0.17} & 64.09 ± 6.38 & 98.77 ± 0.03 & 75.87 ± 4.28 & 98.66 ± 0.32 & 75.33 ± 6.18 & 98.91 ± 0.20 & 69.58 ± 5.40 \\
ATP           & N/A   & 70.27 ± 7.99     & N/A      & 84.00 ± 3.98    & N/A        & \textcolor{blue}{82.10 ± 7.02}      & N/A    & 63.84 ± 9.93           \\
\midrule
\textsc{Flux}             & 94.22 ± 4.25 & \textcolor{blue}{95.05 ± 3.27} & 94.21 ± 5.52 & \textcolor{blue}{89.59 ± 8.65} & 89.91 ± 2.86 & 79.25 ± 5.17 & 92.06 ± 6.69 & \textcolor{blue}{88.36 ± 11.12} \\
\textsc{Flux}-prior       & \textbf{99.10 ± 0.18 }& \textbf{99.10 ± 0.18} & \textcolor{blue}{98.92 ± 0.23} & \textbf{98.92 ± 0.23} & \textcolor{blue}{98.91 ± 0.41} &\textbf{ 98.91 ± 0.41 }& \textcolor{blue}{99.27 ± 0.14} & \textbf{99.27 ± 0.14} \\
\bottomrule
\end{tabular}
}
\vspace{5pt}
\caption{
Performance comparison across mixture of $P(X|Y)$ and $P(Y)$ on the MNIST dataset.
\textbf{Known A.}: Known Association.
}
\label{tab:m12}
\end{table}

\begin{table}[htbp]
\centering
\scriptsize
\renewcommand{\arraystretch}{1.2}
\setlength{\tabcolsep}{2pt}
\resizebox{\textwidth}{!}{
\begin{tabular}{lcccccccc}
\toprule
\textbf{non-IID Level} & \multicolumn{2}{c}{\textbf{Level 3}} & \multicolumn{2}{c}{\textbf{Level 4}} & \multicolumn{2}{c}{\textbf{Level 5}} & \multicolumn{2}{c}{\textbf{Level 6}} \\
\cmidrule(lr){2-3} \cmidrule(lr){4-5} \cmidrule(lr){6-7} \cmidrule(lr){8-9}
\textbf{Algorithm} & \textbf{Known A.} & \textbf{Test Phase} & \textbf{Known A.} & \textbf{Test Phase} & \textbf{Known A.} & \textbf{Test Phase} & \textbf{Known A.} & \textbf{Test Phase} \\
\midrule
FedAvg           & 79.81 ± 10.96 & 79.81 ± 10.96 & 65.10 ± 6.34 & 65.10 ± 6.34 & 59.41 ± 5.36 & 59.41 ± 5.36 & 54.25 ± 3.16 & 54.25 ± 3.16 \\
IFCA             & 95.96 ± 0.71 & 75.32 ± 12.02 & 83.89 ± 3.83 & 56.60 ± 13.64 & 83.64 ± 2.74 & 52.82 ± 7.76 & 82.38 ± 5.75 & 46.16 ± 3.56 \\
FedRC            & 60.63 ± 11.64 & 60.30 ± 11.95 & 42.56 ± 15.53 & 41.84 ± 15.96 & 43.89 ± 6.29 & 43.54 ± 7.47 & 42.13 ± 3.04 & 40.54 ± 3.96 \\
FedEM            & 73.77 ± 5.83 & 60.42 ± 10.92 & 59.59 ± 4.70 & 40.04 ± 16.06 & 51.61 ± 10.32 & 42.63 ± 8.09 & 55.88 ± 10.08 & 37.99 ± 6.75 \\
FeSEM            & 86.58 ± 5.31 & 74.19 ± 8.91 & 83.83 ± 1.89 & 60.10 ± 6.56 & 78.47 ± 6.07 & 54.62 ± 4.51 & 81.21 ± 1.36 & 52.37 ± 1.93 \\
FedDrift         & 97.33 ± 0.68 & 39.96 ± 8.82 & 97.33 ± 1.11 & 34.68 ± 11.11 & 95.90 ± 2.09 & 29.84 ± 4.57 & 96.87 ± 0.47 & 28.38 ± 2.51 \\
CFL              & 81.14 ± 8.47 & 81.14 ± 8.47 & 67.05 ± 5.84 & 67.05 ± 5.84 & 61.90 ± 6.90 & 61.90 ± 6.90 & 51.66 ± 3.06 & 51.66 ± 3.06 \\
pFedMe           & 97.76 ± 0.30 & N/A          & \textcolor{blue}{97.95 ± 0.43} & N/A          & \textbf{97.54 ± 0.21} & N/A          & \textbf{97.39 ± 0.42 }& N/A          \\
APFL             & \textbf{99.03 ± 0.17} & 81.75 ± 5.69 & 97.50 ± 0.48 & 66.61 ± 5.13 & 96.95 ± 0.58 & 59.71 ± 7.75 & 97.01 ± 0.23 & 51.26 ± 4.52 \\
ATP          & N/A      & 81.94 ± 10.74      & N/A    & 70.22 ± 6.05     & N/A    & 60.23 ± 3.90      & N/A      & 53.47 ± 3.08          \\
\midrule
\textsc{Flux}             & 92.24 ± 4.30 & \textcolor{blue}{87.89 ± 8.87} & 95.73 ± 1.31 & \textcolor{blue}{90.85 ± 3.48} & 92.98 ± 3.90 & \textcolor{blue}{87.13 ± 6.17} & 94.17 ± 4.64 & \textcolor{blue}{91.25 ± 8.77} \\
\textsc{Flux}-prior       & \textcolor{blue}{97.47 ± 0.49} & \textbf{97.47 ± 0.49} & \textbf{98.00 ± 0.39} & \textbf{98.00 ± 0.39} & \textcolor{blue}{97.47 ± 0.25} & \textbf{97.47 ± 0.25} & \textcolor{blue}{97.05 ± 0.54} & \textbf{97.05 ± 0.54 }\\
\bottomrule
\end{tabular}
}
\vspace{5pt}
\caption{
Performance comparison across mixture of $P(Y|X)$ and $P(Y)$ on the MNIST dataset.
\textbf{Known A.}: Known Association.
}
\label{tab:m13}
\end{table}

\subsection{Results on Real-world Datasets} \label{app:chex_result}
To evaluate the real-world applicability of \textsc{Flux}, we follow the setup described in Appendix~\ref{app:anda} and compare its performance against baseline methods on the CheXpert dataset across three levels of non-IID heterogeneity (low, medium, high), using 20 clients. Since CheXpert is a multi-label classification task, we use macro-averaged ROC AUC as the evaluation metric instead of accuracy. The ATP baseline is omitted, as its unsupervised optimization procedure is designed for multi-class classification and is not directly applicable to the multi-label setting.

As shown in Table~\ref{tab:main_chex}, \textsc{Flux} consistently and significantly outperforms baseline methods across all heterogeneity levels, in both the \textit{known association} and \textit{test phase} settings. In the known association setting, the best-performing baseline (APFL) degrades under increasing heterogeneity—dropping from 74.6 pp to 70.6 pp—whereas \textsc{Flux} maintains strong and stable accuracy, achieving up to 80.3 pp under low heterogeneity and remaining above 79.2 pp even under the most challenging configuration. In the test phase, where clients are previously unseen and unlabeled, APFL’s accuracy drops to 59.3 pp, while \textsc{Flux} stays above 76.7 pp. Notably, most metric-based and parameter-based CFL baselines collapse to a single global model during the training, demonstrating limited capacity to capture real-world distribution shifts. By contrast, \textsc{Flux} matches the performance of its oracle variant, \textsc{Flux}-prior, which has access to the true number of clusters~$M$, highlighting the discriminative strength of our learned descriptors and the effectiveness of the unsupervised clustering strategy. Finally, the difference between known association and test phase is negligible across most settings—except under high heterogeneity ($M=8$), where a slight gap appears—indicating that \textsc{Flux} can reliably assign previously unseen, unlabeled test-time clients to the appropriate clusters without supervision. These results highlight the practical viability of \textsc{Flux} in federated medical imaging tasks, where substantial distribution shifts between training and deployment are common.
\begin{table}[htbp]
\centering
\scriptsize
\renewcommand{\arraystretch}{1.2}
\setlength{\tabcolsep}{4pt}
\begin{tabular}{lcccccc}
\toprule
\textbf{non-IID Level} & \multicolumn{2}{c}{\textbf{Low}} & \multicolumn{2}{c}{\textbf{Medium}} & \multicolumn{2}{c}{\textbf{High}} \\
\cmidrule(lr){2-3} \cmidrule(lr){4-5} \cmidrule(lr){6-7}
\textbf{Algorithm} & \textbf{Known A.} & \textbf{Test Phase} & \textbf{Known A.} & \textbf{Test Phase} & \textbf{Known A.} & \textbf{Test Phase} \\
\midrule
FedAvg & 57.95 ± 2.83 & 57.95 ± 2.83 & 55.81 ± 0.69 & 55.81 ± 0.69 & 54.53 ± 0.78 & 54.53 ± 0.78 \\
IFCA           & 62.68 ± 1.05 & 62.68 ± 1.05 & 57.79 ± 0.48 & 57.79 ± 0.48 & 55.19 ± 0.05 & 55.19 ± 0.05 \\
FedRC          & 62.84 ± 1.06 & 62.84 ± 1.06 & 58.08 ± 0.48 & 56.78 ± 0.48 & 55.42 ± 0.06 & 55.31 ± 0.06 \\
FedEM          & 62.69 ± 1.05 & 62.69 ± 1.05 & 57.78 ± 0.49 & 57.78 ± 0.49 & 55.19 ± 0.06 & 55.19 ± 0.06 \\
FeSEM          & 62.54 ± 1.15 & 62.51 ± 1.16 & 58.04 ± 0.17 & 57.48 ± 0.45 & 56.45 ± 0.26 & 54.96 ± 0.06 \\
FedDrift       & 67.46 ± 1.08 & 67.46 ± 1.08 & 60.60 ± 1.28   & 60.60 ± 1.28   & 56.87 ± 0.13  & 56.87 ± 0.13 \\
CFL            & 62.67 ± 1.06 & 62.67 ± 1.06 & 57.79 ± 0.48 & 57.79 ± 0.48 & 55.20 ± 0.05 & 55.20 ± 0.05 \\
pFedMe         & 70.93 ± 0.62 & N/A          & 68.75 ± 0.05 & N/A          & 68.56 ± 0.09 & N/A          \\
APFL           & 74.62 ± 0.60 & 69.66 ± 0.70 & 71.64 ± 0.24 & 63.02 ± 0.77 & 70.59 ± 0.13 & 59.23 ± 0.23 \\
\midrule
\textsc{Flux} & \textbf{80.29 ± 0.74} & \textbf{80.29 ± 0.74} & \textcolor{blue}{78.65 ± 0.48} & \textcolor{blue}{78.65 ± 0.48} & \textbf{79.17 ± 0.17} & \textcolor{blue}{76.74 ± 2.63} \\
\textsc{Flux}-prior & \textcolor{blue}{80.22 ± 0.69} & \textcolor{blue}{80.22 ± 0.69} & \textbf{78.68 ± 0.37} & \textbf{78.68 ± 0.37} & \textcolor{blue}{79.16 ± 0.15} & \textbf{76.76 ± 2.65} \\
\bottomrule
\end{tabular}
\vspace{5pt}
\caption{
Performance across three non-IID levels on CheXpert. \textbf{Known A.}: Known Association.
}
\label{tab:main_chex}
\end{table}

\subsection{Robustness to Partial Participation}
\label{app:partial_participation}
We further evaluate \textsc{Flux} under partial client participation, reflecting realistic FL deployments where device availability, network conditions, or energy constraints prevent full participation in every round. Table~\ref{tab:partial_participation} reports average accuracies on MNIST across all distribution-shift types, using the same experimental protocol as the scalability experiments in Figure~\ref{fig:N_clients}. Results are shown for different client participation rates, ranging from full participation (1.0) to as low as 0.2.

These results show that \textsc{Flux} maintains strong performance even under significant dropout, consistently outperforming FedAvg. The most notable degradation occurs at a 0.2 participation rate, where only $\sim$4 clients participate in clustering. In highly heterogeneous settings (e.g., $M=8$ clusters), such limited participation can exclude entire distributions, naturally reducing performance. However, in large-scale FL, it is reasonable to assume that the number of participating clients per round exceeds the number of underlying clusters, ensuring sufficient distributional coverage. Importantly, unlike CFL/PFL baselines which generally assume full participation to isolate clustering effectiveness, \textsc{Flux} extends naturally to partial participation. This opens the door to combining \textsc{Flux} with existing client-selection strategies to further improve efficiency in resource-constrained settings.

\begin{table}[htbp]
\centering
\scriptsize
\renewcommand{\arraystretch}{1.2}
\setlength{\tabcolsep}{3pt}
\resizebox{0.53\textwidth}{!}{
\begin{tabular}{lcc}
\toprule
\textbf{Participation Rate} & \textbf{\textsc{Flux} Accuracy (\%)} & \textbf{FedAvg Accuracy (\%)} \\
\midrule
1.0 & 91.3 ± 0.4 & 76.7 ± 2.0 \\
0.8 & 91.2 ± 0.3 & 75.9 ± 1.4 \\
0.6 & 90.2 ± 0.8 & 75.0 ± 4.6 \\
0.4 & 87.9 ± 2.5 & 72.3 ± 1.0 \\
0.2 & 82.0 ± 2.2 & 70.8 ± 4.0 \\
\bottomrule
\end{tabular}
}
\vspace{5pt}
\caption{\textbf{Accuracy of \textsc{Flux} and FedAvg under different participation rates} on MNIST.}
\label{tab:partial_participation}
\end{table}

\subsection{Ablation Studies} \label{app:ablation}

\subsubsection{Effect of Client Data Size on Descriptor Quality}
We further studied the robustness of descriptor estimation under varying local data sizes, simulating realistic FL scenarios where clients may contribute only a few samples. Experiments were conducted on MNIST across four non-IID levels, with client data sizes scaled to 4\%, 8\%, 16\%, 32\%, and 100\% of the original. To isolate the effect of data size, other parameters were kept fixed. Table~\ref{tab:datasize_combined} summarizes results under the \emph{Known Association} and \emph{Test Phase} settings. As expected, overall accuracy decreases with smaller client datasets. Nevertheless, \textsc{Flux} maintains substantially higher accuracy than \emph{FedAvg} across all data sizes and heterogeneity levels. Even at only 4\% of the data, \textsc{Flux} descriptors remain sufficiently informative to capture client heterogeneity, highlighting their suitability in practical FL deployments where clients often hold very limited data.

\begin{table}[ht]
\centering
\setlength{\tabcolsep}{4pt}
\scriptsize
\resizebox{\textwidth}{!}{
\begin{tabular}{lcccccccc}
\toprule
\multirow{2}{*}{\textbf{\shortstack[l]{Method\textbackslash\\Data size}}}
& \multicolumn{4}{c}{\textbf{Known Association}} 
& \multicolumn{4}{c}{\textbf{Test Phase}} \\
\cmidrule(lr){2-5} \cmidrule(lr){6-9}
& \textbf{Low} & \textbf{Mid-low} & \textbf{Mid-high} & \textbf{High} 
& \textbf{Low} & \textbf{Mid-low} & \textbf{Mid-high} & \textbf{High} \\
\midrule
\textsc{Flux} 4\%   & $82.3\pm3.6$ & $80.1\pm4.3$ & $72.7\pm7.3$ & $65.4\pm9.5$ & $81.7\pm2.1$ & $76.9\pm2.3$ & $68.5\pm5.7$ & $60.8\pm7.2$ \\
\textsc{Flux} 8\%   & $83.7\pm2.3$ & $80.5\pm3.9$ & $70.3\pm8.3$ & $66.9\pm9.7$ & $81.7\pm1.9$ & $77.7\pm2.1$ & $66.4\pm7.1$ & $62.2\pm8.6$ \\
\textsc{Flux} 16\%  & $87.3\pm1.8$ & $84.6\pm2.4$ & $77.8\pm6.0$ & $70.0\pm5.7$ & $84.0\pm1.7$ & $80.9\pm1.5$ & $73.3\pm5.2$ & $66.0\pm5.8$ \\
\textsc{Flux} 32\%  & $88.1\pm1.7$ & $85.0\pm1.7$ & $79.7\pm4.8$ & $66.8\pm7.3$ & $85.2\pm1.9$ & $81.5\pm1.3$ & $75.2\pm4.7$ & $62.6\pm6.4$ \\
\textsc{Flux} 100\% & $96.4\pm2.8$ & $93.4\pm1.4$ & $94.9\pm0.9$ & $87.1\pm4.2$ & $96.2\pm0.4$ & $90.0\pm2.8$ & $86.4\pm1.9$ & $79.4\pm3.0$ \\
FedAvg 4\%          & $70.4\pm2.8$ & $59.8\pm4.0$ & $47.6\pm5.9$ & $37.6\pm7.5$ & $70.4\pm2.8$ & $59.8\pm4.0$ & $47.6\pm5.9$ & $37.6\pm7.5$ \\
FedAvg 8\%          & $71.8\pm2.4$ & $60.5\pm3.1$ & $50.2\pm4.7$ & $42.8\pm5.7$ & $71.8\pm2.4$ & $60.5\pm3.1$ & $50.2\pm4.7$ & $42.8\pm5.7$ \\
FedAvg 16\%         & $72.3\pm3.6$ & $63.8\pm4.1$ & $55.6\pm5.2$ & $46.9\pm7.0$ & $72.3\pm3.6$ & $63.8\pm4.1$ & $55.6\pm5.2$ & $46.9\pm7.0$ \\
FedAvg 32\%         & $72.1\pm2.7$ & $64.8\pm3.3$ & $62.0\pm3.3$ & $54.4\pm2.9$ & $72.1\pm2.7$ & $64.8\pm3.3$ & $62.0\pm3.3$ & $54.4\pm2.9$ \\
FedAvg 100\%        & $93.1\pm0.4$ & $86.0\pm1.2$ & $79.4\pm2.0$ & $72.0\pm1.9$ & $93.1\pm0.4$ & $86.0\pm1.2$ & $79.4\pm2.0$ & $72.0\pm1.9$ \\
\bottomrule
\end{tabular}
}
\vspace{5pt}
\caption{\textbf{Descriptor quality vs. client data size.} Results on MNIST, averaged across four shift types and four non-IID levels, comparing performance under \emph{Known Association} and \emph{Test Phase}.}
\label{tab:datasize_combined}
\end{table}

To further test robustness under extreme sparsity, we designed a setting where clients hold only $\sim$1\% of the total data on average (Table \ref{tab:lowdata_combined}). In this case, many clients contribute fewer than 1\% of samples, leading to very small local datasets. To enable convergence, we increased the number of clients to 20 and extended training to 200 rounds. Despite severe data scarcity, \textsc{Flux} descriptors remain informative and significantly outperform \emph{FedAvg}. This robustness arises from the design of our descriptors, which rely on low-order statistics (means and covariances) after dimensionality reduction, making them less sensitive to sparsity. While our current implementation does not use additional regularization, future work could further stabilize descriptor estimation using Bayesian shrinkage priors.



\begin{table}[ht]
\centering
\setlength{\tabcolsep}{4pt}
\scriptsize
\begin{tabular}{lcccccccc}
\toprule
\multirow{2}{*}{\textbf{Method}} 
& \multicolumn{4}{c}{\textbf{Known Association}} 
& \multicolumn{4}{c}{\textbf{Test Phase}} \\
\cmidrule(lr){2-5} \cmidrule(lr){6-9}
& $\mathbf{P(X)}$ & $\mathbf{P(Y)}$ & $\mathbf{P(Y|X)}$ & $\mathbf{P(X|Y)}$ 
& $\mathbf{P(X)}$ & $\mathbf{P(Y)}$ & $\mathbf{P(Y|X)}$ & $\mathbf{P(X|Y)}$ \\
\midrule
FedAvg        & $36.3\pm7.7$ & $84.8\pm1.4$ & $46.1\pm5.2$ & $31.6\pm11.5$ 
              & $36.3\pm7.7$ & $84.8\pm1.4$ & N/A & $31.6\pm11.5$ \\
\textsc{Flux} & $74.3\pm0.5$ & $97.5\pm0.5$ & $49.2\pm6.9$ & $45.5\pm6.5$ 
              & $78.2\pm0.1$ & $95.6\pm0.2$ & N/A & $42.0\pm9.3$ \\
\bottomrule
\end{tabular}
\vspace{5pt}
\caption{\textbf{Descriptor robustness in extreme low-data regime ($\sim$1\%).} Results under \emph{Known Association} and \emph{Test Phase}.}
\label{tab:lowdata_combined}
\end{table}

\subsubsection{Effect of Clustering Algorithm Choice} \label{app:clustering_choice}
The primary contribution of \textsc{Flux} lies in the design of informative, privacy-preserving descriptors that approximate the Wasserstein distance between client distributions. These descriptors enable meaningful unsupervised clustering while preserving privacy, unlike alternatives that rely on direct metrics or gradient similarity. Consequently, the framework is agnostic to the specific clustering algorithm, requiring only that clustering operates solely on descriptors without relying on prior knowledge such as the number of clusters (see Equation~\ref{eq:compact_representation}). 

To validate this agnosticism, we conducted ablation studies comparing different clustering strategies. Table~\ref{tab:ablation_clustering_mnist} reports results on MNIST, averaged across shift types, heterogeneity levels, and seeds. We compared Unsupervised $k$-Means, HDBSCAN, Agglomerative Clustering, and our density-based method. Except for unsupervised $k$-Means, all algorithms achieve strong performance, confirming that \textsc{Flux} descriptors enable robust client grouping regardless of the specific clustering algorithm. Importantly, all methods converge to comparable accuracy in the test phase.

\begin{table}[ht]
    \centering
    \small
    \resizebox{0.6\textwidth}{!}{
    \begin{tabular}{lcc}
        \toprule
        \textbf{Method} & \textbf{Known Association (\%)} & \textbf{Test Phase (\%)} \\
        \midrule
        Unsupervised $k$-Means & 86.8 ± 14.6 & 93.5 ± 3.8 \\
        HDBSCAN & 90.5 ± 7.7 & 92.9 ± 2.6 \\
        Agglomerative & 93.9 ± 3.2 & 94.2 ± 0.8 \\
        \textsc{Flux} (ours) & 92.5 ± 5.2 & 94.0 ± 2.8 \\
        \bottomrule
    \end{tabular}
    }
    \vspace{5pt}
    \caption{\textbf{Comparison of clustering algorithms on MNIST.} Except for unsupervised $k$-Means, all methods achieve strong accuracy, confirming that \textsc{Flux} is not tied to a specific clustering algorithm.}
    \label{tab:ablation_clustering_mnist}
\end{table}

To further assess generality, we compared the two best-performing methods (Agglomerative and ours) on CIFAR-100. Results in Table~\ref{tab:ablation_clustering_cifar} show similar conclusions: both methods converge to comparable accuracy, with differences within variance margins. This confirms that the robustness of \textsc{Flux} stems from its descriptors rather than the specific clustering algorithm employed.

\begin{table}[ht]
    \centering
    \small
    \resizebox{0.53\textwidth}{!}{
    \begin{tabular}{lcc}
        \toprule
        \textbf{Method} & \textbf{Known Association (\%)} & \textbf{Test Phase (\%)} \\
        \midrule
        Agglomerative & 42.0 ± 17.8 & 36.0 ± 5.4 \\
        \textsc{Flux} (ours) & 41.7 ± 11.0 & 41.3 ± 7.8 \\
        \bottomrule
    \end{tabular}
    }
    \vspace{5pt}
    \caption{\textbf{Comparison of clustering algorithms on CIFAR-100.} Both methods achieve similar performance, demonstrating that \textsc{Flux} remains robust across clustering strategies.}
    \label{tab:ablation_clustering_cifar}
\end{table}

\subsubsection{Effect of Test-Time Cluster Assignment Strategy}
To evaluate the importance of our test-time assignment mechanism, we performed an ablation study where we replaced \textsc{Flux}'s descriptor-based assignment with a random cluster assignment at inference. Following the same evaluation protocol as in Table~\ref{tab:dataset_comparison}, we compared both strategies across all heterogeneity levels and distribution shift types on MNIST, FMNIST, CIFAR-10, and CIFAR-100.

As summarized in Table~\ref{tab:ablation_random}, \textsc{Flux} consistently outperforms random assignment by a large margin—up to 53 pp on MNIST and 25 pp on CIFAR-100. These results highlight the crucial role of our label-agnostic descriptors in reliably matching test clients to the most suitable cluster-specific models, even under severe non-IID conditions. Without this mechanism, performance degrades sharply, particularly in feature- and label-shift scenarios where random assignment fails to capture distributional similarity.

\begin{table}[ht]
    \centering
    \small
    \resizebox{0.62\textwidth}{!}{
    \begin{tabular}{lcccc}
        \toprule
        \textbf{Dataset} & \textbf{Test-Assignment} & $P(X)$ & $P(Y)$ & $P(X|Y)$ \\
        \midrule
        MNIST & \textsc{Flux} & 95.0 ± 1.2 & 96.1 ± 1.5 & 90.8 ± 3.9 \\
              & Random & 41.9 ± 11.4 & 69.2 ± 17.2 & 73.2 ± 13.7 \\
        FMNIST & \textsc{Flux} & 77.0 ± 2.9 & 85.7 ± 6.6 & 81.0 ± 1.4 \\
               & Random & 34.0 ± 10.5 & 61.1 ± 13.5 & 63.1 ± 11.1 \\
        CIFAR-10 & \textsc{Flux} & 33.3 ± 2.5 & 46.2 ± 7.8 & 36.7 ± 1.3 \\
                 & Random & 20.4 ± 5.1 & 31.8 ± 5.4 & 32.3 ± 2.7 \\
        CIFAR-100 & \textsc{Flux} & 33.8 ± 5.6 & 40.8 ± 3.8 & 49.3 ± 4.5 \\
                  & Random & 17.0 ± 7.8 & 15.9 ± 3.9 & 34.1 ± 0.1 \\
        \bottomrule
    \end{tabular}
    }
    \vspace{5pt}
    \caption{\textbf{Ablation study on test-time cluster assignment.} Accuracy under different distribution shifts when using \textsc{Flux}'s descriptor-based assignment versus random assignment. Results averaged across all heterogeneity levels.}
    \label{tab:ablation_random}
\end{table}

\subsubsection{Effect of Descriptor Configurations and Length}
We performed an ablation study to evaluate the impact of different descriptor configurations in \textsc{Flux}. Specifically, we tested various combinations of descriptors, including those approximating the marginal distribution $P(X)$ (i.e., \((\mu_x^{(k)}, \Sigma_x^{(k)})\)) and label-wise descriptors capturing the conditional distribution $P(Y|X)$ (i.e., \(\{(\mu_u^{(k)}, \Sigma_u^{(k)})\}_{u=1}^U\)). These experiments were conducted on the MNIST dataset under four distinct distribution shifts, with non-IID levels fixed as described in Appendix \ref{app:scaling_nclients_results}. Furthermore, we analyzed the effect of the descriptor length $l$, varying it by scaling the output dimensionality of the reduction transformation (Equation \ref{eq:compact_representation}) by factors of 1, 2, and 5.

Table \ref{tab:ablation} summarizes the average accuracy and standard deviation across all distribution shifts. The results in the \emph{known association} setting underscore the clear utility of incorporating label-wise descriptors for $P(Y|X)$, which enables \textsc{Flux} to accurately identify clusters of clients with similar distributions under label-conditional shifts. While the inclusion of standard deviation descriptors ($\sigma$) for the latent distribution also leads to improved accuracy, the performance gains are more modest. Interestingly, increasing the descriptor length $l$ (i.e., scaling it by 2 or 5) did not yield improvements, likely due to the reduced compactness of the client distribution representation, which complicates the clustering process. In the \emph{test phase}, a flatter performance trend is observed. This behavior is primarily attributed to two factors. First, the results for label-conditional shifts are not averaged during the test phase, as test-time association without access to label data is inherently infeasible. Second, label-wise descriptors cannot be used during testing as they rely on label information for computation. Consequently, \textsc{Flux} relies solely on descriptors of $P(X)$ in this phase, limiting the variability between different configurations.
\begin{table}[ht]
    \centering
    \small
    \resizebox{0.75\textwidth}{!}{
    \begin{tabular}{ccccccccc}
        \toprule
        \multicolumn{6}{c}{\textbf{Descriptors}} & \multicolumn{2}{c}{\textbf{Experiments}} \\
        \cmidrule(lr){1-6} \cmidrule(lr){7-8}
        $\mu_x^{(k)}$ & $\Sigma_x^{(k)}$ & $\{\mu_{u}^{(k)}\}$ & $\{\Sigma_{u}^{(k)}\}$ & Length & & \textbf{Known Association} & \textbf{Test Phase} \\
        \midrule
        $\checkmark$ & & & & $1l$ & & 90.96 ± 2.09 & \textcolor{blue}{95.87 ± 1.50} \\
        $\checkmark$ & & & & $2l$ & & 91.75 ± 2.18 & 95.35 ± 1.62 \\
        $\checkmark$ & & & & $5l$ & & 91.03 ± 1.65 & 95.63 ± 0.78 \\
        $\checkmark$ & $\checkmark$ & & & $1l$ & & 92.62 ± 1.19 & 95.21 ± 1.39 \\
        $\checkmark$ & $\checkmark$ & & & $2l$ & & 90.32 ± 2.75 & 95.51 ± 1.43 \\
        $\checkmark$ & $\checkmark$ & & & $5l$ & & 89.83 ± 2.55 & 94.72 ± 1.33 \\
        $\checkmark$ & & $\checkmark$ & & $1l$ & & \textcolor{blue}{93.07 ± 0.75} & 95.55 ± 2.09 \\
        $\checkmark$ & & $\checkmark$ & & $2l$ & & 91.58 ± 2.55 & 95.02 ± 1.14 \\
        $\checkmark$ & & $\checkmark$ & & $5l$ & & 90.25 ± 2.48 & 94.92 ± 1.26 \\
        $\checkmark$ & $\checkmark$ & $\checkmark$ & $\checkmark$ & $1l$ & $^*$ & \textbf{93.86 ± 1.35} & 95.64 ± 1.63 \\
        $\checkmark$ & $\checkmark$ & $\checkmark$ & $\checkmark$ & $2l$ & & 91.82 ± 1.34 & \textbf{96.06 ± 1.11} \\
        $\checkmark$ & $\checkmark$ & $\checkmark$ & $\checkmark$ & $5l$ & & 90.76 ± 3.57 & 94.95 ± 2.13 \\
        \bottomrule
    \end{tabular}
    }
    \vspace{5pt}
    \caption{\textbf{Results of the ablation study evaluating the impact of different descriptor configurations in \textsc{Flux} under the \emph{Known Association} and \emph{Test Phase} settings}. The configuration marked with $^*$ indicates the setting used in this paper. The table shows average accuracy and standard deviation across four distribution shifts, highlighting the contribution of various descriptors approximating $P(X)$ and $P(Y|X)$ and analyzing the effect of descriptor length ($l$) on performance.}
    \label{tab:ablation}
    \vspace{-10pt}
\end{table}

\end{document}